\documentclass{article}

\PassOptionsToPackage{numbers, square}{natbib}  

\usepackage[preprint]{neurips_2026}

\usepackage[utf8]{inputenc} 
\usepackage[T1]{fontenc}    

\usepackage[
  backref=page,         
  colorlinks,
  linkcolor=blue,
  citecolor=blue,
  urlcolor=blue
]{hyperref}
\usepackage{url}
\usepackage{booktabs}       
\usepackage{amsfonts}       
\usepackage{nicefrac}       
\usepackage{microtype}      
\usepackage{xcolor}         

\usepackage{graphicx}       
\usepackage{multirow}
\usepackage{makecell}
\usepackage{amsmath}
\usepackage{amssymb}
\usepackage{mathtools}
\usepackage{amsthm}
\usepackage{algorithm}
\usepackage{algpseudocode}
\usepackage[capitalise]{cleveref}    
\usepackage{wrapfig}            

\newtheorem{theorem}{Theorem}[section]

\newtheorem{lemma}[theorem]{Lemma}
\newtheorem{assumption}[theorem]{Assumption}

\DeclareMathOperator{\logit}{logit}

\definecolor{estblue}{RGB}{70,120,180}
\definecolor{corgreen}{RGB}{90,150,70}
\definecolor{intorange}{RGB}{180,120,80}
\definecolor{midgray}{RGB}{110,110,110}
\definecolor{clusred}{RGB}{190,80,90}

\title{Uncertainty-Aware Probabilistic Constrained Clustering from Entangled Pairwise Supervision}

\author{
 \hspace*{-6mm} Shaojie Zhang ~~~~~~~~ Ke Chen\\
  Department of Computer Science,
  The University of Manchester, Manchester M13 9PL, U.K. \\
  \texttt{\{shaojie.zhang,ke.chen\}@manchester.ac.uk} \\
}

\begin{document}

\maketitle

\begin{abstract}
Pairwise constrained clustering typically relies on hard must-link/cannot-link labels, whereas realistic pairwise supervision may be real-valued and entangle intrinsic ambiguity, expert judgment, and stochastic corruption.
Existing deep constrained clustering (DCC) methods mainly target hard, expert-agnostic constraints, treating soft labels mostly numerically rather than semantically.
We formalize this setting as uncertainty-aware probabilistic constrained clustering (UPCC), defining a canonical aleatoric target through a heterogeneous observation process and analyzing its conditional identifiability.
We introduce ProbPair, an angular pairwise objective for probabilistic relations, and build ECI-PP, an estimator--corrector--integrator framework that refines imperfect supervision via belief estimation, correction, and reliability-aware integration.
Across challenging probabilistic supervision settings, experiments on diverse benchmarks show that ECI-PP outperforms state-of-the-art DCC methods and remains robust with a shared default configuration.
\end{abstract}


\section{Introduction}
\label{sec:intro}

Clustering is a fundamental tool for uncovering structure in data, yet its unsupervised nature often yields partitions that may not align with domain knowledge \cite{SDCD,DCGMM,SpherePair}; pairwise constrained clustering (CC) addresses this by using weak instance-level pairwise supervision \cite{CCsurvey2007,CCsurvey2023}. 
Traditional CC methods \cite{CC1st,copkmeans,GMMCC,MPCKMeans,PCKMeans,KernelCC}, however, often struggle with high-dimensional and complex data, motivating deep constrained clustering (DCC). 
Existing DCC methods can be broadly organized into two paradigms: end-to-end DCC \cite{PCC,SDEC,CIDEC1,VolMaxDCC}, which reformulates clustering as a pseudo-classification problem with anchors, and deep constraint embedding \cite{CPAC,AutoEmbedder}, which instead learns clustering-friendly representations from pairwise supervision and, in more recent advances, further moves pairwise learning from Euclidean to angular space to better reconcile positive and negative relations \cite{SpherePair}.

Despite these advances, the supervision setting remains largely idealized.
Existing work typically assumes hard binary pairwise relations, whereas supervision in practice often lies on a continuum.
Moreover, such real-valued pairwise labels may simultaneously reflect intrinsic \textit{aleatoric} ambiguity, expert-conditioned and pair-dependent \textit{epistemic} judgment \cite{hullermeier2021aleatoric}, and additional \textit{stochastic} corruption from annotation or recording pipelines.
For example, when comparing two clinical cases, a recorded observation may reflect ambiguity in the cases themselves, systematic clinician judgment bias, and documentation or data-entry errors.
These considerations motivate what we term \textit{uncertainty-aware probabilistic constrained clustering} (UPCC): a more specific and realistic setting in which probabilistic pairwise relations arise from a heterogeneous observation process.

In this paper, we formalize the UPCC setting and develop a corresponding solution.
We specify an observation model that distinguishes the aleatoric target from expert-conditioned judgment and stochastic corruption, and analyze when this canonical target is identifiable on observable pairs.
To learn under UPCC, we first introduce \textit{ProbPair}, a deep constraint embedding approach linking probabilistic pairwise relations to angular representation learning.
Built on it, we further develop a \textit{Estimator}--\textit{Corrector}--\textit{Integrator} \textit{ProbPair} (ECI-PP) framework, in which Estimators provide model beliefs, Correctors perform structured correction and suppress residual discordance, and the Integrator learns from the resulting surrogate supervision toward the underlying aleatoric relation.

Our main contributions are:  
(i) We formalize UPCC through a heterogeneous observation model and analyze identifiability of the canonical aleatoric target on observable pairs.
(ii) We introduce \textit{ProbPair}, a deep constraint embedding approach linking probabilistic pairwise relations to angular embeddings.  
(iii) We develop \textit{ECI-PP}, a practical framework for aleatoric relation learning under entangled supervision.
(iv) We empirically show across diverse benchmarks that ECI-PP outperforms state-of-the-art DCC methods and is robust under fallible, heterogeneous, and corrupted supervision.

\section{Related work}
\label{sec:related_work}

\paragraph{Beyond binary pairwise constraints.} 
Existing CC methods either extend binary constraints only numerically, without semantically modeling intermediate supervision, or assign intermediate values to the confidence of a pre-defined relation type rather than a unified probability.
Advanced deep methods typically employ logistic losses \cite{MCL,SpherePair} compatible with real-valued constraints in $[0,1]$. However, \textit{end-to-end DCC} methods \cite{PCC,CIDEC1,S3C2,scDCC,ConstraintMatch,VolMaxDCC} remain tied to hard anchor semantics, with attendant anchor misalignment \cite{MV-Effi} and error propagation \cite{MV-Tens,MV-Rob}, while \textit{deep constraint embedding} methods \cite{SpherePair,AutoEmbedder,CPAC} avoid anchors but still require a binary regime for their theoretical geometric guarantees. Likewise, the \textit{deep generative} approach \cite{DCGMM} encodes constraints as a signed prior over discrete assignments, precluding a unified treatment of real-valued relations.
Earlier non-deep approaches considered intermediate supervision more explicitly: (i) \textit{probabilistic} treatments \cite{law2005PCC,law2004PCC} encoded the activation probability of a positive same-cluster tie, but offered neither negative relations nor a unified notion of pairwise affinity; and (ii) \textit{fuzzy-constraint} formulations attach an intermediate belief \cite{FuzzyCC1}, degree \cite{FuzzyCC2}, or penalty \cite{FuzzyCC3,FuzzyCC4,FuzzyCC5,FuzzyCC6} only after a discrete relation type has been specified.
Taken together, these lines of work do not address uncertain relational judgment expressed by probabilistic labels under a unified semantics.
By contrast, our \textit{ProbPair} formulation, built on deep constraint embedding, directly models such unified probabilistic supervision, accommodating both vague annotations in practice and inherently ambiguous aleatoric relations.

\paragraph{Imperfect pairwise supervision.}
While imperfect annotations in CC have long been recognized \cite{Pelleg2007,covoes2013study,Zhu2015,liu2017partition,MCVC,CIDEC2,SDCD,DCGMM,VolMaxDCC}, more realistic observation settings remain underexplored.
Existing studies treat simplified noise through constraint-set robustness \cite{Pelleg2007,Zhu2015}, coarse prior uncertainty over assignments \cite{DCGMM}, or expert-level reliability in semi-crowdsourced settings \cite{MCVC,SDCD}.
VolMaxDCC \cite{VolMaxDCC} further introduces confusion-based structured annotation modeling beyond unstructured label degradation or flipping. 
However, it still lacks heterogeneous multi-expert modeling and remains restricted to discrete binary-noise settings, leaving expert-specific, pair-dependent epistemic distortion unmodeled.
By contrast, we consider genuinely entangled observations, arising from intrinsic \emph{aleatoric} relations, expert-conditioned \emph{epistemic} judgments, and \emph{stochastic} corruption.
Such a setting is reminiscent of \textit{learning from crowds}, 
where expert behavior is commonly modeled under a fixed class-label scheme through expert-specific confusion or output layers \cite{DawidSkene1979,Rodrigues2018}, or through expertise coupled with instance difficulty \cite{Whitehill2009,Welinder2010}.
Yet these strategies rely on a restricted class-label channel and are therefore not directly transferable to deep constraint embedding, 
where a highly flexible pairwise learner can readily absorb structured distortion and random corruption into its learned geometry.
We therefore develop \textit{ECI-PP}, a multi-role iterative framework built on \textit{ProbPair} that uses model-side beliefs to correct structured distortion and filter residual inconsistency, yielding refined surrogate supervision that progressively guides learning toward the underlying aleatoric relation.

\section{UPCC formulation and canonical target}
\label{sec:problem_formulation}


We study uncertainty-aware probabilistic constrained clustering (UPCC), where expert-indexed real-valued pairwise labels arise from a heterogeneous observation process.
Let $\mathcal{X}=\{x_j\}_{j=1}^{|\mathcal{X}|}$ be a dataset to be partitioned into $C$ clusters, and let $\mathcal{C}=\{(a_i,b_i,e_i,y_i)\}_{i=1}^{|\mathcal{C}|}$ denote the observed soft constraints, where $(a_i,b_i)$ indexes the pair $(x_{a_i},x_{b_i})$, $e_i\in\{1,\dots,E\}$ records the expert identity, and $y_i\in[0,1]$ is the observed pairwise relation.
The formulation targets a canonical pairwise probabilistic relation, while a final hard partition is treated as a downstream summary induced by that relation.

\subsection{Observation model}
\label{sec:observation}

We view each observed pairwise relation as arising from three factors: (i) an underlying \textit{aleatoric relation} in the data (e.g., intrinsic clinical similarities between a common cold and the flu), (ii) an expert-conditioned \textit{epistemic judgment} built on that relation (e.g., a novice doctor systematically overrating their similarity due to limited knowledge), and (iii) a subsequent \textit{stochastic corruption} stage accounting for accidental recording noise (e.g., random clerical errors in medical records).

\paragraph{Aleatoric relation.}
Let $\mathcal{S}$ be the set of all $C$-partitions of $\mathcal{X}$, and let $S\in\mathcal{S}$ be a latent random partition. For any index pair $(a,b)$, define the aleatoric co-membership relation
\[
R^{\star}_{ab}
:=
\Pr\!\big(\text{$x_a$ and $x_b$ belong to the same cluster under } S \mid \mathcal{X}\big)\in[0,1].
\]
This quantity serves as the canonical target of the formulation and remains well defined even when the latent partition is intrinsically ambiguous.

\paragraph{Epistemic judgment.}
Let $\sigma(\cdot)$ denote the sigmoid function, and define its clamped inverse
$\logit_{\varepsilon}(u):=\logit\!\big(\min\{\max\{u,\varepsilon\},1-\varepsilon\}\big)$ with $\varepsilon \ll 1$.
We also introduce a deterministic pair descriptor
$\phi_{ab}=\phi(x_a,x_b)$, which serves as the covariate through which expert-dependent effects are parameterized.
For expert $e$, let $m_e(\phi_{ab})\in\mathbb{R}$ denote a structured mean effect, and let
\[
u_{e,ab}\mid e,\phi_{ab}\sim\mathcal{N}\!\big(0,\tau_e^2(\phi_{ab})\big),
\qquad \tau_e(\phi_{ab})>0,
\]
be a centered residual perturbation. The latent expert judgment is generated as
\begin{equation}
\label{eq:latent_judgment_v4}
y^{\mathrm{jud}}_{e,ab}
:=
\sigma\!\big(\logit_{\varepsilon}(R^{\star}_{ab})+m_e(\phi_{ab})+u_{e,ab}\big)\in(0,1).
\end{equation}
Under this construction, the judgment channel has the logistic-normal density
\begin{equation}
\label{eq:judgment_density_v4}
p_{\mathrm{jud}}\big(y\mid R^{\star}_{ab},e,\phi_{ab}\big)
=
\frac{1}{y(1-y)}\,
\varphi\!\Big(\logit(y);\,\logit_{\varepsilon}(R^{\star}_{ab})+m_e(\phi_{ab}),\,\tau_e^2(\phi_{ab})\Big)
\end{equation}
for $y\in(0,1)$, where $\varphi(\cdot;\mu,s^2)$ denotes the Gaussian density; see Appendix~\ref{app:derivation_jud_density_v4} for the derivation.
The same judgment channel also gives rise to the corresponding mean distortion in probability space,
\begin{equation}
\label{eq:mean_distortion_v4}
\bar\epsilon_{e,ab}(R^{\star}_{ab},\phi_{ab})
:=
\mathbb{E}\!\left[y^{\mathrm{jud}}_{e,ab}-R^{\star}_{ab}\mid R^{\star}_{ab},e,\phi_{ab}\right].
\end{equation}

\paragraph{Stochastic corruption.}
To capture unstructured recording noise beyond the latent judgment, we introduce a corruption indicator $c_{e,ab}\sim \mathrm{Bernoulli}(\pi_e)$, where $\pi_e\in[0,1)$ is the corruption probability for expert $e$, and define the final observed relation by
\[
y_{e,ab}\mid c_{e,ab}=0\sim p_{\mathrm{jud}}(\cdot\mid R_{ab}^{\star},e,\phi_{ab}),
\qquad
y_{e,ab}\mid c_{e,ab}=1\sim \mathrm{Uniform}(0,1).
\]
Equivalently, letting $\mathbf{1}_{[0,1]}(\cdot)$ denote the uniform density on $[0,1]$, the conditional observation law is
\begin{equation}
\label{eq:observation_law_v4}
p\big(y\mid R_{ab}^{\star},e,\phi_{ab}\big)
=
(1-\pi_e)\,p_{\mathrm{jud}}\big(y\mid R_{ab}^{\star},e,\phi_{ab}\big)
+
\pi_e\,\mathbf{1}_{[0,1]}(y).
\end{equation}

\subsection{Canonical target and identifiability}
\label{sec:identifiability}
We now formalize the canonical target within a structural setup for identifiability.
To obtain a finite-dimensional canonical estimand, we parameterize
\[
R^{\star}_{ab}=r_{\theta}(x_a,x_b),
\qquad
m_e(\phi_{ab})=m_{\eta_e}(\phi_{ab}),
\qquad
\tau_e(\phi_{ab})=\tau_{\zeta_e}(\phi_{ab}),
\]
and collect the parameters into
$\Xi=(\theta,\{\eta_e\}_{e=1}^{E},\{\zeta_e\}_{e=1}^{E},\{\pi_e\}_{e=1}^{E})$.
This parameterization makes the canonical target explicit under the specific observation family.
Since $\phi_{ab}$ is introduced only as a deterministic pair descriptor without additional representational constraints, the identification of the canonical relation term $r_{\theta}(x_a,x_b)$ is governed by the structural conditions introduced below.

We first fix the trivial additive ambiguity in the location decomposition:
\begin{assumption}[Centering on observable support]
\label{ass:centering_v3}
    For each expert $e$ with nonzero observation probability, $\mathbb{E}_{(a,b)\mid e}[m_{\eta_e}(\phi_{ab})]=0$, where the expectation is taken over the observable-pair distribution conditional on expert $e$.
\end{assumption}

This centering removes any remaining expert-specific constant shift on the observable support:
\begin{lemma}[Gauge fixing]
\label{lem:gauge_fix_v3}
    Suppose both $\{m_{\eta_e}\}_{e=1}^{E}$ and $\{m_{\eta'_e}\}_{e=1}^{E}$ satisfy Assumption~\ref{ass:centering_v3}. If for each expert $e$ with nonzero observation probability there exists a constant $k_e\in\mathbb{R}$ such that
    $m_{\eta_e}(\phi_{ab})=m_{\eta'_e}(\phi_{ab})-k_e$
    for all observable triples $(a,b,e)$, then $k_e=0$ for every such expert $e$.
\end{lemma}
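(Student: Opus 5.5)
The plan is to take expectations of the assumed pointwise identity with respect to the observable-pair distribution for each expert, and then use Assumption~\ref{ass:centering_v3} for both parameterizations to isolate the constant $k_e$.

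Fix an expert $e$ with nonzero observation probability. By hypothesis, for every observable triple $(a,b,e)$ we have $m_{\eta_e}(\phi_{ab})=m_{\eta'_e}(\phi_{ab})-k_e$.

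The first step is to note that this identity holds on the whole support of the conditional observable-pair distribution $(a,b)\mid e$. That distribution is well defined precisely because $e$ has nonzero observation probability, so we may integrate against it. Integrating both sides and using linearity of expectation gives
\[
\mathbb{E}_{(a,b)\mid e}\big[m_{\eta_e}(\phi_{ab})\big]
=
\mathbb{E}_{(a,b)\mid e}\big[m_{\eta'_e}(\phi_{ab})\big]-k_e .
\]

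The second step is to apply Assumption~\ref{ass:centering_v3} to both families. Each expectation above is then zero, which leaves $0=0-k_e$, and hence $k_e=0$. Since $e$ was an arbitrary expert with nonzero observation probability, this proves the claim.

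The only point needing care is integrability. Assumption~\ref{ass:centering_v3} implicitly requires $m_{\eta_e}(\phi_{ab})$ and $m_{\eta'_e}(\phi_{ab})$ to have finite expectations under $(a,b)\mid e$, and I will state this explicitly. With both expectations finite, linearity applies and subtracting the constant $k_e$ is legitimate. Also, "for all observable triples" need only mean almost surely under the conditional law; a null-set exception does not change the expectations. There is no real obstacle beyond this bookkeeping.
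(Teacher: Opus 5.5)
Your proposal is correct and matches the paper's proof: fix an expert $e$ with nonzero observation probability, take $\mathbb{E}_{(a,b)\mid e}$ of the pointwise identity, and apply Assumption~\ref{ass:centering_v3} to both families to get $k_e=0$. Your remarks on integrability and on almost-sure equality are extra bookkeeping that the paper leaves implicit.
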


The proof is given in Appendix~\ref{app:proof_gauge_fix_v4}. 
The logistic-normal clean channel in \cref{eq:judgment_density_v4}, combined with the observation law in \cref{eq:observation_law_v4}, yields the following injectivity property:

\begin{lemma}[Injectivity of the observation family]
\label{lem:mix_injective_v3}
    Let $\mu:=\logit_{\varepsilon}(R^\star_{ab})+m_e(\phi_{ab})$, $s:=\tau_e(\phi_{ab})$, and $\pi:=\pi_e$. For the observation law rewritten as
    \begin{equation}
    \label{eq:mix_family_v4}
    p(y\mid \mu,s,\pi)
    :=
    (1-\pi)\,\frac{1}{y(1-y)}\,\varphi\!\big(\logit(y);\mu,s^2\big)
    +\pi\,\mathbf{1}_{[0,1]}(y),
    \qquad y\in(0,1),
    \end{equation}
    the mapping $(\mu,s,\pi)\mapsto p(\cdot\mid \mu,s,\pi)$ is injective on $\mathbb{R}\times(0,\infty)\times[0,1)$.
\end{lemma}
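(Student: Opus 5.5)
Suppose $p(\cdot\mid\mu,s,\pi)=p(\cdot\mid\mu',s',\pi')$ as densities on $(0,1)$; we want $(\mu,s,\pi)=(\mu',s',\pi')$. We will work in logit coordinates. Substituting $t=\logit(y)$, so that $dy=y(1-y)\,dt$, turns the law of \cref{eq:mix_family_v4} into a density on $\mathbb{R}$:
\[
q(t\mid\mu,s,\pi)=(1-\pi)\,\varphi(t;\mu,s^2)+\pi\,\sigma(t)\bigl(1-\sigma(t)\bigr).
\]
The uniform component becomes the standard logistic density $g(t):=\sigma(t)(1-\sigma(t))$. Since the change of variables is a bijection, equality of the two $y$-densities (a.e.) is equivalent to equality of the two $t$-densities. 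Both sides are continuous in $t$, so equality then holds pointwise for every $t$.

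The key step is to separate the two components using their tail behaviour. The logistic density satisfies $g(t)\sim e^{-|t|}$, whereas any Gaussian component decays like $e^{-t^2/(2s^2)}$, which is eventually much smaller. This gives two consequences.
\begin{itemize}
\item Dividing the identity $q(t\mid\mu,s,\pi)=q(t\mid\mu',s',\pi')$ by $g(t)$ and letting $t\to\infty$ sends the Gaussian terms to $0$, which yields $\pi=\pi'$.
\item Since $\pi=\pi'<1$, subtracting the common term $\pi g(t)$ and dividing by $1-\pi>0$ gives $\varphi(t;\mu,s^2)=\varphi(t;\mu',s'^2)$ for all $t$.
\end{itemize}

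From there, the parameters of a Gaussian are determined by its density. Concretely, comparing the log-densities, which are quadratics in $t$, term by term gives $s=s'$ from the leading coefficient and then $\mu=\mu'$ from the linear coefficient. Alternatively one can compare the first two moments.

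The hardest step is the tail separation in the first bullet, and it is the only place that needs care. I will make it rigorous by bounding the ratio explicitly:
\[
\frac{\varphi(t;\mu,s^2)}{g(t)}\le 4\,\varphi(t;\mu,s^2)\,\bigl(e^{t}+1\bigr)\to 0 \quad\text{as } t\to\infty,
\]
using $1/g(t)=e^{t}+2+e^{-t}\le 4(e^t+1)$ for $t\ge 0$, together with the fact that Gaussian decay dominates exponential growth.

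The degenerate cases cause no trouble. When $\pi=0$ or $\pi'=0$ the same limit argument still applies, and it is consistent because the restriction $\pi<1$ keeps the Gaussian component present with positive weight. The clamp $\varepsilon$ in the definition of $\mu$ plays no role here, because $\mu$ is treated as a free parameter in $\mathbb{R}$.
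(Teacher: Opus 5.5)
Your proposal is correct and follows essentially the same route as the paper. You pass to logit coordinates, which is equivalent to the paper's step of multiplying the identity by $y(1-y)$. You then use the $e^{-t}$ tail of the logistic term against the faster Gaussian decay to force $\pi=\pi'$, cancel the common term using $\pi<1$, and conclude by identifiability of the Gaussian family. Your explicit bound on $1/g(t)$ and your continuity remark upgrading a.e.\ equality to pointwise equality add a little rigor that the paper leaves implicit.
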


The proof is given in Appendix~\ref{app:proof_mix_injective_v4}.
The canonical formulation is completed by the following structural separability condition on the location term $\logit_{\varepsilon}\!\big(r_{\theta}(x_a,x_b)\big)+m_{\eta_e}(\phi_{ab})$:

\begin{assumption}[Structural separability]
\label{ass:separability_v3}
Under Assumption~\ref{ass:centering_v3}, if for all observable triples $(a,b,e)$,
\[
\logit_{\varepsilon}\!\big(r_{\theta}(x_a,x_b)\big)+m_{\eta_e}(\phi_{ab})
=
\logit_{\varepsilon}\!\big(r_{\theta'}(x_a,x_b)\big)+m_{\eta'_e}(\phi_{ab}),
\]
then for each expert $e$ there exists a constant $k_e\in\mathbb{R}$ such that for all such observable triples,
\[
\logit_{\varepsilon}\!\big(r_{\theta}(x_a,x_b)\big)
=
\logit_{\varepsilon}\!\big(r_{\theta'}(x_a,x_b)\big)+k_e,
\qquad
m_{\eta_e}(\phi_{ab})=m_{\eta'_e}(\phi_{ab})-k_e.
\]
\end{assumption}

Assumption~\ref{ass:separability_v3} provides the key structural requirement for the identifiability argument; boundary cases clarifying this condition are discussed in Appendix~\ref{app:structural_failure}.
Writing the support of observable pairs as
$\mathcal{P}_{\mathrm{obs}}:=\{(a,b): \exists e \text{ such that } (a,b,e) \text{ is observable}\}$,
we then have the identifiability result:

\begin{theorem}[Conditional canonical identifiability of the aleatoric relation]
\label{thm:canonical_ident_v3}
Let $\Xi$ and $\Xi'$ satisfy the observation model and Assumption~\ref{ass:centering_v3} and~\ref{ass:separability_v3}. Assume further that $r_{\theta}(x_a,x_b),\,r_{\theta'}(x_a,x_b)\in[\varepsilon,1-\varepsilon]$ for all $(a,b)\in\mathcal{P}_{\mathrm{obs}}$. If the induced conditional laws of the observed response coincide under $\Xi$ and $\Xi'$ for every observable triple $(a,b,e)$, then $r_{\theta}(x_a,x_b)=r_{\theta'}(x_a,x_b)$ for all $(a,b)\in\mathcal{P}_{\mathrm{obs}}$. Hence the canonical aleatoric relation $R^{\star}_{ab}=r_{\theta}(x_a,x_b)$ is identifiable on observable pairs.
\end{theorem}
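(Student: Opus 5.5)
The argument will chain the two lemmas and the separability assumption, one observable triple at a time.

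\textbf{Step 1: recover the channel parameters.} Fix an observable triple $(a,b,e)$. Under $\Xi$, the observed response has conditional law $p(\cdot\mid\mu,s,\pi)$ from \cref{eq:mix_family_v4}, with
\[
\mu=\logit_{\varepsilon}(r_\theta(x_a,x_b))+m_{\eta_e}(\phi_{ab}),\qquad s=\tau_{\zeta_e}(\phi_{ab}),\qquad \pi=\pi_e .
\]
Under $\Xi'$ it has law $p(\cdot\mid\mu',s',\pi')$ with the primed analogues. By hypothesis these two laws coincide. Since $s,s'>0$ and $\pi,\pi'\in[0,1)$, Lemma~\ref{lem:mix_injective_v3} applies and gives $(\mu,s,\pi)=(\mu',s',\pi')$. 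In particular the location terms agree:
\[
\logit_{\varepsilon}\!\big(r_{\theta}(x_a,x_b)\big)+m_{\eta_e}(\phi_{ab})=\logit_{\varepsilon}\!\big(r_{\theta'}(x_a,x_b)\big)+m_{\eta'_e}(\phi_{ab})
\]
for every observable triple.

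\textbf{Step 2: pass to a constant shift.} Both parameter sets satisfy Assumption~\ref{ass:centering_v3}, so Assumption~\ref{ass:separability_v3} yields, for each expert $e$ with nonzero observation probability, a constant $k_e$ such that
\[
\logit_{\varepsilon}(r_\theta)=\logit_{\varepsilon}(r_{\theta'})+k_e,\qquad m_{\eta_e}=m_{\eta'_e}-k_e
\]
on all observable triples involving $e$.

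\textbf{Step 3: kill the shift and invert.} The second relation is exactly the hypothesis of Lemma~\ref{lem:gauge_fix_v3}, so $k_e=0$ for every such expert. Hence $\logit_{\varepsilon}(r_\theta(x_a,x_b))=\logit_{\varepsilon}(r_{\theta'}(x_a,x_b))$ for every $(a,b)$ observed by some expert, which is every pair in $\mathcal{P}_{\mathrm{obs}}$. Because both values lie in $[\varepsilon,1-\varepsilon]$, the clamp acts as the identity, so $\logit_{\varepsilon}$ reduces to $\logit$, which is strictly increasing and hence injective there. Inverting gives $r_\theta(x_a,x_b)=r_{\theta'}(x_a,x_b)$ on $\mathcal{P}_{\mathrm{obs}}$.

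\textbf{Main obstacle.} The delicate point is bookkeeping over experts: $k_e$ must be shown to vanish for each expert that observes a given pair, and any pair in $\mathcal{P}_{\mathrm{obs}}$ involves at least one such expert. That expert necessarily has nonzero observation probability, so the centering assumption, and with it Lemma~\ref{lem:gauge_fix_v3}, applies to it. I would state this explicitly so that experts with zero observation probability, for which centering is not assumed, play no role.
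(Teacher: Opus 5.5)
Your proposal is correct and follows the paper's proof step for step. You apply Lemma~\ref{lem:mix_injective_v3} to each observable triple to equate the locations, use Assumption~\ref{ass:separability_v3} to reduce to expert-wise constant shifts, remove those shifts with Lemma~\ref{lem:gauge_fix_v3}, and conclude by injectivity of the unclamped logit on $[\varepsilon,1-\varepsilon]$. You also note explicitly that every pair in $\mathcal{P}_{\mathrm{obs}}$ is seen by at least one expert with nonzero observation probability; the paper leaves this clarification implicit.
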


The proof is in Appendix~\ref{app:proof_canonical_ident_v4}. Under the specific observation family, \Cref{thm:canonical_ident_v3} gives conditional identifiability of the canonical aleatoric relation on observable pairs at the law level. In the practical single-observation regime, however, estimation still relies on the shared parametric structure imposed across pairs and experts rather than repeated observations of the same pair--expert combination.

\section{Method}
\label{sec:method}

We first introduce a probability-aware pairwise learning formulation that links probabilistic pairwise relations to clustering-oriented representations. Building on this formulation and motivated by the theoretical observation model, we develop a practical learning framework in which a surrogate supervision route guides learning from entangled supervision toward the aleatoric relation.

\subsection{ProbPair: probability-aware pairwise representation learning}
\label{sec:probpair}

We establish a representation learning formulation under probabilistic pairwise supervision. Given supervised pairs $\{(a_i,b_i,y_i)\}_{i=1}^{|\mathcal{C}|}$, where $y_i\in[0,1]$ denotes the probabilistic relation for pair $(x_{a_i},x_{b_i})$, we aim to learn clustering-oriented representations. To this end, we adopt a deep constraint embedding formulation with encoder $f_{\psi}:\mathcal{X}\to\mathcal{Z}\subset\mathbb{R}^{D}$ and latent code $z_j=f_{\psi}(x_j)$. We work in angular space, where bounded cosine similarity provides a natural geometric quantity for probabilistic calibration. To connect probabilistic pairwise relations with the latent representation space, we introduce the following probabilistic readout:
\begin{equation}
\label{eq:probpair_readout}
\hat{y}_{a_i b_i}
:=
\sigma\big((\cos(z_{a_i},z_{b_i})-m)/T\big)\in(0,1),
\end{equation}
where $m\in\mathbb{R}$ and $T>0$ are learnable parameters controlling the operating point and sharpness of the mapping. The resulting pairwise objective, referred to as the \textit{ProbPair} loss, is given by
\begin{equation}
\label{eq:probpair_loss}
\mathcal{L}_{\mathrm{PP}}
=
-\frac{1}{|\mathcal{C}|}
\sum_{i=1}^{|\mathcal{C}|}
\Big[
y_i\log \hat{y}_{a_i b_i}
+
(1-y_i)\log(1-\hat{y}_{a_i b_i})
\Big].
\end{equation}
Unlike hard-constraint objectives, \cref{eq:probpair_loss} directly accommodates non-binary relation strengths. As a result, the induced geometry need not collapse to a purely binary regime, but can preserve both confident and uncertain pairwise structure.

We further include a reconstruction regularizer to avoid degenerate representations and preserve the global structure of the data. Since the representation is learned in angular space, reconstruction is performed from normalized latent embeddings, as in \cite{SpherePair}. Specifically, with decoder $g_{\psi'}:\mathcal{Z}\to\mathcal{X}$ and $\mathrm{Norm}(\cdot)$ denoting $\ell_2$ normalization, we decode from $\hat{x}_j=g_{\psi'}(\mathrm{Norm}(z_j))$ and use reconstruction loss $\mathcal{L}_{\mathrm{rec}}=\frac{1}{|\mathcal{X}|}\sum_{j=1}^{|\mathcal{X}|}\|x_j-\hat{x}_j\|_2^2$. 
With $\lambda_{\mathrm{rec}}>0$
balancing pairwise relation fitting and reconstruction regularization, 
the overall objective is
\begin{equation}
\label{eq:probpair_total}
\mathcal{L}
=
\mathcal{L}_{\mathrm{PP}}
+
\lambda_{\mathrm{rec}}\mathcal{L}_{\mathrm{rec}}.
\end{equation}
Minimizing \cref{eq:probpair_total} yields optimal normalized embeddings $\mathcal{Z}_{\mathrm{norm}}\!=\!\{\mathrm{Norm}(z_j)\}_{j=1}^{|\mathcal{X}|}$ that are shaped by probabilistic pairwise relations while being regularized by reconstruction. These angular embeddings can then be used by a downstream clustering algorithm to obtain the final partition, and can also produce probabilistic relation estimates for instance pairs through \cref{eq:probpair_readout}.

\subsection{Estimator--Corrector--Integrator ProbPair}
\label{sec:ECI}
Building on the above formulation, we develop a practical framework motivated by the observation model. Rather than learning directly from entangled supervision, it progressively refines the supervision signal by correcting the structured observational component, screening the residual inconsistency, and feeding the refined supervision back into learning, thereby yielding an iterative \textit{Estimator--Corrector--Integrator ProbPair} (ECI-PP) design for learning toward the aleatoric relation.
The overall ECI-PP flow is summarized in \cref{fig:eci}.

\begin{figure}[t]
    \begin{center}
    \centerline{\includegraphics[width=\textwidth]{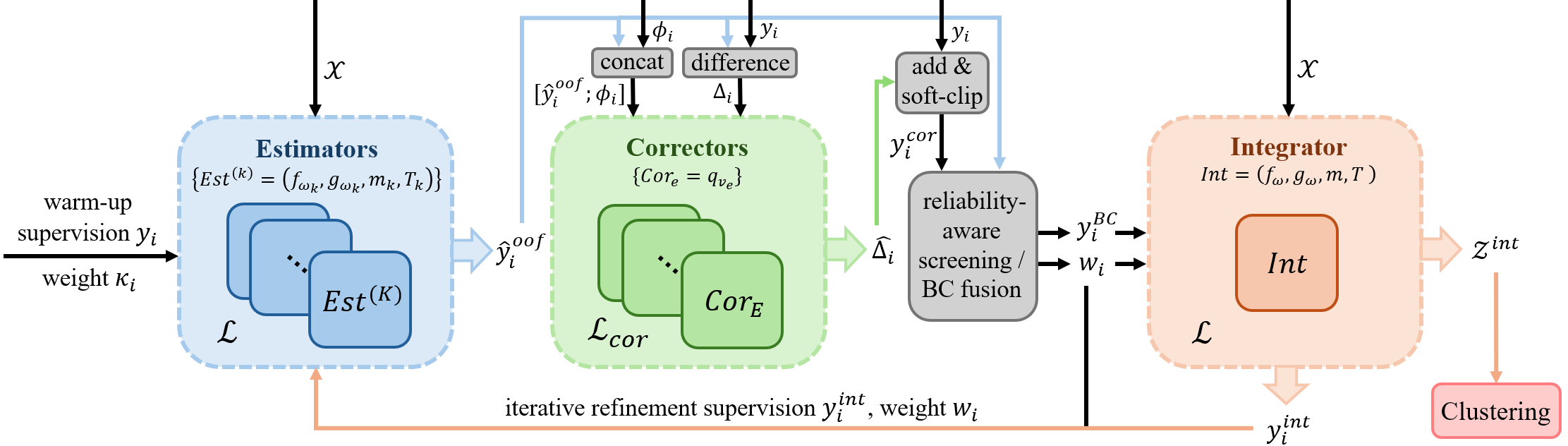}}
    \caption{Overview of the ECI-PP pipeline. The \textcolor{estblue}{\textbf{Estimators}} are trained with the ProbPair objective $\mathcal{L}$ (\cref{eq:probpair_total}) and produce out-of-fold beliefs $\hat{y}_i^{\mathrm{oof}}$ via the readout in \cref{eq:probpair_readout}. 
    The \textcolor{corgreen}{\textbf{Correctors}} take $[\hat{y}_i^{\mathrm{oof}};\phi_i]$ as input, with $\phi_i$ the pair-$i$ descriptor, and learn probability-space corrections $\hat{\Delta}_i$ via \cref{eq:eci_corrector_loss}, yielding corrected relations $y_i^{\mathrm{cor}}$.
    \textcolor{midgray}{\textbf{Reliability-aware screening}} produces reliability weights $w_i$, and \textcolor{midgray}{\textbf{BC fusion}} forms the refined supervision $y_i^{\mathrm{BC}}$ via \cref{eq:eci_int_screening,eq:eci_int_bc}. The \textcolor{intorange}{\textbf{Integrator}} learns embeddings from the refined supervision; its Integrator-side relations $y_i^{\mathrm{int}}$ are fed back to the \textcolor{estblue}{\textbf{Estimators}} for iterative refinement, while the final embeddings $\mathcal{Z}^{\mathrm{int}}$ are \textcolor{clusred}{\textbf{clustered}} into the final partition.}
    \label{fig:eci}
    \end{center}
    \vskip -0.2in
\end{figure}

\paragraph{Estimator.}
To enable the subsequent learning of structured observational effects from entangled supervision, ECI-PP first constructs a model-side reference that is less coupled to the raw observations. In flexible pairwise representation learning, beliefs fitted on the same constraints used for training can absorb noisy observations in-sample and are therefore unreliable for this role. We accordingly partition $\mathcal{C}$ into $K$ disjoint folds $\{\mathcal{C}^{(k)}\}_{k=1}^{K}$ and instantiate $K$ non-shared Estimators $\{\mathrm{Est}^{(k)}\}_{k=1}^{K}$, where $\mathrm{Est}^{(k)}\!=\!(f_{\omega_k},g_{\omega'_k},m_k,T_k)$ denotes the encoder, decoder, and fold-specific readout parameters. Each $\mathrm{Est}^{(k)}$ is trained on $\mathcal{C}\setminus\mathcal{C}^{(k)}$ by minimizing $\mathcal{L}$ in \cref{eq:probpair_total}, and then produces holdout beliefs on $\mathcal{C}^{(k)}$ through \cref{eq:probpair_readout}. Aggregating these predictions over all folds yields a full set of out-of-fold beliefs $\{\hat{y}_i^{\mathrm{oof}}\}_{i=1}^{|\mathcal{C}|}$, which serves as the model-side reference. 
To initialize the estimator, we weight each observed pair in ProbPair loss by an information-based decisiveness score, defined as
\begin{equation}
\label{eq:eci_infoweight}
\kappa_i
=
\frac{D_{\mathrm{KL}}\!\big(\mathrm{Bern}(y_i)\,\Vert\,\mathrm{Bern}(\bar{y})\big)}
{(1-y_i)\,D_{\mathrm{KL}}\!\big(\mathrm{Bern}(0)\,\Vert\,\mathrm{Bern}(\bar{y})\big)
+y_i\,D_{\mathrm{KL}}\!\big(\mathrm{Bern}(1)\,\Vert\,\mathrm{Bern}(\bar{y})\big)}
\in [0,1],
\end{equation}
where $\bar{y}=\frac{1}{|\mathcal{C}|}\sum_{i=1}^{|\mathcal{C}|}y_i$, $D_{\mathrm{KL}}(\cdot\Vert\cdot)$ denotes the Kullback--Leibler divergence, and $\mathrm{Bern}(\cdot)$ denotes the Bernoulli law. This weighting emphasizes more decisive judgments, which are typically expressed with greater expert confidence, and thereby stabilizes the initial out-of-fold beliefs.

\paragraph{Corrector.}
Given the Estimators' out-of-fold beliefs, the Corrector aims to learn structured expert-specific corrections in the observed supervision.
These corrections are motivated by the epistemic component defined in logit space (\cref{sec:observation}), whereas for practical stability we work with its mean probability-space distortion $\bar\epsilon_{e,ab}(R^{\star}_{ab},\phi_{ab})$ (see \cref{eq:mean_distortion_v4}), because small probability discrepancies near the $0/1$ boundaries induce disproportionately large logit residuals.
Although approximate and not preserving the variance structure encoded by $\tau_e(\phi_{ab})$, $\bar\epsilon_{e,ab}(R^{\star}_{ab},\phi_{ab})$ remains a reasonable proxy because it captures the structured mean effect induced by the epistemic channel, making the resulting Corrector a practical surrogate module.

We take $\hat{y}_i^{\mathrm{oof}}$ as the current proxy for $R_{a_ib_i}^\star$ and introduce expert-specific Correctors $\{\mathrm{Cor}_e\}_{e=1}^{E}$ in probability space, 
where each $\mathrm{Cor}_e$ is parameterized by a network $q_{\nu_e}$ on $\mathcal{I}_e:=\{i\!:\!e_i\!=\!e\}$ with input $[\hat{y}_i^{\mathrm{oof}};\phi_i]$; here we instantiate $\phi_i=[x_{a_i}\!\odot\!x_{b_i};\,|x_{a_i}\!-\!x_{b_i}|]$ with $\odot$ denoting element-wise product, as a simple and stable generic pair feature.
Writing $\Delta_i:=\hat{y}_i^{\mathrm{oof}}-y_i\in(-1,1)$, the Corrector output is $\hat{\Delta}_i:=q_{\nu_{e_i}}([\hat{y}_i^{\mathrm{oof}};\phi_i])\in(-1,1)$.
Since $\Delta_i$ is an empirical target formed from the observed $y_i$ and the proxy $\hat{y}_i^{\mathrm{oof}}$, it may also contain stochastic corruption and proxy error in addition to structured epistemic effects.
We therefore train each small expert-specific Corrector with regularization:
\begin{equation}
\label{eq:eci_corrector_loss}
\mathcal{L}_{\mathrm{cor}}^{(e)}
=
\frac{1}{|\mathcal I_e|}
\sum_{i\in\mathcal I_e}
\Big[
\rho (\hat{\Delta}_i-\Delta_i)
+
\lambda_{\mathrm{cor}} |\hat{\Delta}_i |^2
\Big].
\end{equation}
Here, $\rho(\cdot)$ is the Huber loss and $\lambda_{\mathrm{cor}}>0$ weights the $\ell_2$ regularizer, which biases the limited-capacity Corrector toward shared learnable structure rather than arbitrary pair-specific fluctuations.

\paragraph{Integrator.}
The Integrator, parameterized as $\mathrm{Int}=(f_{\omega},g_{\omega'},m,T)$, is the learner that integrates the refined supervision into a clustering-oriented embedding. To build this refined supervision from the Corrector output, we first form a corrected relation $y_i^{\mathrm{cor}}$, quantify the remaining inconsistency by $\mathrm{gap}_i$, and convert it into a screening weight $w_i$, with $\gamma>0$ controlling the sharpness of down-weighting:
\begin{equation}
\label{eq:eci_int_screening}
y_i^{\mathrm{cor}}:=\operatorname{softclip}_{\xi}(y_i+\hat{\Delta}_i),\qquad
\mathrm{gap}_i:=\left|\operatorname{softclip}_{\xi}(\hat{y}_i^{\mathrm{oof}})-y_i^{\mathrm{cor}}\right|,\qquad
w_i:=(1-\mathrm{gap}_i)^{\gamma}.
\end{equation}
Here, $\operatorname{softclip}_{\xi}(u):=\xi^{-1}\operatorname{softplus}(\xi u)-\xi^{-1}\operatorname{softplus}(\xi(u-1))$ with $\xi>0$ softly maps values into $(0,1)$ while preserving gradient flow near the boundaries; applying the same map to $\hat{y}_i^{\mathrm{oof}}$ keeps the gap computation in the same bounded space.
The residual $\mathrm{gap}_i$ quantifies the unexplained discrepancy that remains after structured correction. 
While the observation model associates this discrepancy with stochastic corruption and residual epistemic variation, $\mathrm{gap}_i$ may also reflect proxy error in $\hat{y}_i^{\mathrm{oof}}$ and imperfect Corrector fitting in practice. We therefore treat it uniformly as a reliability signal for heuristic screening through $w_i$.
We then set $n_i:=n_0w_i$ with $n_0>0$, and construct the Bayesian-confidence supervision
\begin{equation}
\label{eq:eci_int_bc}
y_i^{\mathrm{BC}}:=(y_i+n_i\,y_i^{\mathrm{cor}})/(1+n_i),
\end{equation}
which keeps $y_i$ as a fixed prior while letting $y_i^{\mathrm{cor}}$ contribute according to its screened reliability. 
The Integrator then optimizes $\mathcal{L}$ (\cref{eq:probpair_total}) using $y_i^{\mathrm{BC}}$ as the pairwise target, with $w_i$ multiplying each per-constraint term in $\mathcal{L}_{\mathrm{PP}}$ (\cref{eq:probpair_loss}) before averaging, yielding Integrator embeddings.

\paragraph{Iterative refinement and deployment.}
ECI-PP operates by passing supervision through the Estimator--Corrector--Integrator pipeline to produce, via \cref{eq:probpair_readout}, Integrator-side relations $y_i^{\mathrm{int}}$, and then feeding $(y_i^{\mathrm{int}},w_i)$ back to the Estimators for iterative refinement. Warm-up initializes this loop by starting the Estimators from $(y_i,\kappa_i)$; subsequent rounds replace this with $(y_i^{\mathrm{int}},w_i)$. 
At each round, the refreshed Estimator beliefs are passed again through the Corrector--Integrator pipeline to update $y_i^{\mathrm{cor}}$, $\mathrm{gap}_i$, $w_i$, and $y_i^{\mathrm{BC}}$,
thereby yielding a practical finite-stage refinement procedure that progressively improves the surrogate supervision toward the aleatoric relation.
After refinement, the final normalized Integrator embeddings $\mathcal{Z}^{\mathrm{int}}:=\{\mathrm{Norm}(f_{\omega}(x_j))\}_{j=1}^{|\mathcal{X}|}$ encode the learned pairwise relation structure; applying an unsupervised clustering algorithm to $\mathcal{Z}^{\mathrm{int}}$ then yields the final partition.
Appendix~\ref{app:alg_complexity} provides algorithmic details and complexity analysis.

\section{Experiments}
\label{sec:experiment}

\subsection{Experimental settings}
\label{sec:exp_settings}

Our experiments evaluate ECI-PP under UPCC along four axes: (i) comparison with DCC and probabilistic pairwise baselines, (ii) robustness to expert quality, corruption, and multi-expert variation, (iii) held-out diagnostics of internal dynamics, and (iv) sensitivity and ablation of key designs.

\noindent\textbf{Datasets.} We adopt six image benchmarks (CIFAR100-20, CIFAR10 \cite{CIFAR}, FMNIST \cite{FMNIST}, ImageNet10 \cite{Imagenet10}, MNIST \cite{MNIST}, STL10 \cite{STL10}) and two text benchmarks (Reuters subset \cite{DEC}, RCV1-10 \cite{SpherePair}), covering varying class counts and class imbalance.
Dataset details and splits are in Appendix~\ref{app:datasets}.

\noindent\textbf{Compared methods.} We compare ECI-PP with six baselines applicable to probabilistic supervision: VanillaDCC \cite{MCL} as a basic logistic pairwise baseline, VolMaxDCC \cite{VolMaxDCC} with explicit noise modeling, CIDEC \cite{CIDEC1} and SpherePair \cite{SpherePair} as state-of-the-art end-to-end DCC and deep constraint embedding methods, respectively, and ProbPair/Weighted ProbPair as direct ablations. 
ProbPair uses only $\mathcal{L}$ in \cref{eq:probpair_total}, while Weighted ProbPair additionally uses the information-based weight in \cref{eq:eci_infoweight}.

\noindent\textbf{Constraint generation.} To instantiate UPCC, we generate constraints with trained expert classifiers rather than deriving binary labels from ground-truth classes.
Experts are trained on controlled labeled subsets; uniformly sampled constraint pairs assigned to expert $e$ are given a judgment $y^{\mathrm{jud}}_{e,ab}=\langle \boldsymbol{p}^{e}_a,\boldsymbol{p}^{e}_b\rangle\in[0,1]$ from the predictive distributions, and the recorded $y_i$ is then produced through the corruption channel in \cref{sec:observation}.
For single-expert supervision, settings {\tt lv0.1}/{\tt lv0.01}/{\tt lv0.001} vary the labeled-data fraction used to train the expert.
Multi-expert settings {\tt multi2}/{\tt multi3}/{\tt multi10} use multiple experts with complementary familiar/unfamiliar categories, where unfamiliar categories act as expert blind spots.
See Appendix~\ref{app:constraint_generation} for the full procedure.

\noindent\textbf{Protocol.} For a comparative study, we report clustering metrics (ACC/NMI/ARI) over five trials; embedding-based methods use K-means on learned representations, while end-to-end DCC baselines use native outputs.
The main comparison uses the default {\tt lv0.01} single-expert and {\tt multi3} multi-expert regimes, both with corruption probability $0.3$ and 9k constraints in total ($3$k per expert under {\tt multi3}).
To assess robustness, we vary single-expert quality, corruption probability, and multi-expert configuration with matched constraint budgets.
To probe ECI-PP beyond final clustering scores, we conduct held-out internal-signal diagnostics and sensitivity/ablation studies.

\noindent\textbf{Implementation.} For fair comparison, we follow established fully connected architectures and the unsupervised pretraining protocol in \cite{SpherePair,VolMaxDCC}: encoder-based methods use a $500$--$500$--$2000$ backbone with embedding dimension $10$ except $20$ for CIFAR100-20, while VanillaDCC and VolMaxDCC use their native $512$--$512$ classifier architectures.
For baselines, we use reported best settings where available; VolMaxDCC follows its original validation-based search with $1{,}000$ training samples held out.
Except in sensitivity/ablation studies, ECI-PP uses \textbf{one global setting} without dataset-specific tuning: $5$ estimator folds, Corrector $64$--$16$, $\lambda_{\mathrm{cor}}=0.5$, $\gamma=10$, $n_0=10$, $\xi=20$, and $\lambda_{\mathrm{rec}}=0.02$.

Full experimental details are provided in Appendix~\ref{app:experimental_settings} to support reproducibility.

\subsection{Experimental results}
\label{sec:exp_results}

\paragraph{Main comparison.}
\cref{table:exp1_9k_main} compares our ECI-PP against baselines under the {\tt lv0.01} single-expert and {\tt multi3} multi-expert settings, both with corruption probability $0.3$ and 9k constraints; 
additional constraint-budget results appear in Appendix~\ref{app:full_budget_results}.
Overall, ECI-PP is strongest, ranking first in $41$ out of $48$ test entries ($2$ expert regimes$\times$$8$ datasets$\times$$3$ metrics) and within the top two in $46/48$,
while one of the ProbPair-family methods is best in $45/48$, supporting the ProbPair formulation under UPCC.
Under single-expert supervision, ECI-PP outperforms the strong non-ProbPair baseline, SpherePair, in most entries ($4$--$10\%$ absolute NMI margins); 
further comparisons without pretraining make this pattern clearer (Appendix~\ref{app:no_pretraining_results}).
Under multi-expert supervision, the NMI margin over SpherePair widens to $6$--$32\%$.
Appendix~\ref{app:expert_aware_ensemble_results} further compares baselines with an expert-aware ensemble wrapper.
As an ablation signal, Weighted ProbPair shows a clearer advantage over ProbPair in the multi-expert setting, suggesting that information-based weighting is more useful when low-decisiveness constraints can indicate expert unfamiliarity.
A remaining caveat is the severely imbalanced RCV1-10: SpherePair retains the best ACC, reflecting the advantage of its angular geometry for imbalanced pair structures, while ECI-PP still gives the best NMI/ARI, indicating stronger grouping consistency.

\begin{table}[h]
    \caption{Comparative performance (\%) (ACC, NMI, ARI) across datasets for methods under {\tt lv0.01} and {\tt multi3} settings, with corruption probability $0.3$ and $9$k constraints. \textcolor{blue}{Blue} and black represent \textcolor{blue}{training} and test results, respectively. Best results are in \textbf{bold}, and second-best are \underline{underlined}.}
    \label{table:exp1_9k_main}
    \vskip -0.25in
    \begin{center}
    \begin{scriptsize}
    \renewcommand{\arraystretch}{0.0}
    \setlength{\tabcolsep}{4.1pt}
    \setlength{\aboverulesep}{0.2pt}
    \setlength{\belowrulesep}{0.2pt}
    \setlength{\extrarowheight}{0pt}
    \begin{tabular}{lllccccccc}
        \toprule
        & & & \makecell{Vanilla-\\DCC} & \makecell{VolMax-\\DCC} & CIDEC & SpherePair & ProbPair & \makecell{Weighted\\ProbPair} & \makecell{ECI-PP\\(Ours)} \\
        \midrule
        \multirow{24}{*}{\raisebox{-23.0ex}{\makecell{Single-\\expert\\{\tt lv0.01}}}} & \multirow{3}{*}{\raisebox{-1ex}{CIFAR100}}
        & ACC & \textcolor{blue}{16.8}, 17.0 & \textcolor{blue}{46.1}, 45.9 & \textcolor{blue}{14.3}, 13.6 & \textcolor{blue}{50.2}, 50.5 & \textcolor{blue}{49.4}, 49.5 & \textcolor{blue}{\underline{50.3}}, \underline{50.8} & \textcolor{blue}{\textbf{52.7}}, \textbf{52.9} \\
        & & NMI & \textcolor{blue}{13.5}, 14.6 & \textcolor{blue}{\underline{45.6}}, \underline{46.1} & \textcolor{blue}{11.0}, 11.2 & \textcolor{blue}{43.0}, 44.2 & \textcolor{blue}{43.2}, 44.2 & \textcolor{blue}{44.5}, 45.7 & \textcolor{blue}{\textbf{49.2}}, \textbf{49.9} \\
        & & ARI & \textcolor{blue}{6.0}, 6.3 & \textcolor{blue}{30.4}, 30.5 & \textcolor{blue}{2.8}, 2.3 & \textcolor{blue}{\underline{32.1}}, \underline{32.6} & \textcolor{blue}{30.6}, 30.9 & \textcolor{blue}{31.5}, 32.1 & \textcolor{blue}{\textbf{36.2}}, \textbf{36.5} \\
        \cmidrule(lr){2-10}
        & \multirow{3}{*}{\raisebox{-1ex}{CIFAR10}}
        & ACC & \textcolor{blue}{38.3}, 38.7 & \textcolor{blue}{82.5}, 82.5 & \textcolor{blue}{44.3}, 44.0 & \textcolor{blue}{84.9}, 85.7 & \textcolor{blue}{85.9}, 86.0 & \textcolor{blue}{\underline{86.6}}, \underline{86.7} & \textcolor{blue}{\textbf{87.8}}, \textbf{87.8} \\
        & & NMI & \textcolor{blue}{29.9}, 31.2 & \textcolor{blue}{71.9}, 72.2 & \textcolor{blue}{36.1}, 37.6 & \textcolor{blue}{72.4}, 74.1 & \textcolor{blue}{75.0}, 75.4 & \textcolor{blue}{\underline{76.2}}, \underline{76.8} & \textcolor{blue}{\textbf{79.0}}, \textbf{79.1} \\
        & & ARI & \textcolor{blue}{23.2}, 24.0 & \textcolor{blue}{68.3}, 68.3 & \textcolor{blue}{21.1}, 21.3 & \textcolor{blue}{71.2}, 72.5 & \textcolor{blue}{73.0}, 73.2 & \textcolor{blue}{\underline{74.3}}, \underline{74.5} & \textcolor{blue}{\textbf{76.7}}, \textbf{76.8} \\
        \cmidrule(lr){2-10}
        & \multirow{3}{*}{\raisebox{-1ex}{FMNIST}}
        & ACC & \textcolor{blue}{39.8}, 40.2 & \textcolor{blue}{66.5}, 65.5 & \textcolor{blue}{36.1}, 35.7 & \textcolor{blue}{72.0}, 71.2 & \textcolor{blue}{\underline{73.6}}, \underline{72.5} & \textcolor{blue}{\textbf{74.6}}, \textbf{73.6} & \textcolor{blue}{72.4}, 71.1 \\
        & & NMI & \textcolor{blue}{30.1}, 31.3 & \textcolor{blue}{57.2}, 56.8 & \textcolor{blue}{29.7}, 30.5 & \textcolor{blue}{60.4}, 60.9 & \textcolor{blue}{62.9}, 62.9 & \textcolor{blue}{\underline{63.9}}, \underline{64.0} & \textcolor{blue}{\textbf{65.7}}, \textbf{65.0} \\
        & & ARI & \textcolor{blue}{21.5}, 22.2 & \textcolor{blue}{46.4}, 45.2 & \textcolor{blue}{15.7}, 15.6 & \textcolor{blue}{53.0}, 52.3 & \textcolor{blue}{54.8}, 53.7 & \textcolor{blue}{\underline{55.3}}, \textbf{54.3} & \textcolor{blue}{\textbf{55.5}}, \underline{53.9} \\
        \cmidrule(lr){2-10}
        & \multirow{3}{*}{\raisebox{-1ex}{ImageNet10}}
        & ACC & \textcolor{blue}{39.0}, 41.5 & \textcolor{blue}{87.4}, 88.1 & \textcolor{blue}{63.5}, 68.3 & \textcolor{blue}{85.1}, 89.1 & \textcolor{blue}{\underline{89.2}}, \underline{90.5} & \textcolor{blue}{89.1}, 90.3 & \textcolor{blue}{\textbf{92.8}}, \textbf{93.1} \\
        & & NMI & \textcolor{blue}{27.6}, 33.7 & \textcolor{blue}{77.8}, 79.2 & \textcolor{blue}{47.2}, 57.5 & \textcolor{blue}{71.0}, 79.2 & \textcolor{blue}{80.1}, \underline{83.2} & \textcolor{blue}{\underline{80.3}}, 82.7 & \textcolor{blue}{\textbf{88.1}}, \textbf{88.5} \\
        & & ARI & \textcolor{blue}{21.1}, 25.2 & \textcolor{blue}{75.2}, 76.1 & \textcolor{blue}{35.5}, 42.2 & \textcolor{blue}{70.7}, 78.2 & \textcolor{blue}{\underline{78.5}}, \underline{80.7} & \textcolor{blue}{78.4}, 80.2 & \textcolor{blue}{\textbf{85.8}}, \textbf{86.2} \\
        \cmidrule(lr){2-10}
        & \multirow{3}{*}{\raisebox{-1ex}{MNIST}}
        & ACC & \textcolor{blue}{38.9}, 40.3 & \textcolor{blue}{75.6}, 76.9 & \textcolor{blue}{47.0}, 46.7 & \textcolor{blue}{87.8}, 89.1 & \textcolor{blue}{89.0}, 90.0 & \textcolor{blue}{\underline{89.9}}, \underline{90.8} & \textcolor{blue}{\textbf{90.7}}, \textbf{91.0} \\
        & & NMI & \textcolor{blue}{29.5}, 32.2 & \textcolor{blue}{59.3}, 61.6 & \textcolor{blue}{41.4}, 43.7 & \textcolor{blue}{75.4}, 78.0 & \textcolor{blue}{76.9}, 79.0 & \textcolor{blue}{\underline{78.3}}, \underline{80.2} & \textcolor{blue}{\textbf{80.7}}, \textbf{81.6} \\
        & & ARI & \textcolor{blue}{23.1}, 24.8 & \textcolor{blue}{56.5}, 58.7 & \textcolor{blue}{24.7}, 24.2 & \textcolor{blue}{75.4}, 77.8 & \textcolor{blue}{77.4}, 79.4 & \textcolor{blue}{\underline{79.1}}, \underline{80.9} & \textcolor{blue}{\textbf{80.8}}, \textbf{81.5} \\
        \cmidrule(lr){2-10}
        & \multirow{3}{*}{\raisebox{-1ex}{REUTERS}}
        & ACC & \textcolor{blue}{69.7}, 77.0 & \textcolor{blue}{71.7}, 76.8 & \textcolor{blue}{72.9}, 79.3 & \textcolor{blue}{77.9}, 81.8 & \textcolor{blue}{78.2}, 80.8 & \textcolor{blue}{\underline{80.4}}, \underline{82.7} & \textcolor{blue}{\textbf{85.1}}, \textbf{86.6} \\
        & & NMI & \textcolor{blue}{30.5}, 43.3 & \textcolor{blue}{34.3}, 43.4 & \textcolor{blue}{35.7}, 48.1 & \textcolor{blue}{48.7}, 56.3 & \textcolor{blue}{53.2}, 58.3 & \textcolor{blue}{\underline{54.5}}, \underline{59.2} & \textcolor{blue}{\textbf{61.4}}, \textbf{64.4} \\
        & & ARI & \textcolor{blue}{36.8}, 50.4 & \textcolor{blue}{40.9}, 50.4 & \textcolor{blue}{41.8}, 54.9 & \textcolor{blue}{52.9}, 61.1 & \textcolor{blue}{55.5}, 61.0 & \textcolor{blue}{\underline{58.6}}, \underline{63.4} & \textcolor{blue}{\textbf{66.6}}, \textbf{70.0} \\
        \cmidrule(lr){2-10}
        & \multirow{3}{*}{\raisebox{-1ex}{STL10}}
        & ACC & \textcolor{blue}{35.5}, 37.8 & \textcolor{blue}{79.0}, 80.1 & \textcolor{blue}{57.8}, 63.3 & \textcolor{blue}{76.1}, 79.7 & \textcolor{blue}{\underline{81.6}}, \underline{83.4} & \textcolor{blue}{\underline{81.6}}, 83.1 & \textcolor{blue}{\textbf{85.4}}, \textbf{85.8} \\
        & & NMI & \textcolor{blue}{22.9}, 27.6 & \textcolor{blue}{66.7}, 69.3 & \textcolor{blue}{39.3}, 48.5 & \textcolor{blue}{59.5}, 66.5 & \textcolor{blue}{\underline{68.2}}, \underline{72.1} & \textcolor{blue}{67.9}, 71.6 & \textcolor{blue}{\textbf{75.8}}, \textbf{76.6} \\
        & & ARI & \textcolor{blue}{17.1}, 19.8 & \textcolor{blue}{62.8}, 64.7 & \textcolor{blue}{29.8}, 37.3 & \textcolor{blue}{57.2}, 63.3 & \textcolor{blue}{\underline{65.8}}, \underline{68.9} & \textcolor{blue}{\underline{65.8}}, 68.4 & \textcolor{blue}{\textbf{72.8}}, \textbf{73.4} \\
        \cmidrule(lr){2-10}
        & \multirow{3}{*}{\raisebox{-1ex}{RCV1-10}}
        & ACC & \textcolor{blue}{46.2}, 46.4 & \textcolor{blue}{61.3}, 61.8 & \textcolor{blue}{42.5}, 44.3 & \textcolor{blue}{\textbf{71.2}}, \textbf{71.4} & \textcolor{blue}{52.7}, 53.0 & \textcolor{blue}{55.2}, 55.1 & \textcolor{blue}{\underline{69.0}}, \underline{68.9} \\
        & & NMI & \textcolor{blue}{15.5}, 15.9 & \textcolor{blue}{35.7}, 36.7 & \textcolor{blue}{29.5}, 30.0 & \textcolor{blue}{\underline{60.5}}, \underline{61.3} & \textcolor{blue}{53.8}, 54.1 & \textcolor{blue}{57.4}, 57.5 & \textcolor{blue}{\textbf{65.5}}, \textbf{65.8} \\
        & & ARI & \textcolor{blue}{19.8}, 20.1 & \textcolor{blue}{44.7}, 45.4 & \textcolor{blue}{26.2}, 28.3 & \textcolor{blue}{\underline{61.0}}, \underline{61.4} & \textcolor{blue}{43.6}, 43.7 & \textcolor{blue}{47.1}, 47.0 & \textcolor{blue}{\textbf{62.0}}, \textbf{62.0} \\
        \midrule
        \multirow{24}{*}{\raisebox{-23.0ex}{\makecell{Multi-\\expert\\{\tt multi3}}}} & \multirow{3}{*}{\raisebox{-1.0ex}{CIFAR100}} & ACC & \textcolor{blue}{15.4}, 15.8 & \textcolor{blue}{42.8}, 42.5 & \textcolor{blue}{12.8}, 12.1 & \textcolor{blue}{44.3}, 44.4 & \textcolor{blue}{43.3}, 43.4 & \textcolor{blue}{\underline{46.1}}, \underline{46.5} & \textcolor{blue}{\textbf{48.6}}, \textbf{48.8} \\
        & & NMI & \textcolor{blue}{12.8}, 13.6 & \textcolor{blue}{\underline{43.4}}, \underline{43.8} & \textcolor{blue}{10.0}, 10.5 & \textcolor{blue}{40.5}, 41.5 & \textcolor{blue}{40.0}, 40.9 & \textcolor{blue}{41.8}, 43.0 & \textcolor{blue}{\textbf{48.1}}, \textbf{48.5} \\
        & & ARI & \textcolor{blue}{5.8}, 5.8 & \textcolor{blue}{27.1}, 27.1 & \textcolor{blue}{2.1}, 2.0 & \textcolor{blue}{27.1}, 27.6 & \textcolor{blue}{26.0}, 26.2 & \textcolor{blue}{\underline{28.0}}, \underline{28.5} & \textcolor{blue}{\textbf{32.9}}, \textbf{32.9} \\
        \cmidrule(lr){2-10}
        & \multirow{3}{*}{\raisebox{-1.0ex}{CIFAR10}} & ACC & \textcolor{blue}{32.4}, 32.8 & \textcolor{blue}{76.5}, 76.6 & \textcolor{blue}{33.2}, 31.6 & \textcolor{blue}{73.8}, 74.5 & \textcolor{blue}{81.3}, \underline{81.7} & \textcolor{blue}{\underline{81.4}}, \underline{81.7} & \textcolor{blue}{\textbf{81.6}}, \textbf{82.1} \\
        & & NMI & \textcolor{blue}{29.1}, 30.9 & \textcolor{blue}{\underline{73.1}}, \underline{73.2} & \textcolor{blue}{28.7}, 29.1 & \textcolor{blue}{64.7}, 66.5 & \textcolor{blue}{71.5}, 72.3 & \textcolor{blue}{72.0}, 72.6 & \textcolor{blue}{\textbf{77.8}}, \textbf{77.8} \\
        & & ARI & \textcolor{blue}{21.2}, 22.4 & \textcolor{blue}{65.8}, 65.9 & \textcolor{blue}{12.3}, 12.1 & \textcolor{blue}{59.0}, 60.5 & \textcolor{blue}{67.1}, 67.8 & \textcolor{blue}{\underline{67.9}}, \underline{68.4} & \textcolor{blue}{\textbf{73.1}}, \textbf{73.0} \\
        \cmidrule(lr){2-10}
        & \multirow{3}{*}{\raisebox{-1.0ex}{FMNIST}} & ACC & \textcolor{blue}{35.2}, 35.6 & \textcolor{blue}{59.4}, 59.1 & \textcolor{blue}{28.6}, 28.8 & \textcolor{blue}{61.0}, 61.3 & \textcolor{blue}{\underline{66.6}}, \underline{66.7} & \textcolor{blue}{\textbf{68.0}}, \textbf{67.6} & \textcolor{blue}{65.3}, 64.8 \\
        & & NMI & \textcolor{blue}{31.4}, 32.8 & \textcolor{blue}{59.2}, 59.0 & \textcolor{blue}{28.3}, 30.6 & \textcolor{blue}{56.6}, 57.8 & \textcolor{blue}{60.7}, 61.2 & \textcolor{blue}{\underline{61.7}}, \underline{62.0} & \textcolor{blue}{\textbf{66.0}}, \textbf{65.6} \\
        & & ARI & \textcolor{blue}{20.4}, 21.3 & \textcolor{blue}{45.0}, 44.3 & \textcolor{blue}{14.5}, 15.4 & \textcolor{blue}{46.8}, 47.4 & \textcolor{blue}{49.9}, 50.0 & \textcolor{blue}{\underline{51.9}}, \underline{51.8} & \textcolor{blue}{\textbf{53.4}}, \textbf{52.8} \\
        \cmidrule(lr){2-10}
        & \multirow{3}{*}{\raisebox{-1.0ex}{ImageNet10}} & ACC & \textcolor{blue}{31.8}, 34.2 & \textcolor{blue}{89.9}, 90.5 & \textcolor{blue}{39.4}, 38.8 & \textcolor{blue}{80.7}, 85.0 & \textcolor{blue}{87.4}, 89.2 & \textcolor{blue}{\underline{90.1}}, \underline{91.3} & \textcolor{blue}{\textbf{92.4}}, \textbf{92.6} \\
        & & NMI & \textcolor{blue}{22.0}, 26.8 & \textcolor{blue}{\textbf{87.8}}, \textbf{88.7} & \textcolor{blue}{31.0}, 34.8 & \textcolor{blue}{65.6}, 73.8 & \textcolor{blue}{76.1}, 80.1 & \textcolor{blue}{80.1}, 83.0 & \textcolor{blue}{\underline{87.7}}, \underline{88.1} \\
        & & ARI & \textcolor{blue}{15.8}, 18.7 & \textcolor{blue}{\underline{83.9}}, \underline{84.9} & \textcolor{blue}{14.3}, 15.6 & \textcolor{blue}{63.4}, 71.0 & \textcolor{blue}{75.0}, 78.3 & \textcolor{blue}{79.8}, 82.0 & \textcolor{blue}{\textbf{85.4}}, \textbf{85.7} \\
        \cmidrule(lr){2-10}
        & \multirow{3}{*}{\raisebox{-1.0ex}{MNIST}} & ACC & \textcolor{blue}{32.2}, 33.3 & \textcolor{blue}{67.1}, 67.8 & \textcolor{blue}{31.8}, 31.0 & \textcolor{blue}{78.6}, 80.0 & \textcolor{blue}{80.5}, 81.8 & \textcolor{blue}{\textbf{85.1}}, \textbf{86.1} & \textcolor{blue}{\underline{84.7}}, \underline{85.1} \\
        & & NMI & \textcolor{blue}{26.5}, 29.3 & \textcolor{blue}{57.5}, 58.9 & \textcolor{blue}{28.8}, 30.2 & \textcolor{blue}{70.8}, 74.0 & \textcolor{blue}{72.1}, 74.7 & \textcolor{blue}{\underline{75.0}}, \underline{77.2} & \textcolor{blue}{\textbf{79.9}}, \textbf{80.7} \\
        & & ARI & \textcolor{blue}{18.4}, 20.1 & \textcolor{blue}{50.9}, 52.1 & \textcolor{blue}{13.2}, 13.4 & \textcolor{blue}{66.3}, 69.1 & \textcolor{blue}{68.7}, 71.2 & \textcolor{blue}{\underline{73.4}}, \underline{75.5} & \textcolor{blue}{\textbf{77.1}}, \textbf{77.5} \\
        \cmidrule(lr){2-10}
        & \multirow{3}{*}{\raisebox{-1.0ex}{REUTERS}} & ACC & \textcolor{blue}{55.3}, 59.2 & \textcolor{blue}{67.5}, 68.0 & \textcolor{blue}{57.3}, 59.1 & \textcolor{blue}{62.1}, 69.9 & \textcolor{blue}{74.5}, 79.0 & \textcolor{blue}{\underline{81.0}}, \underline{83.9} & \textcolor{blue}{\textbf{88.3}}, \textbf{89.6} \\
        & & NMI & \textcolor{blue}{14.7}, 22.8 & \textcolor{blue}{45.4}, 46.5 & \textcolor{blue}{18.3}, 25.7 & \textcolor{blue}{25.6}, 37.2 & \textcolor{blue}{44.6}, 52.8 & \textcolor{blue}{\underline{51.5}}, \underline{57.9} & \textcolor{blue}{\textbf{65.4}}, \textbf{68.7} \\
        & & ARI & \textcolor{blue}{19.2}, 26.2 & \textcolor{blue}{47.0}, 47.9 & \textcolor{blue}{20.4}, 23.1 & \textcolor{blue}{28.0}, 40.7 & \textcolor{blue}{50.2}, 58.6 & \textcolor{blue}{\underline{57.9}}, \underline{64.2} & \textcolor{blue}{\textbf{72.9}}, \textbf{76.0} \\
        \cmidrule(lr){2-10}
        & \multirow{3}{*}{\raisebox{-1.0ex}{STL10}} & ACC & \textcolor{blue}{29.2}, 30.7 & \textcolor{blue}{71.0}, 71.1 & \textcolor{blue}{31.4}, 31.6 & \textcolor{blue}{67.8}, 71.4 & \textcolor{blue}{81.1}, 83.0 & \textcolor{blue}{\underline{83.1}}, \underline{84.3} & \textcolor{blue}{\textbf{87.9}}, \textbf{87.7} \\
        & & NMI & \textcolor{blue}{21.5}, 26.0 & \textcolor{blue}{\underline{70.2}}, 69.9 & \textcolor{blue}{24.9}, 31.7 & \textcolor{blue}{53.2}, 59.7 & \textcolor{blue}{66.1}, 70.2 & \textcolor{blue}{69.4}, \underline{72.0} & \textcolor{blue}{\textbf{78.3}}, \textbf{78.5} \\
        & & ARI & \textcolor{blue}{15.2}, 17.7 & \textcolor{blue}{59.5}, 59.1 & \textcolor{blue}{13.6}, 16.9 & \textcolor{blue}{47.8}, 53.3 & \textcolor{blue}{63.7}, 67.1 & \textcolor{blue}{\underline{67.4}}, \underline{69.4} & \textcolor{blue}{\textbf{75.9}}, \textbf{75.8} \\
        \cmidrule(lr){2-10}
        & \multirow{3}{*}{\raisebox{-1.0ex}{RCV1-10}} & ACC & \textcolor{blue}{44.3}, 44.6 & \textcolor{blue}{48.3}, 49.0 & \textcolor{blue}{32.3}, 32.0 & \textcolor{blue}{\textbf{55.7}}, \textbf{55.9} & \textcolor{blue}{43.4}, 43.5 & \textcolor{blue}{47.8}, 47.9 & \textcolor{blue}{\underline{53.4}}, \underline{53.3} \\
        & & NMI & \textcolor{blue}{14.3}, 14.7 & \textcolor{blue}{31.4}, 32.1 & \textcolor{blue}{14.1}, 13.5 & \textcolor{blue}{\underline{51.9}}, \underline{52.7} & \textcolor{blue}{46.6}, 47.4 & \textcolor{blue}{51.6}, 52.0 & \textcolor{blue}{\textbf{58.9}}, \textbf{59.0} \\
        & & ARI & \textcolor{blue}{17.8}, 18.0 & \textcolor{blue}{30.2}, 30.9 & \textcolor{blue}{10.8}, 10.3 & \textcolor{blue}{\underline{43.6}}, \underline{44.2} & \textcolor{blue}{34.4}, 34.7 & \textcolor{blue}{38.2}, 38.4 & \textcolor{blue}{\textbf{46.2}}, \textbf{46.1} \\
        \bottomrule
    \end{tabular}
    \end{scriptsize}
    \end{center}
    \vskip -0.15in
\end{table}

\begin{figure}[H]
    \begin{center}
    \centerline{\includegraphics[width=0.91\textwidth]{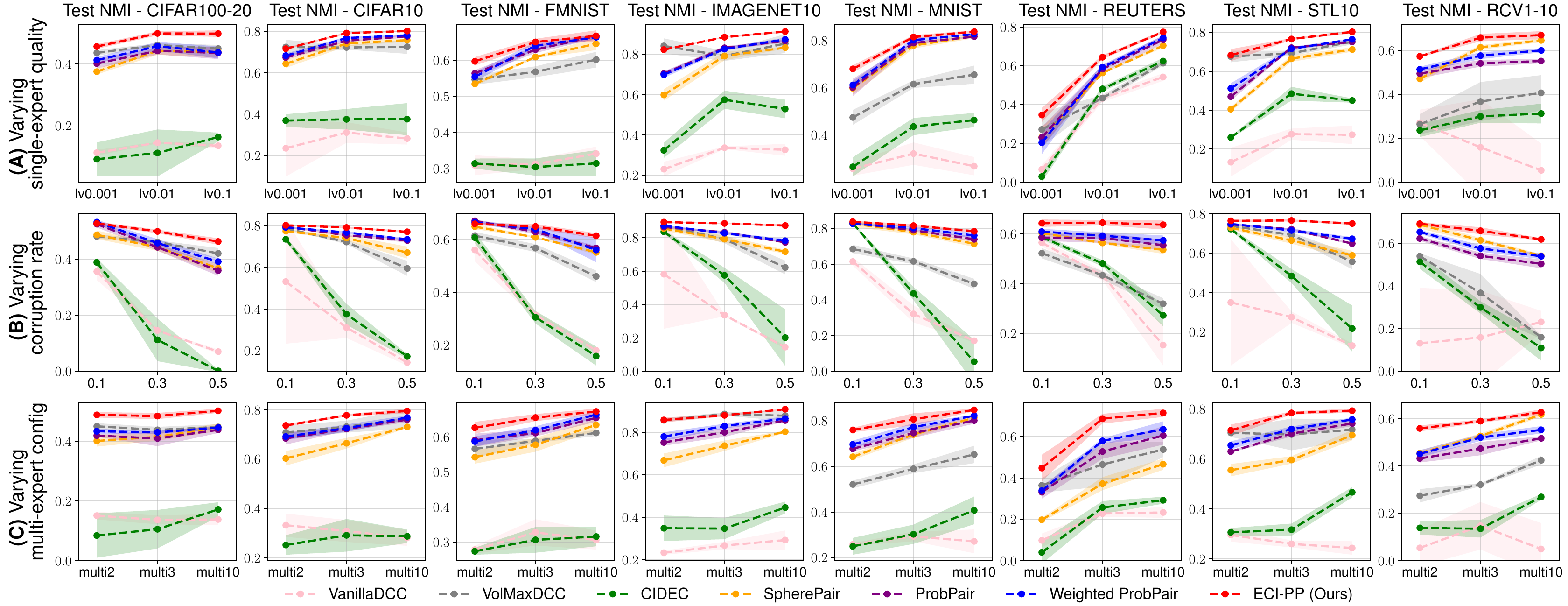}}
    \vskip -0.05in
    \caption{
    Test NMI performance (mean$\pm$std over 5 runs) of all models across datasets under varying (A) expert quality, (B) corruption probability, and (C) multi-expert configuration, with $9$k constraints.
    }
    \label{fig:exp2_nmi}
    \end{center}
    \vskip -0.25in
\end{figure}

\paragraph{Robustness across supervision conditions.}
We further stress-test all methods by varying one supervision factor in UPCC at a time.
Around the $9$k default supervision in \cref{table:exp1_9k_main}, \cref{fig:exp2_nmi} reports test NMI as we vary (A) expert quality, (B) corruption probability, and (C) multi-expert configuration; ACC/ARI results are deferred to Appendix~\ref{app:robustness_full_results}.
Across the three sweeps, the trends largely follow the expected supervision-quality ordering: stronger experts, lower corruption, and multi-expert settings with reduced expert blind spots (as detailed in Appendix~\ref{app:constraint_generation}) generally yield better performance, while the gap between ECI-PP and strong ProbPair-family baselines often narrows in easier regimes.
The stability advantage of ECI-PP is broadly visible across datasets, particularly on FMNIST, ImageNet10, and STL10, and is most pronounced under increasing corruption, where most baselines degrade sharply while ECI-PP remains comparatively stable.
Overall, these results are consistent with the role of our ECI framework in improving robustness to imperfect and heterogeneous supervision.

\begin{wrapfigure}{r}{0.66\textwidth}
    \centering
    \vskip -0.2in
    \includegraphics[width=0.66\textwidth]{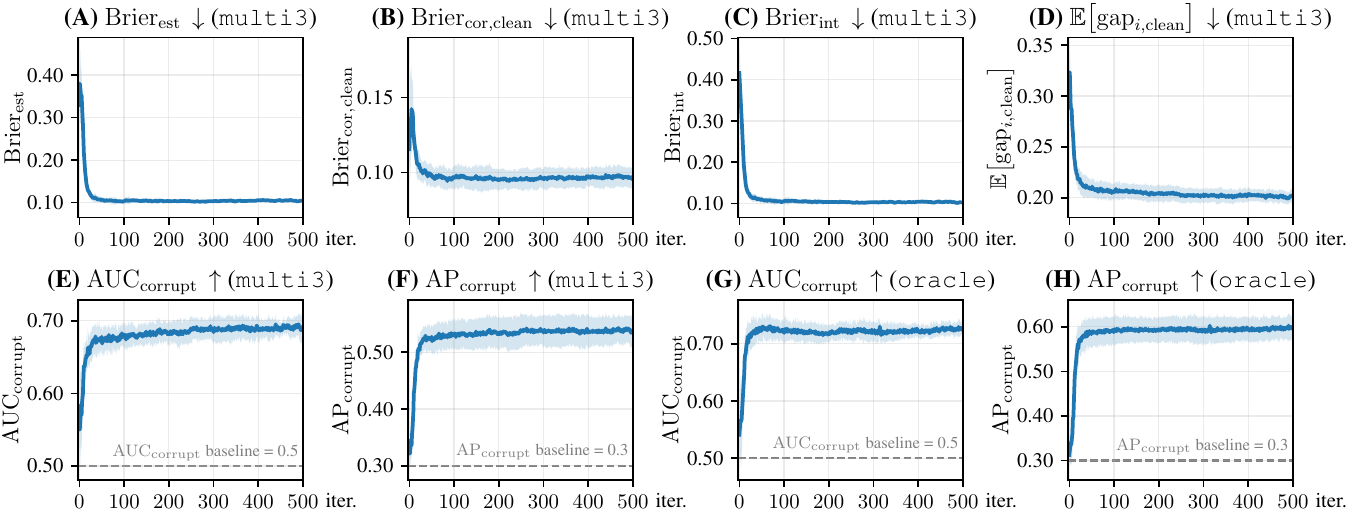}
    \vskip -0.0in
    \caption{
    Held-out diagnostic evolution over $500$ training iterations on MNIST under {\tt multi3} with corruption rate $0.3$ (mean$\pm$std over 5 runs).
    (A--C) Brier scores for estimated, corrected, and integrated relations; (D) residual discrepancy between corrected relations and estimator beliefs on uncorrupted constraints; (E,F) AUC/AP measuring the alignment between reliability-aware screening and injected corruption; (G,H) the same screening diagnostic with oracle-generated pairwise supervision before corruption.
    }
    \label{fig:diag_mnist}
    \vskip -0.05in
\end{wrapfigure}

\paragraph{Held-out diagnostics.}
We examine whether ECI-PP's internal signals evolve as intended on sample-disjoint held-out constraints.
\cref{fig:diag_mnist} shows the MNIST case under {\tt multi3} with corruption rate $0.3$, using benchmark labels only to form an external hard co-membership reference for diagnostics, rather than the canonical aleatoric relation $R^\star$.
Panels (A--C) report Brier scores of the \textit{estimated}, \textit{corrected}, and \textit{integrated} relations, and panel (D) reports the \textit{residual discrepancy} between corrected relations and estimator beliefs on uncorrupted constraints; lower is better.
Panels (E,F) compare reliability-aware screening with the injected corruption indicator using AUC/AP, where higher values indicate better corruption separation.
(G,H) repeat the same screening diagnostic with oracle-generated clean supervision before corruption, removing the expert epistemic uncertainty present in (E,F).
The curves improve rapidly and then stabilize:
relation Brier scores decrease, the post-correction residual discrepancy drops, and the screening signal aligns with corruption above chance, while the oracle-setting curves in (G,H) more closely track corruption as expected.
Overall, these held-out trends are consistent with the intended behavior of the ECI framework, from internal relation estimates to reliability-aware screening; 
full cross-dataset diagnostics are in Appendix~\ref{app:heldout_diagnostics_full}.

\paragraph{Sensitivity and ablation.}
We vary the main ECI-PP design choices around the default configuration in \cref{sec:exp_settings} to assess sensitivity and ablate key components.
The results (see Appendix~\ref{app:sensitivity_ablation_results}) show that the default setting is robust over broad ranges: larger $K$ gives only mild gains relative to its cross-fitting cost, information-based warm-up is mildly helpful, small Correctors with moderate regularization are sufficient, and the screening and boundary parameters $(\gamma,n_0,\xi)$ are generally insensitive.
Reconstruction strength allows tuning flexibility, but the shared default remains reliable overall.
These results indicate that ECI-PP is robust without dataset-specific hyperparameter tuning.

\section{Conclusion}
\label{sec:conclusion}

This paper addressed a realistic probabilistic constrained clustering setting, where pairwise supervision entangles aleatoric, epistemic, and stochastic factors.
We formalized the constraint-observation process and its conditional identifiability, introduced ProbPair for angular learning from probabilistic relations, and built ECI-PP as a practical framework for such supervision.
While existing methods largely rely on idealized constraints, ECI-PP outperforms state-of-the-art alternatives and remains robust under fallible, heterogeneous, and corrupted supervision without dataset-specific tuning.

These results come with several qualifications.
The identifiability analysis is population-level and conditional; fully factor-specific finite-sample estimation would require stronger assumptions or additional observations beyond the current surrogate residual structure.
Accordingly, the Corrector and reliability signal are practical surrogates rather than estimators of individual latent factors; a more theory-aligned extension would model structured correction, residual epistemic variation, and corruption-aware screening separately.
Empirically, our experiments use controlled expert simulations, leaving real-world annotator-provided constraints as a natural next step.
Finally, richer Corrector pair descriptors and principled stopping criteria may further improve robustness and training adaptivity.


\bibliographystyle{unsrt}
\bibliography{reference}


\appendix

\section{Additional theoretical details}
\label{app:proofs_problem_formulation_v3}

\subsection{Derivation of the logistic-normal judgment density}
\label{app:derivation_jud_density_v4}

Starting from \Cref{eq:latent_judgment_v4}, define
\[
\mu_{e,ab}:=\logit_{\varepsilon}(R^{\star}_{ab})+m_e(\phi_{ab}).
\]
Conditionally on $(R^{\star}_{ab},e,\phi_{ab})$, we then have
\[
\logit\big(y^{\mathrm{jud}}_{e,ab}\big)=\mu_{e,ab}+u_{e,ab},
\qquad
u_{e,ab}\sim \mathcal N\!\big(0,\tau_e^2(\phi_{ab})\big),
\]
so that
\[
\logit\big(y^{\mathrm{jud}}_{e,ab}\big)
\sim
\mathcal N\!\big(\mu_{e,ab},\tau_e^2(\phi_{ab})\big).
\]
For a generic $y\in(0,1)$, write $v=\logit(y)$. By the change-of-variables formula,
\[
p_{\mathrm{jud}}(y\mid R^{\star}_{ab},e,\phi_{ab})
=
\varphi\!\Big(v;\,\mu_{e,ab},\,\tau_e^2(\phi_{ab})\Big)
\left|\frac{dv}{dy}\right|,
\qquad v=\logit(y),\quad y\in(0,1).
\]
Using $\frac{dv}{dy}=\frac{d\logit(y)}{dy}=\frac{1}{y(1-y)}$, we obtain
\[
p_{\mathrm{jud}}(y\mid R^{\star}_{ab},e,\phi_{ab})
=
\frac{1}{y(1-y)}\,
\varphi\!\Big(\logit(y);\,\logit_{\varepsilon}(R^{\star}_{ab})+m_e(\phi_{ab}),\,\tau_e^2(\phi_{ab})\Big),
\qquad y\in(0,1),
\]
which is exactly \Cref{eq:judgment_density_v4}.

\subsection{Proof of Lemma~\ref{lem:gauge_fix_v3}}
\label{app:proof_gauge_fix_v4}
\begin{proof}
Fix an expert $e$ with nonzero observation probability. By assumption,
\[
m_{\eta_e}(\phi_{ab})=m_{\eta'_e}(\phi_{ab})-k_e
\]
for all observable triples $(a,b,e)$. Taking expectation over the observable-pair distribution conditional on $e$ yields
\[
\mathbb{E}_{(a,b)\mid e}[m_{\eta_e}(\phi_{ab})]
=
\mathbb{E}_{(a,b)\mid e}[m_{\eta'_e}(\phi_{ab})]-k_e.
\]
Both expectations vanish by Assumption~\ref{ass:centering_v3}, hence $k_e=0$.\footnote{The centering is imposed conditionally on each expert because the remaining additive ambiguity after fixing the total location is expert-wise: global centering would only fix the average shift across experts, not each expert-specific shift. Thus, under the present assumptions, global centering would not be sufficient for identifiability unless one further assumes that different experts observe sufficiently overlapping pair regions.} Since $e$ was arbitrary over experts with nonzero observation probability, the conclusion holds for every such expert.
\end{proof}

\subsection{Proof of Lemma~\ref{lem:mix_injective_v3}}
\label{app:proof_mix_injective_v4}
\begin{proof}
By \Cref{eq:judgment_density_v4,eq:observation_law_v4}, the observation law takes the form \Cref{eq:mix_family_v4}.

Assume that
\[
p(\cdot\mid \mu,s,\pi)=p(\cdot\mid \mu',s',\pi')
\]
on $(0,1)$, where $p(y\mid \mu,s,\pi)$ is the density defined in \Cref{eq:mix_family_v4}. Since $\mathbf 1_{[0,1]}(y)=1$ for $y\in(0,1)$, equality of the two densities means that
\[
(1-\pi)\,\frac{1}{y(1-y)}\,\varphi\!\big(\logit(y);\mu,s^2\big)+\pi
=
(1-\pi')\,\frac{1}{y(1-y)}\,\varphi\!\big(\logit(y);\mu',{s'}^2\big)+\pi'
\]
for all $y\in(0,1)$.

Write $v=\logit(y)$, so that $y=\sigma(v)$ and $y(1-y)=\sigma(v)(1-\sigma(v))$. Multiplying both sides by $y(1-y)$ gives
\[
(1-\pi)\,\varphi(v;\mu,s^2)+\pi\,\sigma(v)(1-\sigma(v))
=
(1-\pi')\,\varphi(v;\mu',{s'}^2)+\pi'\,\sigma(v)(1-\sigma(v))
\]
for all $v\in\mathbb{R}$.

As $v\to+\infty$, the Gaussian terms decay faster than $e^{-v}$, whereas
\[
\sigma(v)(1-\sigma(v))=\frac{e^{-v}}{(1+e^{-v})^2}\sim e^{-v}.
\]
Therefore,
\[
\lim_{v\to+\infty} e^v\Big[(1-\pi)\,\varphi(v;\mu,s^2)+\pi\,\sigma(v)(1-\sigma(v))\Big]=\pi,
\]
and similarly the right-hand side tends to $\pi'$. Hence $\pi=\pi'$.

Subtracting the common contamination term then yields
\[
\varphi(v;\mu,s^2)=\varphi(v;\mu',{s'}^2)
\qquad \text{for all } v\in\mathbb{R}.
\]
Injectivity of the Gaussian family implies $\mu=\mu'$ and $s=s'$. Therefore $(\mu,s,\pi)=(\mu',s',\pi')$, proving the claim.
\end{proof}


\subsection{Boundary cases for structural separability}
\label{app:structural_failure}

Assumption~\ref{ass:separability_v3} concerns the decomposition of the identifiable location term
$\logit_{\varepsilon}(r_{\theta}(x_a,x_b))+m_{\eta_e}(\phi_{ab})$
into a canonical relation component and an expert-specific epistemic component.
Two boundary cases clarify this requirement.
(i) If the pair descriptor $\phi_{ab}$ is uninformative, for example constant over observable pairs for a given expert, then $m_{\eta_e}(\phi_{ab})$ reduces to an expert-specific constant and cannot capture pair-dependent epistemic effects; the intended epistemic component is then not represented by the chosen model class.
(ii) Conversely, if $\phi_{ab}$ is overly rich and the function class for $m_{\eta_e}(\phi_{ab})$ overlaps with that of $\logit_{\varepsilon}(r_{\theta}(x_a,x_b))$, then shared pair-dependent variation can be moved between the two terms while preserving their sum.
For example, a sufficiently small zero-mean shift lying in this overlap can be added to the canonical location term and subtracted from the expert-specific term without violating centering or changing the observation law.
The condition therefore rules out such interchangeability, ensuring that canonical aleatoric variation and structured expert-specific variation play distinct explanatory roles on the observable support.

\subsection{Proof of \Cref{thm:canonical_ident_v3}}
\label{app:proof_canonical_ident_v4}
\begin{proof}
Assume that the induced conditional laws coincide under $\Xi$ and $\Xi'$ for every observable triple $(a,b,e)$. For each such triple, define
\[
\mu_{\Xi}(a,b,e):=\logit_{\varepsilon}\!\big(r_{\theta}(x_a,x_b)\big)+m_{\eta_e}(\phi_{ab}),
\qquad
s_{\Xi}(a,b,e):=\tau_{\zeta_e}(\phi_{ab}),
\]
and define $\mu_{\Xi'}(a,b,e)$ and $s_{\Xi'}(a,b,e)$ analogously. By \Cref{eq:observation_law_v4,eq:mix_family_v4}, the conditional law of the observed response for $(a,b,e)$ is exactly
\[
p\big(y\mid \mu_{\Xi}(a,b,e),s_{\Xi}(a,b,e),\pi_e\big),
\]
and similarly under $\Xi'$. Since the two conditional laws coincide, Lemma~\ref{lem:mix_injective_v3} implies that for every observable triple $(a,b,e)$,
\[
\mu_{\Xi}(a,b,e)=\mu_{\Xi'}(a,b,e),
\qquad
s_{\Xi}(a,b,e)=s_{\Xi'}(a,b,e),
\qquad
\pi_e=\pi'_e.
\]
In particular,
\[
\logit_{\varepsilon}\!\big(r_{\theta}(x_a,x_b)\big)+m_{\eta_e}(\phi_{ab})
=
\logit_{\varepsilon}\!\big(r_{\theta'}(x_a,x_b)\big)+m_{\eta'_e}(\phi_{ab})
\]
for all observable triples $(a,b,e)$.

Applying Assumption~\ref{ass:separability_v3}, for each expert $e$ there exists a constant $k_e\in\mathbb{R}$ such that for all observable triples $(a,b,e)$,
\[
\logit_{\varepsilon}\!\big(r_{\theta}(x_a,x_b)\big)
=
\logit_{\varepsilon}\!\big(r_{\theta'}(x_a,x_b)\big)+k_e,
\qquad
m_{\eta_e}(\phi_{ab})=m_{\eta'_e}(\phi_{ab})-k_e.
\]
By Lemma~\ref{lem:gauge_fix_v3}, each $k_e$ on the observable expert support must be zero. Therefore,
\[
\logit_{\varepsilon}\!\big(r_{\theta}(x_a,x_b)\big)
=
\logit_{\varepsilon}\!\big(r_{\theta'}(x_a,x_b)\big)
\qquad \text{for all } (a,b)\in\mathcal P_{\mathrm{obs}}.
\]
Since $r_{\theta}(x_a,x_b),r_{\theta'}(x_a,x_b)\in[\varepsilon,1-\varepsilon]$ on $\mathcal P_{\mathrm{obs}}$, the clamped logit satisfies $\logit_{\varepsilon}(u)=\logit(u)$ on this interval and is therefore injective there. It follows that
\[
r_{\theta}(x_a,x_b)=r_{\theta'}(x_a,x_b)
\qquad \text{for all } (a,b)\in\mathcal P_{\mathrm{obs}},
\]
which proves the theorem.
\end{proof}

\section{Algorithm and complexity analysis}
\label{app:alg_complexity}

\subsection{Algorithm}
\label{app:alg}

We provide the detailed ECI-PP procedure introduced in \cref{sec:method}, summarized in Algorithm~\ref{alg:eci}.

\begin{algorithm}[htbp]
\caption{ECI-PP for learning under UPCC}
\label{alg:eci}
\begin{algorithmic}[1]
\Require training data $\mathcal{X}=\{x_j\}_{j=1}^{|\mathcal{X}|}$; constraints $\mathcal{C}=\{(a_i,b_i,e_i,y_i)\}_{i=1}^{|\mathcal{C}|}$; number of experts $E$; number of estimator folds $K$; total training epochs $N_{\mathrm{ep}}$; warm-up epochs $N_{\mathrm{wu}}$; hyperparameters $\lambda_{\mathrm{rec}}$ (reconstruction weight), $\lambda_{\mathrm{cor}}$ (Corrector regularization), $\xi$ (soft-clipping sharpness), $\gamma$ (screening sharpness), $n_0$ (Bayesian-confidence scale); clustering routine $\mathrm{Clustering}(\cdot)$

\State Partition $\mathcal{C}$ into $K$ disjoint folds $\{\mathcal{C}^{(k)}\}_{k=1}^{K}$ and define $\mathcal{I}_e:=\{i:e_i=e\}$ for each expert $e$
\State Initialize $\{\mathrm{Est}^{(k)}=(f_{\omega_k},g_{\omega'_k},m_k,T_k)\}_{k=1}^{K}$, $\{\mathrm{Cor}_e=q_{\nu_e}\}_{e=1}^{E}$, and $\mathrm{Int}=(f_{\omega},g_{\omega'},m,T)$

\Statex
\State \textbf{Stage I: Warm-up}
\State Compute decisiveness weights $\{\kappa_i\}_{i=1}^{|\mathcal{C}|}$ by \cref{eq:eci_infoweight}
\For{$n=1,\dots,N_{\mathrm{wu}}$}
    \For{$k=1,\dots,K$}
        \State Update $\mathrm{Est}^{(k)}$ on $\mathcal{C}\!\setminus\!\mathcal{C}^{(k)}$ by gradient descent on \cref{eq:probpair_total} with $y_i$ and weights $\kappa_i$
    \EndFor
\EndFor
\State Aggregate holdout predictions from $\{\mathrm{Est}^{(k)}\}_{k=1}^{K}$ to obtain $\{\hat{y}_i^{\mathrm{oof}}\}_{i=1}^{|\mathcal{C}|}$ via \cref{eq:probpair_readout}
\For{$e=1,\dots,E$}
    \State Update $\mathrm{Cor}_e$ on $\mathcal{I}_e$ by gradient descent on \cref{eq:eci_corrector_loss} until convergence
\EndFor
\State Compute $\{y_i^{\mathrm{cor}},\mathrm{gap}_i,w_i\}_{i=1}^{|\mathcal{C}|}$ by \cref{eq:eci_int_screening}
\State Set $n_i \gets n_0 w_i$ and compute Bayesian-confidence supervision $\{y_i^{\mathrm{BC}}\}_{i=1}^{|\mathcal{C}|}$ by \cref{eq:eci_int_bc}
\For{$n=1,\dots,N_{\mathrm{wu}}$}
    \State Update $\mathrm{Int}$ by gradient descent on \cref{eq:probpair_total} with $y_i^{\mathrm{BC}}$ and weights $w_i$
\EndFor

\Statex
\State \textbf{Stage II: Iterative refinement}
\For{$t=N_{\mathrm{wu}}+1,\dots,N_{\mathrm{ep}}$}
    \State Read out Integrator-side relations $\{y_i^{\mathrm{int}}\}_{i=1}^{|\mathcal{C}|}$ from the Integrator embeddings via \cref{eq:probpair_readout}
    \For{$k=1,\dots,K$}
        \State Update $\mathrm{Est}^{(k)}$ on $\mathcal{C}\setminus\mathcal{C}^{(k)}$ by gradient descent on \cref{eq:probpair_total} with $y_i^{\mathrm{int}}$ and weights $w_i$
    \EndFor
    \State Aggregate updated holdout predictions to refresh $\{\hat{y}_i^{\mathrm{oof}}\}_{i=1}^{|\mathcal{C}|}$
    \For{$e=1,\dots,E$}
        \State Update $\mathrm{Cor}_e$ on $\mathcal{I}_e$ by gradient descent on \cref{eq:eci_corrector_loss} until convergence
    \EndFor
    \State Refresh $\{y_i^{\mathrm{cor}},\mathrm{gap}_i,w_i\}_{i=1}^{|\mathcal{C}|}$ by \cref{eq:eci_int_screening}
    \State Set $n_i \gets n_0 w_i$ and refresh Bayesian-confidence supervision $\{y_i^{\mathrm{BC}}\}_{i=1}^{|\mathcal{C}|}$ by \cref{eq:eci_int_bc}
    \State Update $\mathrm{Int}$ by gradient descent on \cref{eq:probpair_total} with $y_i^{\mathrm{BC}}$ and weights $w_i$
\EndFor

\Statex
\State \textbf{Stage III: Clustering and unseen-instance assignment}
\State Obtain the final normalized Integrator embeddings $\mathcal{Z}^{\mathrm{int}}\gets\{\mathrm{Norm}(f_{\omega}(x_j))\}_{j=1}^{|\mathcal{X}|}$
\State Apply $\mathrm{Clustering}(\mathcal{Z}^{\mathrm{int}})$ to obtain the final partition $\widehat{S}$
\State Compute cluster centroids $\{\mu_c\}$ from $\mathcal{Z}^{\mathrm{int}}$ and $\widehat{S}$
\For{each unseen instance $\tilde{x}$}
    \State Compute $\tilde{z}^{\mathrm{int}}\gets \mathrm{Norm}(f_{\omega}(\tilde{x}))$
    \State Assign $\tilde{x}$ to the nearest centroid in $\{\mu_c\}$
\EndFor

\end{algorithmic}
\end{algorithm}

\subsection{Computational complexity analysis}
\label{app:complexity}

For an Estimator/Integrator backbone $\big(f,g,m,T\big)$, let $c_f$ and $c_g$ denote the per-instance training costs of the encoder $f$ and decoder $g$, respectively, and let $c_{m,T}=O(1)$ denote the cost of updating the scalar readout parameters $(m,T)$.
We write $c_{\mathrm{pair}}=O(c_f+c_{m,T})$ for the amortized per-constraint cost of the ProbPair-style pairwise term, and $c_{\mathrm{rec}}=O(c_f+c_g)$ for the per-instance reconstruction cost.
Thus, one backbone epoch over constraints $\mathcal{C}$ and instances $\mathcal{X}$ costs
$O\!\left(|\mathcal{C}|c_{\mathrm{pair}}+|\mathcal{X}|c_{\mathrm{rec}}\right)$.
In particular, the pairwise term scales with the number of constraints, while the reconstruction term scales linearly with $|\mathcal{X}|$ as the usual autoencoder regularization cost.
Notably, each Corrector $\mathrm{Cor}_e=q_{\nu_e}$ is a small expert-specific MLP, rather than an encoder--decoder backbone, and its capacity is intentionally kept more limited than that of the Estimator/Integrator networks so that it learns only shared structured expert-specific patterns rather than arbitrary fitting.
We write $c_{\mathrm{cor}}$ for its amortized per-constraint update cost.

Under this notation, with $N_{\mathrm{wu}}$ warm-up epochs and $N_{\mathrm{ep}}$ total epochs, the warm-up stage costs
$
O\!\left(
N_{\mathrm{wu}}(K+1)
\big(|\mathcal{C}|c_{\mathrm{pair}}+|\mathcal{X}|c_{\mathrm{rec}}\big)
+
J_{\mathrm{cor}}|\mathcal{C}|c_{\mathrm{cor}}
\right),
$
where $J_{\mathrm{cor}}$ is the average number of optimization passes used when fitting the Correctors.
Each of the $T_{\mathrm{ref}}:=N_{\mathrm{ep}}-N_{\mathrm{wu}}$ subsequent refinement rounds costs
$
O\!\left(
(K+1)
\big(|\mathcal{C}|c_{\mathrm{pair}}+|\mathcal{X}|c_{\mathrm{rec}}\big)
+
J_{\mathrm{cor}}|\mathcal{C}|c_{\mathrm{cor}}
\right).
$
The total training complexity is
\[
O\!\left(
\big(N_{\mathrm{wu}}+T_{\mathrm{ref}}\big)(K+1)
\big(|\mathcal{C}|c_{\mathrm{pair}}+|\mathcal{X}|c_{\mathrm{rec}}\big)
+
\big(1+T_{\mathrm{ref}}\big)J_{\mathrm{cor}}|\mathcal{C}|c_{\mathrm{cor}}
\right).
\]
The dominant overhead comes from the $K$-fold cross-fitting of the Estimators, namely the repeated backbone updates scaled by $(K+1)\big(|\mathcal{C}|c_{\mathrm{pair}}+|\mathcal{X}|c_{\mathrm{rec}}\big)$.
This overhead is justified by the need for out-of-fold beliefs: the Estimators are introduced not as a redundant ensemble, but to provide model-side references decoupled from the specific constraints they are later used to assess, thereby reducing self-confirmation and overfitting to noisy supervision.
This use of held-out predictions is analogous in spirit to debiased/double machine learning \cite{DML}: both reduce in-sample contamination, although ECI-PP uses them to obtain cleaner belief proxies under noisy supervision rather than to pursue semiparametric efficiency.
By contrast, the expert-specific correction stage does \emph{not} scale as $O(E|\mathcal{C}|)$, because each constraint is routed only to its corresponding expert module and $\sum_{e=1}^{E}|\mathcal{I}_e|=|\mathcal{C}|$. 
Thus, the per-expert Correctors add only a linear-in-$|\mathcal{C}|$ refinement cost, which is largely unavoidable under heterogeneous expert supervision.

More broadly, the practical cost profile of ECI-PP should be interpreted together with its training protocol: it does not require an additional expert-aware wrapper in multi-expert settings (see Appendix~\ref{app:protocol}), and unlike methods such as VolMaxDCC, its performance is not tightly tied to an extensive hyperparameter-search stage.
Meanwhile, Appendix~\ref{app:no_pretraining_results} shows that ECI-PP is less dependent on unsupervised pre-training, which can be advantageous in resource-limited settings, and an overall training-time comparison is reported in Appendix~\ref{app:learning_efficiency}. 
A discussion of the choice of $K$ is provided in Appendix~\ref{app:sensitivity_ablation_results}.

\section{Details of experimental settings}
\label{app:experimental_settings}

We provide supplementary information for the experimental setup in \cref{sec:exp_settings}, including the benchmark datasets, compared methods, constraint-generation procedure, experimental protocol, and implementation details.

\subsection{Datasets}
\label{app:datasets}

We use eight benchmark datasets covering image and text clustering tasks, with different numbers of classes and class-balance properties. Their details are as follows.

\noindent\textbf{CIFAR100-20}\footnote{CIFAR100 webpage: \url{https://www.cs.toronto.edu/~kriz/cifar.html}} \cite{CIFAR}: 
A 20-class version of CIFAR100 obtained by using the 20 superclasses as clustering targets.
The original CIFAR100 dataset contains 60,000 real-world $32\times32$ color images from 100 fine-grained classes, grouped into 20 superclasses.
In our experiments, the 20 superclasses are treated as ground-truth clusters, with 3,000 images per superclass.

\noindent\textbf{CIFAR10}\footnote{CIFAR10 webpage: \url{https://www.cs.toronto.edu/~kriz/cifar.html}} \cite{CIFAR}: 
A natural-image dataset containing 60,000 $32\times32$ color images from 10 object categories, with 6,000 images per category.

\noindent\textbf{FMNIST}\footnote{FMNIST repository: \url{https://github.com/zalandoresearch/fashion-mnist}} \cite{FMNIST}: 
A fashion-product image dataset containing 70,000 grayscale $28\times28$ images from 10 categories.
The official split contains 60,000 training images and 10,000 test images.

\noindent\textbf{ImageNet10}\footnote{ImageNet webpage: \url{https://image-net.org/}} \cite{Imagenet10}: 
A 10-class subset of ImageNet \cite{imagenet}, containing 13,000 color images in total, with 1,300 images per class.

\noindent\textbf{MNIST}\footnote{MNIST webpage: \url{http://yann.lecun.com/exdb/mnist/}} \cite{MNIST}: 
A handwritten-digit dataset containing 70,000 grayscale $28\times28$ images from 10 digit classes.
The official split contains 60,000 training images and 10,000 test images.

\noindent\textbf{Reuters subset}\footnote{Preprocessed Reuters subset: \url{https://github.com/piiswrong/dec}} \cite{DEC}: 
A text benchmark derived from the RCV1 corpus\footnote{RCV1 webpage: \url{https://trec.nist.gov/data/reuters/reuters.html}}, using tf--idf features over the 2,000 most frequent words.
It contains four root categories, \emph{Corporate/Industrial} (CCAT), \emph{Economics} (ECAT), \emph{Government/Social} (GCAT), and \emph{Markets} (MCAT), treated as ground-truth clusters.
The four categories contain 4,840, 3,470, 2,673, and 1,017 samples, respectively, giving a mildly imbalanced benchmark.
The official split contains 10,000 training samples and 2,000 test samples.

\noindent\textbf{STL10}\footnote{STL10 webpage: \url{https://cs.stanford.edu/~acoates/stl10/}} \cite{STL10}: 
An image dataset containing 13,000 $96\times96$ color images from 10 object categories, with 1,300 images per category.

\noindent\textbf{RCV1-10}\footnote{Preprocessed RCV1-10 subset released with \cite{SpherePair}: \url{https://github.com/spherepaircc/SpherePairCC}} \cite{SpherePair}: 
An imbalanced 10-category subset of RCV1 constructed from single-label articles in categories C14, C18, C313, C42, E21, E311, GDEF, GODD, GWELF, and M13.
It contains 177,669 documents represented by tf--idf features over the 2,000 most frequent word stems.
The class distribution is highly skewed, ranging from 903 documents in GWELF to 53,127 documents in M13, making it a benchmark for evaluating clustering under severe class imbalance.

These benchmarks follow common evaluation practice in deep clustering and constrained clustering: MNIST, FMNIST, Reuters subset, CIFAR10, ImageNet10, STL10, and CIFAR100-20 have been widely used in prior deep clustering/DCC studies \cite{DEC,IDEC,SDEC,CIDEC1,VolMaxDCC,SpherePair}, while RCV1-10 follows the imbalanced text benchmark introduced in \cite{SpherePair}.
For MNIST, FMNIST, and Reuters subset, we use the official train/test splits, yielding 60,000/10,000, 60,000/10,000, and 10,000/2,000 training/test samples, respectively.
For CIFAR100-20, CIFAR10, ImageNet10, STL10, and RCV1-10, we randomly split each dataset into $80\%$ training and $20\%$ testing, yielding 48,000/12,000, 48,000/12,000, 10,400/2,600, 10,400/2,600, and 142,135/35,534 training/test samples, respectively.
All probabilistic constraints for training are generated from the training split only.

\subsection{Compared methods}
\label{app:compared_methods}

We summarize the compared methods and their use under probabilistic pairwise supervision. 
Methods tied to hard binary constraints or discrete signed priors are not included, as they do not natively operate on the real-valued probabilistic supervision considered in UPCC.

\paragraph{VanillaDCC.}
VanillaDCC \cite{MCL} is an end-to-end DCC method based on the Meta Classification Likelihood (MCL) loss.
Given the soft cluster-assignment vector $\boldsymbol{q}_j=(q_{j1},\dots,q_{jC})$ for each instance $x_j$, the pairwise co-clustering probability of a constrained pair is
$P^{\mathrm{co}}_i=\boldsymbol{q}_{a_i}\boldsymbol{q}_{b_i}^{\top}$.
The MCL loss is
\[
\mathcal{L}_{\mathrm{MCL}}
=
-\frac{1}{|\mathcal{C}|}
\sum_{i=1}^{|\mathcal{C}|}
\left[
y_i\log P^{\mathrm{co}}_i
+
(1-y_i)\log(1-P^{\mathrm{co}}_i)
\right].
\]
Although originally used for binary constraints, this logistic/BCE-form objective can directly take real-valued targets $y_i\in[0,1]$, so VanillaDCC serves as a simple end-to-end DCC baseline under probabilistic supervision.

\paragraph{VolMaxDCC.}
VolMaxDCC \cite{VolMaxDCC} extends MCL by introducing an explicit noise-modeling mechanism through a confusion-adjusted co-clustering probability.
Concretely, it replaces $P^{\mathrm{co}}_i$ with
$P_i^{\prime\mathrm{co}}=\boldsymbol{q}_{a_i}B\boldsymbol{q}_{b_i}^{\top}$,
where $B$ is derived from a learnable confusion matrix, and adds a volume maximization term:
\[
\mathcal{L}_{\mathrm{VolMax}}
=
-\frac{1}{|\mathcal{C}|}
\sum_{i=1}^{|\mathcal{C}|}
\left[
y_i\log P_i^{\prime\mathrm{co}}
+
(1-y_i)\log(1-P_i^{\prime\mathrm{co}})
\right]
-
\lambda\log\det(\mathcal{Q}^{\top}\mathcal{Q}),
\]
where $\mathcal{Q}$ is the assignment matrix.
The first term allows the model to account for systematic annotation confusion, while the volume term encourages separated and distinguishable cluster assignments.
We therefore include VolMaxDCC as the compared baseline with explicit noise modeling.
At the same time, its design is still rooted in noisy binary supervision and assignment-level separation, whereas UPCC requires preserving a continuum of probabilistic pairwise relations rather than only correcting corrupted hard relations.

\paragraph{CIDEC.}
CIDEC \cite{CIDEC1} extends the DEC/IDEC deep embedding clustering framework \cite{DEC,IDEC} by incorporating pairwise constraints through the MCL loss.
It uses an autoencoder to obtain latent embeddings, initializes $C$ cluster anchors by K-means, and computes a soft assignment vector $\boldsymbol{q}_j=(q_{j1},\dots,q_{jC})$ for each instance.
Its unsupervised clustering term follows the DEC-style KL objective
\[
\mathcal{L}_{\mathrm{DEC}}
=
\sum_{j=1}^{|\mathcal{X}|}\sum_{c=1}^{C}
p_{jc}\log\frac{p_{jc}}{q_{jc}},
\]
where $\boldsymbol{p}_j$ is the sharpened target distribution derived from $\boldsymbol{q}_j$.
CIDEC combines this clustering loss with a reconstruction loss and the MCL pairwise constraint loss.
In our setting, its MCL component uses the real-valued $y_i$ directly through the same BCE-form target as VanillaDCC, making CIDEC a strong end-to-end baseline compatible with probabilistic supervision.

\paragraph{SpherePair.}
SpherePair \cite{SpherePair} is a deep constraint embedding method that learns normalized angular representations with an autoencoder.
For binary hard constraints, it defines an angular BCE-form loss
\[
\mathcal{L}_{\mathrm{ang}}
=
-\frac{1}{|\mathcal{C}|}
\sum_{i=1}^{|\mathcal{C}|}
\Big[
y_i\log s^{+}_{a_i b_i}
+
(1-y_i)\log(1-s^{-}_{\mathrm{hard},a_i b_i})
\Big],
\]
where $s^{+}_{a_i b_i}$ is the must-link similarity and $s^{-}_{\mathrm{hard},a_i b_i}$ is the cannot-link transformed similarity:
\[
s^{+}_{a_i b_i}
=
\tfrac{1}{2}\big(\cos(\theta_{\boldsymbol{z}_{a_i},\boldsymbol{z}_{b_i}})+1\big),
\qquad
s^{-}_{\mathrm{hard},a_i b_i}
=
\tfrac{1}{2}\big(\cos(\min\{\omega\theta_{\boldsymbol{z}_{a_i},\boldsymbol{z}_{b_i}},\pi\})+1\big).
\]
Here $\omega$ controls the negative zone of angular size $\pi/\omega$; the SpherePair theory fixes $\omega=2$, giving the $\pi/2$ negative-zone boundary and the corresponding conflict-free equidistant geometry for hard must-link/cannot-link constraints.
Since this original formulation is defined for binary relation types, applying SpherePair to UPCC requires a continuous counterpart for intermediate labels.
We therefore keep the same $\omega=2$ angular scale and use
\[
s^{-}_{\mathrm{soft},a_i b_i}
=
\tfrac{1}{2}\big(\cos(2\theta_{\boldsymbol{z}_{a_i},\boldsymbol{z}_{b_i}})+1\big)
=
\cos^2(\theta_{\boldsymbol{z}_{a_i},\boldsymbol{z}_{b_i}})
\]
in the BCE-form loss with each real-valued $y_i\in[0,1]$:
\[
\mathcal{L}_{\mathrm{ang}}^{\mathrm{soft}}
=
-\frac{1}{|\mathcal{C}|}
\sum_{i=1}^{|\mathcal{C}|}
\Big[
y_i\log s^{+}_{a_i b_i}
+
(1-y_i)\log(1-s^{-}_{\mathrm{soft},a_i b_i})
\Big].
\]
Under this soft extension, the attractive endpoint $y_i=1$ is minimized at $\theta_{\boldsymbol{z}_{a_i},\boldsymbol{z}_{b_i}}=0$, while the fully repulsive endpoint $y_i=0$ is minimized at $\theta_{\boldsymbol{z}_{a_i},\boldsymbol{z}_{b_i}}=\pi/2$, matching the orthogonal negative-zone boundary selected by the original $\omega=2$ SpherePair geometry.
For intermediate labels, the preferred pairwise angle varies continuously between these two endpoints; in the single-pair case, the optimum satisfies $\cos\theta_{\boldsymbol{z}_{a_i},\boldsymbol{z}_{b_i}}=y_i/(2-y_i)$.
SpherePair also uses the normalized-embedding reconstruction regularizer
$\mathcal{L}_{\mathrm{rec}}=\frac{1}{|\mathcal{X}|}\sum_j\|x_j-\hat{x}_j\|_2^2$ with
$\hat{x}_j=g(\mathrm{Norm}(f(x_j)))$, which is also adopted by our ProbPair, Weighted ProbPair, and ECI-PP.
Thus, the SpherePair baseline in our experiments is the natural real-valued angular extension of the original hard-constraint method, while its formal geometric guarantee remains tied to the binary setting.

\paragraph{ProbPair and Weighted ProbPair.}
ProbPair is the direct probabilistic pairwise baseline introduced in \cref{sec:probpair}.
It uses the probabilistic angular readout in \cref{eq:probpair_readout} and minimizes the reconstruction-regularized objective $\mathcal{L}$ in \cref{eq:probpair_total}, without our Estimator--Corrector--Integrator refinement.
Weighted ProbPair further applies the information-based constraint weight $\kappa_i$ in \cref{eq:eci_infoweight} to the pairwise term:
\[
\mathcal{L}_{\mathrm{WPP}}
=
-\frac{1}{|\mathcal{C}|}
\sum_{i=1}^{|\mathcal{C}|}
\kappa_i
\left[
y_i\log \hat{y}_{a_i b_i}
+
(1-y_i)\log(1-\hat{y}_{a_i b_i})
\right]
+
\lambda_{\mathrm{rec}}\mathcal{L}_{\mathrm{rec}}.
\]
Thus, ProbPair isolates the base probabilistic angular formulation, while Weighted ProbPair isolates the effect of decisiveness-based weighting.
Comparing Weighted ProbPair with ECI-PP further tests the contribution of the Estimator--Corrector--Integrator pipeline beyond static reweighting.

\subsection{Constraint generation}
\label{app:constraint_generation}


Here, we detail how probabilistic pairwise constraints are generated in our experiments. 
We use trained expert classifiers as simulated judgment channels to produce probabilistic pairwise judgments and then apply the corruption channel in \cref{sec:observation}, thereby instantiating the UPCC observation process for constraint generation.

\paragraph{Expert classifiers.}
For each dataset, simulated experts are trained on controlled labeled subsets of the training split and then used to generate predictive class distributions for queried pairs.
The limited labeled-subset construction makes these classifiers expert-dependent judgment mechanisms, whose soft predictive distributions reflect both systematic bias and residual uncertainty.
The subset selection rules for single- and multi-expert settings are described below, while the expert network architecture and optimization details are given in Appendix~\ref{app:implementation}.

\paragraph{Single-expert settings.}
In the single-expert setting, one expert classifier is trained per dataset.
We use three expert-quality levels, denoted by {\tt lv0.1}, {\tt lv0.01}, and {\tt lv0.001}.
For {\tt lv}$r$, the expert is trained using an $r$ fraction of the labeled training samples from each class.
The sampling is class-stratified: for every class, the number of selected samples is $\lfloor r n_c\rfloor$, where $n_c$ is the number of available training samples in class $c$, with at least one sample retained for each class.
Thus, smaller values of $r$ produce weaker experts and hence more epistemically distorted probabilistic judgments.

\paragraph{Multi-expert settings.}
In the multi-expert setting, we construct a group of complementary experts.
The three settings {\tt multi2}, {\tt multi3}, and {\tt multi10} use $2$, $3$, and $10$ experts, respectively.
Each expert has a set of familiar classes and a set of unfamiliar classes.
For all multi-expert settings, familiar classes use a high labeled-data fraction $h=0.1$, while unfamiliar classes use a low labeled-data fraction $l=0.0001$.
The class-level training fraction for expert $e$ is therefore
\[
\rho_{e,c}
=
\begin{cases}
h, & c\in \mathcal{F}_e,\\
l, & c\in \mathcal{U}_e,
\end{cases}
\]
where $\mathcal{F}_e$ and $\mathcal{U}_e$ denote the familiar and unfamiliar class sets of expert $e$.

The unfamiliar classes are assigned to make the experts complementary.
Let the dataset contain $C$ classes under the fixed label ordering used in expert generation.
When the number of experts $E$ satisfies $E\le C$, the class list is split into $E$ consecutive blocks that are as balanced as possible.
Expert $e$ takes the $e$-th block as its unfamiliar classes and treats all remaining classes as familiar.
Equivalently, if $C=qE+r$ with $0\le r<E$, then the first $r$ experts have $q+1$ unfamiliar classes and the remaining experts have $q$ unfamiliar classes.
For example, on a 10-class dataset, {\tt multi2} uses unfamiliar-class counts $(5,5)$, {\tt multi3} uses $(4,3,3)$, and {\tt multi10} gives each expert exactly one unfamiliar class.
For CIFAR100-20, the corresponding counts are $(10,10)$, $(7,7,6)$, and $(2,\ldots,2)$.
For Reuters subset with four classes, {\tt multi2} gives $(2,2)$ and {\tt multi3} gives $(2,1,1)$.
If $E>C$, the experts are divided into full groups of size $C$, where each expert in a group is unfamiliar with one distinct class; any remaining experts form a partial group whose unfamiliar classes are sampled without replacement from the class list.
This case occurs for Reuters subset under {\tt multi10}.
The resulting unfamiliar-class counts are summarized in \cref{tab:multi_expert_blind_counts}.

\begin{table}[t]
\centering
\caption{Unfamiliar-class counts per expert under the multi-expert settings. These counts describe the blind-spot distribution used to construct complementary experts.}
\label{tab:multi_expert_blind_counts}
\small
\setlength{\tabcolsep}{5pt}
\begin{tabular}{lccc}
\toprule
Dataset type & {\tt multi2} & {\tt multi3} & {\tt multi10} \\
\midrule
20-class datasets & $(10,10)$ & $(7,7,6)$ & $(2,2,2,2,2,2,2,2,2,2)$ \\
10-class datasets & $(5,5)$ & $(4,3,3)$ & $(1,1,1,1,1,1,1,1,1,1)$ \\
4-class datasets  & $(2,2)$ & $(2,1,1)$ & $(1,1,1,1,1,1,1,1,1,1)$ \\
\bottomrule
\end{tabular}
\end{table}

\paragraph{Pair sampling and expert assignment.}
All queried pairs are sampled randomly and uniformly from the training split.
In the single-expert setting, every sampled pair is assigned to the single available expert.
In the multi-expert setting, constraints are generated separately for each expert, and the total constraint budget is distributed evenly across experts unless otherwise specified.
For a queried pair $(x_a,x_b)$ assigned to expert $e$, the trained expert outputs predictive class distributions $\boldsymbol{p}^{e}_a$ and $\boldsymbol{p}^{e}_b$.
The clean expert judgment is computed as
\[
y^{\mathrm{jud}}_{e,ab}
=
\langle \boldsymbol{p}^{e}_a,\boldsymbol{p}^{e}_b\rangle
=
\sum_{c=1}^{C} p^{e}_{a,c}p^{e}_{b,c}
\in[0,1].
\]
This value is high for similar expert-assigned class distributions, low when the expert separates the samples, and intermediate under uncertainty or overlapping class mass.

\paragraph{Stochastic corruption.}
After the clean judgment is obtained, the recorded constraint value is produced through the corruption channel in \cref{sec:observation}.
For each pair, an independent corruption indicator is drawn as
$c_i\sim\mathrm{Bernoulli}(\pi)$, where $\pi$ is the corruption probability used by the corresponding experiment.
If $c_i=0$, the recorded label is the clean expert judgment:
\[
y_i = y^{\mathrm{jud}}_{e_i,a_i b_i}.
\]
If $c_i=1$, the clean judgment is replaced by an unstructured random value:
\[
y_i \sim \mathrm{Uniform}(0,1).
\]
Thus, the final constraints combine three factors: data-dependent soft pairwise ambiguity expressed through expert predictive distributions, structured expert-dependent epistemic distortion from class-specific familiarity, and unstructured stochastic corruption from the uniform replacement channel.

\subsection{Experimental protocol}
\label{app:protocol}

Given the dataset details in Appendix~\ref{app:datasets} and the constraint-generation procedure in Appendix~\ref{app:constraint_generation}, we specify the experimental protocol for comparative evaluations, expert-aware ensemble references, held-out diagnostics, and sensitivity/ablation studies.

\paragraph{Comparative evaluations.}
For clustering-performance comparisons, we report \textit{Accuracy} (ACC),\textit{ Normalized Mutual Information}
(NMI), and \textit{Adjusted Rand Index} (ARI) over five trials.
For embedding-based methods, including SpherePair, ProbPair, Weighted ProbPair, and ECI-PP, the final partition is obtained by applying K-means to the learned representations, while end-to-end DCC baselines use their native clustering outputs.
For the full main comparison, we use {\tt lv0.01} and {\tt multi3} with corruption probability $0.3$ and total constraint budgets $3$k/$6$k/$9$k; under {\tt multi3}, these correspond to $1$k/$2$k/$3$k constraints per expert.
Robustness comparisons use a total budget of $9$k constraints and vary one supervision factor at a time.
For single-expert robustness, we use {\tt lv0.01} and corruption probability $0.3$ as the reference setting: expert quality is varied over {\tt lv0.1}/{\tt lv0.01}/{\tt lv0.001} while fixing corruption probability at $0.3$, and corruption probability is varied over $0.1/0.3/0.5$ while fixing the expert level at {\tt lv0.01}.
For multi-expert robustness, we compare {\tt multi2}, {\tt multi3}, and {\tt multi10} specified in Appendix~\ref{app:constraint_generation}; under the $9$k total budget, these correspond to $4{,}500$, $3{,}000$, and $900$ constraints per expert, respectively.
Moving from {\tt multi2} to {\tt multi10} increases both the number of experts and the average expert accuracy (\cref{tab:multi_expert_avg_acc}), because the complementary blind-spot rule assigns fewer unfamiliar classes to each expert (see \cref{tab:multi_expert_blind_counts}).

\paragraph{Expert-aware ensemble references.}
Under multi-expert supervision, ECI-PP uses expert identities through its expert-specific Correctors, whereas expert-agnostic baselines treat the merged constraints as a single supervision pool.
As an additional reference, we also include expert-aware ensemble variants for baselines for multi-expert comparisons.
For a baseline trained under an $E$-expert setting, we instantiate $E$ members, each associated with one expert identity.
All members are trained on the full merged constraint set $\mathcal{C}=\{(a_i,b_i,e_i,y_i)\}_{i=1}^{|\mathcal{C}|}$, but member $e$ applies the multiplier
\[
\alpha_i^{(e)}
=
\begin{cases}
1, & e_i=e,\\
\alpha, & e_i\neq e,
\end{cases}
\qquad
\alpha\in[0,1],
\]
to the pairwise loss term of each constraint.
Thus, \(\alpha=0\) makes each member use only constraints from its associated expert, while \(\alpha=1\) recovers the expert-agnostic merged-constraint training for every member.
More generally, if the pairwise part of a baseline objective is decomposed into per-constraint terms \(\ell_i^{\mathrm{pair}}\), member \(e\) uses
\[
\mathcal{L}_{\mathcal{B},\mathrm{pair}}^{(e)}
=
\frac{1}{|\mathcal{C}|}
\sum_{i=1}^{|\mathcal{C}|}
\alpha_i^{(e)}\ell_i^{\mathrm{pair}},
\]
while all non-pairwise components of the baseline objective are kept unchanged.
After training, each member produces a hard partition \(\widehat{S}^{(e)}\).
We aggregate the members through the co-association matrix
\[
A^{\mathrm{ens}}_{ab}
=
\frac{1}{E}
\sum_{e=1}^{E}
\mathbb{I}\!\left[\widehat{S}^{(e)}(x_a)=\widehat{S}^{(e)}(x_b)\right],
\]
then form \(D^{\mathrm{ens}}_{ab}=1-A^{\mathrm{ens}}_{ab}\) and apply hierarchical clustering to obtain the final ensemble partition.
The non-target expert weight \(\alpha\) is selected separately for each dataset--baseline--multi-expert configuration using the reserved \(1{,}000\)-sample validation split: we search \(\alpha\in\{0,0.01,0.05,0.1,0.5,1\}\), run each candidate three times, and choose the value with the highest mean validation ACC before test evaluation.
These ensemble variants give expert-agnostic baselines access to expert identities, but they also alter the training and aggregation structure through multiple expert-weighted members; they are therefore interpreted as auxiliary expert-aware references rather than fully symmetric replacements for the native baselines.

\paragraph{Held-out diagnostics.}
The held-out diagnostics examine whether the internal quantities of ECI-PP show trends consistent with their intended roles: 
Estimator-side beliefs, corrected relations, and integrated relations are compared with an external pairwise reference, and the post-correction gap is tested as a residual-discrepancy signal.
To this end, we construct a diagnostic constraint set $\mathcal{C}^{\mathrm{diag}}$ where $|\mathcal{C}^{\mathrm{diag}}|=1000$ from the test split.
These constraints are sample-disjoint from the training constraints and are used only for diagnosis, not for training, early stopping, or hyperparameter selection.
For a held-out pair $i=(a_i,b_i)$ with ground-truth class labels $t_{a_i}$ and $t_{b_i}$, we define the external hard oracle
\[
o_i^\star=\mathbb{I}[t_{a_i}=t_{b_i}],
\]
which serves only as a diagnostic reference and is \textbf{not identified with the canonical aleatoric relation $R_{a_ib_i}^\star$ in \cref{sec:problem_formulation}}.
On $\mathcal{C}^{\mathrm{diag}}$, which is not assigned to any cross-fitting fold, the Estimator diagnostic belief is computed by averaging the predictions of all $K$ fold-specific Estimators; we still denote this held-out Estimator belief by $\hat{y}^{\mathrm{oof}}_i$ for notational consistency.
We then evaluate $\hat{y}^{\mathrm{oof}}_i$, the corrected relation $y^{\mathrm{cor}}_i$, and the Integrator belief $y^{\mathrm{int}}_i$ against this external oracle using Brier scores.
For any held-out score $v_i\in[0,1]$ and index set $\mathcal{I}$, we use
\[
\mathrm{Brier}(v;\mathcal{I})
=
\frac{1}{|\mathcal{I}|}
\sum_{i\in\mathcal{I}}(v_i-o_i^\star)^2.
\]
The Estimator and Integrator diagnostics are computed on all diagnostic constraints, whereas Corrector-oracle alignment is evaluated on the clean subset
$\mathcal{I}_{\mathrm{clean}}=\{i\in\mathcal{C}^{\mathrm{diag}}:c_i=0\}$.
We also compute the clean post-correction discrepancy
\[
\mathrm{gap}_{i,\mathrm{clean}}
=
\left|
\operatorname{softclip}_{\xi}(\hat{y}_i^{\mathrm{oof}})
-
y_i^{\mathrm{cor}}
\right|,
\qquad i\in\mathcal{I}_{\mathrm{clean}},
\]
to measure the residual disagreement between the Estimator belief and the corrected relation on uncorrupted held-out judgments.
For corruption-screening diagnostics, the corresponding gap score is computed on all diagnostic constraints and used to rank corrupted records $(c_i=1)$ against clean records $(c_i=0)$, from which we report $\mathrm{AUC}_{\mathrm{corrupt}}$ and $\mathrm{AP}_{\mathrm{corrupt}}$.
Because this gap captures unexplained discrepancy after correction, it may reflect both stochastic corruption and residual epistemic uncertainty, rather than stochastic corruption alone.
We therefore also include an oracle-supervision diagnostic, where the clean relation is given by $o_i^\star$ before applying the same corruption channel; this provides a reference case in which expert-dependent epistemic distortion is removed.

\paragraph{Sensitivity and ablation studies.}
Sensitivity and ablation studies are conducted for ECI-PP under the default single-expert setting, namely {\tt lv0.01} with corruption probability $0.3$.
Each study is repeated over five trials and varies one factor at a time while keeping the remaining hyperparameters and the constraint-generation protocol fixed.
ProbPair and Weighted ProbPair already provide direct ablation-style references for the base probabilistic pairwise objective and the information-based weighting strategy, as described in Appendix~\ref{app:compared_methods}.
Here, we further examine ECI-PP-specific design and sensitivity factors across its main components: the number of estimator folds $K$ and the use of the information-based warm-up score $\kappa_i$ in \cref{eq:eci_infoweight}; the Corrector regularization weight $\lambda_{\mathrm{cor}}$ and its capacity; the reliability-screening parameters $\gamma$, $n_0$, and soft-clipping sharpness $\xi$; and the reconstruction weight $\lambda_{\mathrm{rec}}$.

\subsection{Implementation}
\label{app:implementation}

We provide implementation details\footnote{\textbf{Our source code is available on GitHub, and the URL is included in the supplementary materials.}} for the experimental pipeline, including the computing environment, data preprocessing, expert classifiers and generated constraints, model backbones, pretraining, optimization, and hyperparameter settings.

\paragraph{Software and hardware.}
All model training code is implemented in Python~3.7 and PyTorch~1.5.1\footnote{PyTorch~1.5.1: \url{https://github.com/pytorch/pytorch/releases/tag/v1.5.1}}.
For methods whose final prediction is obtained by K-means on learned representations, we use the K-means implementation in scikit-learn\footnote{Scikit-learn webpage: \url{https://scikit-learn.org/0.19/documentation.html}}.
For the hierarchical clustering step used by the expert-aware ensemble references in Appendix~\ref{app:protocol}, we use the \texttt{fastcluster} package\footnote{fastcluster library: \url{https://pypi.org/project/fastcluster/}}.
Each training run is executed on a single NVIDIA A100 40GB or NVIDIA H200 141GB GPU.

\paragraph{Data preprocessing.}
We follow the same preprocessing pipeline as \cite{SpherePair}.
For Reuters subset and RCV1-10, we directly use the preprocessed tf--idf representations over the $2{,}000$ most frequent words or word stems, as described in Appendix~\ref{app:datasets}.
For MNIST and FMNIST, the $28\times28$ grayscale images are flattened into $784$-dimensional vectors, following the standard fully connected input format used in prior deep clustering/DCC studies \cite{SDEC,CIDEC1,DCGMM,SpherePair}.
For CIFAR10, CIFAR100-20, STL10, and ImageNet10, we adopt the unsupervised feature extraction strategy in \cite{ContrastiveClustering}: a ResNet-34 model \cite{resnet} is trained for $1{,}000$ epochs, and the resulting $512$-dimensional representations are used as input features.
This image preprocessing follows the setting used in DCC studies \cite{VolMaxDCC,SpherePair} and keeps the inputs compatible with fully connected architectures across datasets.
The same preprocessed inputs are used for expert classifier training and for all compared clustering methods.

\begin{table}[t]
\centering
\caption{Test accuracy (\%) of single-expert classifiers used for constraint generation.}
\label{tab:single_expert_acc}
\small
\setlength{\tabcolsep}{5pt}
\begin{tabular}{lccc}
\toprule
Dataset & {\tt lv0.1} & {\tt lv0.01} & {\tt lv0.001} \\
\midrule
CIFAR100-20    & 73.08 & 65.23 & 46.58 \\
CIFAR10        & 91.01 & 88.49 & 73.83 \\
FMNIST         & 84.45 & 74.77 & 59.83 \\
ImageNet10     & 96.27 & 92.62 & 87.62 \\
MNIST          & 93.67 & 88.76 & 65.20 \\
Reuters        & 92.90 & 83.60 & 49.75 \\
STL10          & 90.04 & 84.31 & 72.77 \\
RCV1-10        & 94.26 & 90.47 & 77.62 \\
\bottomrule
\end{tabular}
\end{table}

\begin{table}[t]
\centering
\caption{Average test accuracy (\%) of multi-expert groups used for constraint generation. Each value is the arithmetic mean over experts in the corresponding group.}
\label{tab:multi_expert_avg_acc}
\small
\setlength{\tabcolsep}{5pt}
\begin{tabular}{lccc}
\toprule
Dataset & {\tt multi2} & {\tt multi3} & {\tt multi10} \\
\midrule
CIFAR100-20    & 41.23 & 51.90 & 67.36 \\
CIFAR10        & 52.72 & 65.94 & 82.74 \\
FMNIST         & 54.57 & 60.68 & 77.64 \\
ImageNet10     & 73.87 & 76.73 & 90.80 \\
MNIST          & 50.45 & 65.89 & 86.13 \\
Reuters        & 47.58 & 63.02 & 71.42 \\
STL10          & 53.98 & 65.67 & 83.22 \\
RCV1-10        & 49.06 & 64.57 & 85.96 \\
\bottomrule
\end{tabular}
\end{table}

\begin{figure}[t]
    \centering
    \includegraphics[width=\linewidth]{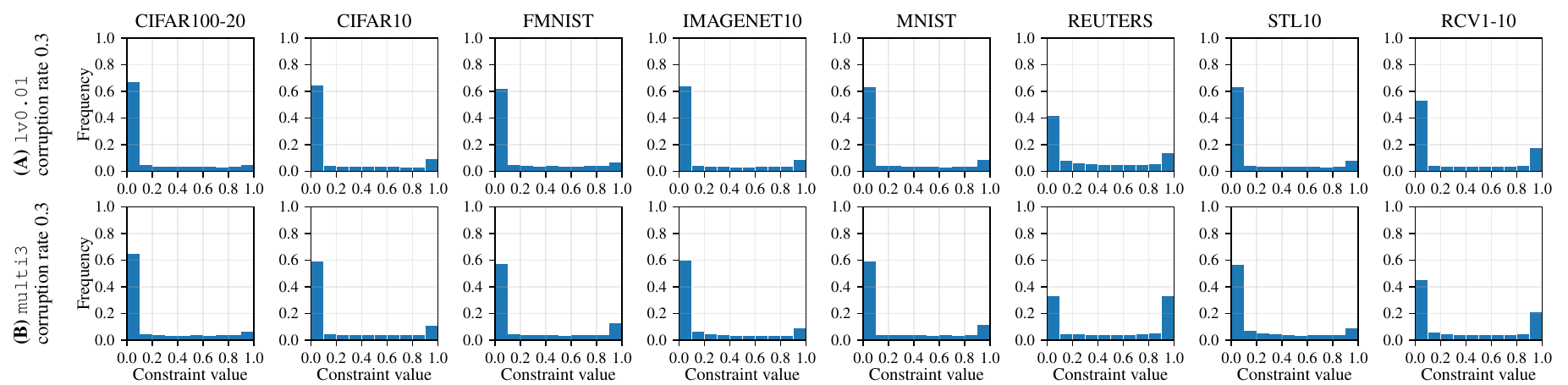}
    \caption{
    Distributions of recorded constraint values for representative generated-supervision settings.
    The top row shows {\tt lv0.01} single-expert supervision and the bottom row shows {\tt multi3} multi-expert supervision, both with corruption probability $0.3$.
    }
    \label{fig:app_constraint_value_dist}
\end{figure}

\paragraph{Expert classifiers and generated constraints.}
The expert classifiers used in Appendix~\ref{app:constraint_generation} are implemented as MLP classifiers with two hidden layers of size $512$--$512$, ReLU activations, dropout rate $0.2$, and a final softmax classification layer.
They are trained with cross-entropy loss using Adam, with learning rate $10^{-3}$, batch size $256$, maximum $100$ epochs, and early stopping patience $10$ based on training accuracy.
The resulting test accuracies of single experts are reported in \cref{tab:single_expert_acc}, and the average test accuracies of multi-expert groups are reported in \cref{tab:multi_expert_avg_acc}.
These accuracies characterize the effective quality of the simulated experts produced by the subset-selection and blind-spot protocols in Appendix~\ref{app:constraint_generation}.
After the corruption channel in \cref{sec:observation}, these expert judgments yield the recorded constraint values used for training.
\cref{fig:app_constraint_value_dist} visualizes their per-dataset distributions for representative {\tt lv0.01} single-expert and {\tt multi3} multi-expert settings, both with corruption probability $0.3$.

\paragraph{Model backbones and pretraining.}
For encoder-based methods, we follow the fully connected autoencoder backbone used in \cite{SDEC,CIDEC1,SpherePair}: the encoder has hidden layers $500$--$500$--$2000$, paired with a symmetric decoder when reconstruction is used.
The embedding dimension is set to $D=10$ for all datasets except CIFAR100-20, where $D=20$, following \cite{SpherePair}.
VanillaDCC and VolMaxDCC use their native MLP classifier architectures with two hidden layers of size $512$--$512$, and do not use autoencoder pretraining.
For pretrain-capable methods, we use the two-stage stacked denoising autoencoder (SDAE) pretraining approach \cite{hinton2006reducing,vincent2008denoising}, following the protocol used in \cite{SDEC,CIDEC1,SpherePair}: hidden layers are first pretrained layer-wise as denoising autoencoders, followed by end-to-end autoencoder fine-tuning on the training split.
Unless otherwise specified, all pretrain-capable methods use this pretrained initialization.
For methods using angular reconstruction (SpherePair, ProbPair, Weighted ProbPair, ECI-PP), decoding is performed from normalized embeddings as in \cite{SpherePair}.

In ECI-PP, both the $K$ Estimator backbones and the Integrator use the same autoencoder architecture described above.
Each Estimator fold has its own encoder--decoder backbone and its own probabilistic readout parameters $(m_k,T_k)$, while the Integrator has its own readout parameters $(m,T)$ in \cref{eq:probpair_readout}.
These scalar readout parameters are parameterized as $m=\tanh(\tilde m)$ and $T=\mathrm{softplus}(\tilde T)$ for numerical stability within the cosine range, with the same parameterization used for each Estimator fold.
When pretraining is used, the same pretrained autoencoder weights initialize the Integrator backbone and all Estimator backbones, while the readout parameters and Correctors are initialized separately.
Unless otherwise specified, each expert-specific Corrector is an MLP with hidden layers $64$--$16$ and a $\tanh$ output, which keeps the predicted probability-space residual within $(-1,1)$.

\paragraph{Optimization and hyperparameters.}
We use the reported or recommended hyperparameters for the baseline methods whenever available.
Below, we summarize the optimization settings and implementation configurations for ECI-PP and all compared baselines.

\begin{itemize}
    \item \textbf{VanillaDCC.}
    VanillaDCC optimizes the MCL loss $\mathcal{L}_{\mathrm{MCL}}$ using Adam with learning rate $10^{-3}$ and batch size $256$.
    Training runs for at most $500$ epochs, with early stopping triggered when the relative change in the soft cluster assignments on the training samples falls below $10^{-3}$ for two consecutive checks.
    This assignment-change stopping rule follows common practice in deep clustering and deep constrained clustering \cite{DEC,IDEC,SDEC,CIDEC1}.

    \item \textbf{VolMaxDCC.}
    For VolMaxDCC, we follow its original optimization and validation-based hyperparameter-search procedure \cite{VolMaxDCC}.
    The learnable matrix $B$ is parameterized elementwise as
    $B_{cc'}=\left(1+\exp(-B'_{cc'})\right)^{-1}$, where $B'_{cc'}$ is initialized to $1$ for $c=c'$ and to $-1$ otherwise.
    Optimization uses SGD with learning rate $0.5$ for the network parameters and $0.1$ for $B'$, with batch size $128$.
    The VolMaxDCC geometric regularization weight is selected from
    $\{0,10^{-1},10^{-2},10^{-3},10^{-4},10^{-5}\}$
    using the reserved validation split; for each dataset and supervision setting, we choose the value with the highest mean validation ACC over three validation runs.
    For multi-expert supervision, a shared selected value is used for the multi-expert settings defined in Appendix~\ref{app:constraint_generation}, where the familiar and unfamiliar labeled-data fractions are fixed as $h=0.1$ and $l=0.0001$.
    The selected values are reported in \cref{tab:volmax_lambda_selection}.

\begin{table}[t]
\centering
\caption{Selected VolMaxDCC geometric regularization weights. Values are selected by validation ACC over three runs for each dataset and supervision setting.}
\label{tab:volmax_lambda_selection}
\small
\setlength{\tabcolsep}{5pt}
\begin{tabular}{lcccc}
\toprule
Dataset & {\tt lv0.1} & {\tt lv0.01} & {\tt lv0.001} & {\tt multi} \\
\midrule
CIFAR100-20    & $10^{-3}$ & $10^{-3}$ & $10^{-1}$ & $10^{-2}$ \\
CIFAR10        & $10^{-3}$ & $10^{-3}$ & $10^{-1}$ & $10^{-2}$ \\
FMNIST         & $10^{-3}$ & $10^{-3}$ & $10^{-2}$ & $10^{-2}$ \\
ImageNet10     & $0$       & $10^{-4}$ & $10^{-1}$ & $10^{-2}$ \\
MNIST          & $10^{-2}$ & $10^{-3}$ & $10^{-2}$ & $10^{-2}$ \\
Reuters        & $10^{-2}$ & $0$       & $10^{-1}$ & $10^{-1}$ \\
STL10          & $10^{-3}$ & $10^{-3}$ & $10^{-2}$ & $10^{-1}$ \\
RCV1-10        & $10^{-4}$ & $10^{-4}$ & $10^{-5}$ & $10^{-3}$ \\
\bottomrule
\end{tabular}
\end{table}

    \item \textbf{CIDEC.}
    For CIDEC, we follow the authors' recommended settings: the reconstruction/clustering trade-off is set to $\lambda_1=1$, and the MCL constraint-balancing parameter is set to $\lambda_2=0.1$.
    The $C$ cluster anchors are initialized by K-means.
    Optimization uses Adam with learning rate $10^{-3}$ and batch size $256$.
    Training proceeds for at most $500$ epochs, with early stopping applied when the relative change in the soft assignments falls below $10^{-3}$ over consecutive checks.


    \item \textbf{SpherePair.}
    For SpherePair, the reconstruction weight is set to $0.02$, matching the setting used in \cite{SpherePair}.
    In the original binary formulation, the negative-zone factor is fixed as $\omega=2$, which yields the theoretically guaranteed $\pi/2$ negative-zone boundary.
    As described in Appendix~\ref{app:compared_methods}, we use the real-valued extension of SpherePair by keeping the same $\omega=2$ angular scale and replacing the original clamped cannot-link score with the continuous counterpart
    $s^{-}_{\mathrm{soft},a_i b_i}=\frac{1}{2}\big(\cos(2\theta_{\boldsymbol{z}_{a_i},\boldsymbol{z}_{b_i}})+1\big)$.
    The observed $y_i\in[0,1]$ is then used directly in the BCE-form angular loss.
    Under this implementation, $y_i=1$ favors $\theta_{\boldsymbol{z}_{a_i},\boldsymbol{z}_{b_i}}=0$ and $y_i=0$ favors $\theta_{\boldsymbol{z}_{a_i},\boldsymbol{z}_{b_i}}=\pi/2$, while intermediate labels continuously interpolate between these two angular endpoints through the two BCE terms.
    Optimization uses Adam with learning rate $10^{-3}$ and constraint mini-batch size $256$.
    The instance mini-batch size for reconstruction is set according to the number of constraint mini-batches, following \cite{SpherePair}.
    Training runs for at most $500$ epochs, with early stopping after the first $100$ epochs if the relative change in the total loss remains below $0.1$ for five consecutive checks.
    For the expert-aware ensemble reference in Appendix~\ref{app:protocol}, the expert-aware hyperparameter $\alpha$ is selected on the reserved validation split; the selected values are reported in \cref{tab:expert_aware_alpha_selection}.

\begin{table}[t]
\centering
\caption{Selected expert-aware ensemble hyperparameter $\alpha$. Values are chosen by mean validation ACC over three runs and shared across the multi-expert configurations defined in Appendix~\ref{app:constraint_generation}.}
\label{tab:expert_aware_alpha_selection}
\small
\setlength{\tabcolsep}{7pt}
\begin{tabular}{lcc}
\toprule
Dataset & SpherePair & Weighted ProbPair \\
\midrule
CIFAR100-20    & $0.05$ & $0.05$ \\
CIFAR10        & $1.00$ & $0.05$ \\
FMNIST         & $1.00$ & $0.50$ \\
ImageNet10     & $0.00$ & $0.10$ \\
MNIST          & $0.05$ & $0.50$ \\
Reuters        & $0.05$ & $0.05$ \\
STL10          & $0.01$ & $0.01$ \\
RCV1-10        & $0.01$ & $0.05$ \\
\bottomrule
\end{tabular}
\end{table}

    \item \textbf{ProbPair variants and ECI-PP (Ours).}
    ProbPair, Weighted ProbPair, and ECI-PP use the ProbPair-style readout in \cref{eq:probpair_readout} and the reconstruction weight $\lambda_{\mathrm{rec}}=0.02$.
    ProbPair optimizes \cref{eq:probpair_total}, while Weighted ProbPair additionally applies the information-based weight $\kappa_i$ in \cref{eq:eci_infoweight}.
    For the expert-aware ensemble reference based on Weighted ProbPair in Appendix~\ref{app:protocol}, the expert-aware hyperparameter $\alpha$ is selected on the reserved validation split; the selected values are reported in \cref{tab:expert_aware_alpha_selection}.
    The shared base optimizer is Adam with learning rate $10^{-3}$ and batch size $256$.
    The probabilistic readout parameters, including $(m,T)$ and the fold-wise $(m_k,T_k)$ when present, are initialized as $(0,0.1)$ and optimized with learning rate $10^{-2}$.
    We use a fixed outer training budget of $500$ epochs; principled stopping criteria for iterative refinement remain an open question for future study.
    For any weighted ProbPair-style objective, including Weighted ProbPair and the weighted Estimator/Integrator updates in ECI-PP, the weight is applied inside the constraint average:
    \[
    \mathcal{L}^{w}_{\mathrm{PP}}
    =
    -\frac{1}{|\mathcal{C}|}
    \sum_{i=1}^{|\mathcal{C}|}
    w_i
    \Big[
    \tilde{y}_i\log \hat{y}_{a_i b_i}
    +
    (1-\tilde{y}_i)\log(1-\hat{y}_{a_i b_i})
    \Big],
    \]
    where $\tilde{y}_i$ denotes the corresponding training target, such as $y_i$, $y^{\mathrm{int}}_i$, or $y_i^{\mathrm{BC}}$.

    For ECI-PP, we use one global hyperparameter setting unless explicitly varied in sensitivity or ablation studies: $K=5$ estimator folds, $\lambda_{\mathrm{cor}}=0.5$, $\gamma=10$, $n_0=10$, soft-clipping sharpness $\xi=20$, and information-based warm-up enabled.
    The default warm-up trains the Estimators and Integrator for $50$ epochs before iterative refinement.
    The initial $\hat{y}^{\mathrm{oof}}_i$ is obtained by strict fold-heldout prediction, and later refinement rounds use refreshed Estimator beliefs under the Integrator-updated targets while retaining the same fold structure.
    Each Corrector update
    uses the shared base optimizer setting, with the Huber quadratic-to-linear transition in \cref{eq:eci_corrector_loss} fixed at $0.1$, holds out 10\% of its assigned constraints for internal
    validation, runs for at most 50 epochs, and stops early if the validation loss does not improve
    for 10 consecutive epochs.
    Between refinement iterations, Gaussian noise with standard deviation $0.01$ is added to the final linear layers of the Estimators, Correctors, and Integrator to mildly perturb the next refinement step.
\end{itemize}

Notably, the validation-based selections of the VolMaxDCC geometric regularization weight and the expert-aware ensemble hyperparameter $\alpha$ rely on ground-truth class labels on the validation split, which are \textbf{typically unavailable} in constrained clustering and are not part of the observable supervision under UPCC.
In contrast, ECI-PP uses \textbf{one global hyperparameter setting} across datasets and handles expert identities directly through expert-specific Correctors, without dataset-specific tuning or an additional hyperparameterized ensemble wrapper.

\section{Additional experimental results}
\label{app:additional_results}

\subsection{Comparison under different constraint budgets}
\label{app:full_budget_results}

\cref{table:exp1_1_pretrain,table:exp1_2_no_ensemble} report the full main-comparison results under the default {\tt lv0.01} single-expert and {\tt multi3} multi-expert regimes, with corruption probability fixed at $0.3$.
The total constraint budget is varied over $3$k/$6$k/$9$k; under {\tt multi3}, these correspond to $1$k/$2$k/$3$k constraints per expert.
All results in this subsection use pre-trained backbones for methods with a pretraining stage, while VanillaDCC and VolMaxDCC have no pretraining variant.

\begin{table}[htbp]
    \caption{Comparative performance (\%) (ACC, NMI, ARI) across datasets for methods under the {\tt lv0.01} setting, with corruption probability $0.3$ and $3$k/$6$k/$9$k constraints. \textcolor{blue}{Blue} and black represent \textcolor{blue}{training} and test results, respectively. Best results are in \textbf{bold}, and second-best are \underline{underlined}.}
    \label{table:exp1_1_pretrain}
    \begin{center}
    \begin{scriptsize}
    \renewcommand{\arraystretch}{0.0}
    \setlength{\tabcolsep}{3.1pt}
    \setlength{\aboverulesep}{0.6pt}
    \setlength{\belowrulesep}{0.6pt}
    \setlength{\extrarowheight}{0pt}

    \end{scriptsize}
    \end{center}
\end{table}
\begin{table}[htbp]
    \caption{Comparative performance (\%) (ACC, NMI, ARI) across datasets for methods under the {\tt multi3} setting, with corruption probability $0.3$ and $3$k/$6$k/$9$k total constraints. \textcolor{blue}{Blue} and black represent \textcolor{blue}{training} and test results, respectively. Best results are in \textbf{bold}, and second-best are \underline{underlined}.}
    \label{table:exp1_2_no_ensemble}
    \begin{center}
    \begin{scriptsize}
    \renewcommand{\arraystretch}{0.0}
    \setlength{\tabcolsep}{3.1pt}
    \setlength{\aboverulesep}{0.6pt}
    \setlength{\belowrulesep}{0.6pt}
    \setlength{\extrarowheight}{0pt}

    \end{scriptsize}
    \end{center}
\end{table}

The main-text conclusion remains stable across budgets.
Across the $144$ test entries in the two tables, ECI-PP ranks first in $122/144$ entries and within the top two in $134/144$, while one ProbPair-family method is best in $137/144$ entries.
The budget-wise pattern is not completely uniform, as expected: with only $3$k constraints, all methods are less stable and the separation between methods is smaller, whereas at $9$k several strong baselines also improve and narrow some ACC gaps.
Even under these effects, ECI-PP remains the leading method at every budget, ranking first in $38/48$, $43/48$, and $41/48$ entries under $3$k, $6$k, and $9$k constraints, respectively.

The advantage is clearest on structure-sensitive metrics.
ECI-PP gives the best NMI in $46/48$ entries and the best ARI in $43/48$, compared with $33/48$ for ACC.
Since NMI and ARI better reflect grouping consistency than matched-label accuracy alone, this suggests that ECI-PP most reliably improves the learned clustering structure.
For example, on single-expert CIFAR100, ECI-PP raises NMI from roughly $44$--$46\%$ for the strongest non-ECI competitors to $47$--$50\%$ across budgets; on multi-expert Reuters, it raises NMI from about $53$--$58\%$ for Weighted ProbPair to $65$--$69\%$.
Dataset-wise, ECI-PP is best in all entries on CIFAR100, Reuters, and STL10, and in nearly all entries on CIFAR10 and ImageNet10.
FMNIST and MNIST show more competition from Weighted ProbPair or SpherePair, especially on ACC and low-budget MNIST, but ECI-PP remains strongest or near-strongest on NMI/ARI in most cases.

The ProbPair variants show a consistent ablation pattern.
Weighted ProbPair improves over ProbPair in most entries, and this effect is stronger under multi-expert supervision ($67/72$ entries) than under single-expert supervision ($62/72$ entries).
This supports the interpretation that information-based weighting is especially useful when low-decisiveness constraints may reflect expert unfamiliarity.
ECI-PP further improves over Weighted ProbPair in $129/144$ entries, showing that weighting alone does not replace explicit estimation, correction, and integration.

Notably, VolMaxDCC also provides a useful noise-aware reference among the non-ProbPair baselines.
It improves over VanillaDCC and is competitive on some datasets, but reaches the top two in only $15/144$ entries.
This suggests that explicit noise modeling is beneficial, yet its hard-noise-oriented formulation is not sufficient for the expert-conditioned probabilistic supervision in UPCC.

The main caveat remains the severely imbalanced RCV1-10 benchmark.
SpherePair remains the main ACC competitor there, consistent with the advantage of its angular geometry for imbalanced pair structures.
Nevertheless, ECI-PP gives the best NMI across all RCV1-10 budgets and regimes, and is often strongest in ARI.
Thus, ECI-PP corrects much of the weakness of plain ProbPair/Weighted ProbPair on this benchmark, while the ACC results indicate that severe class imbalance remains a limitation for the probabilistic formulation.

\subsection{Comparison without pretraining}
\label{app:no_pretraining_results}

\begin{table}[!h]
    \caption{Comparative performance (\%) (ACC, NMI, ARI) across datasets for methods under the {\tt lv0.01} setting with corruption probability $0.3$ and $3$k/$6$k/$9$k constraints. $^{\dagger}$ indicates models without pretraining. \textcolor{blue}{Blue} and black represent \textcolor{blue}{training} and test results, respectively. Best results are in \textbf{bold}, and second-best are \underline{underlined}.}
    \label{table:exp1_1_nopretrain}
    \begin{center}
    \begin{scriptsize}
    \renewcommand{\arraystretch}{0.0}
    \setlength{\tabcolsep}{3.1pt}
    \setlength{\aboverulesep}{0.6pt}
    \setlength{\belowrulesep}{0.6pt}
    \setlength{\extrarowheight}{0pt}

    \end{scriptsize}
    \end{center}
\end{table}

\cref{table:exp1_1_nopretrain} repeats the single-expert comparison without the autoencoder pretraining stage for pretrain-capable methods, denoted by $^{\dagger}$ in the table: CIDEC$^{\dagger}$, SpherePair$^{\dagger}$, ProbPair$^{\dagger}$, Weighted ProbPair$^{\dagger}$, and ECI-PP$^{\dagger}$.
These methods are trained from random initialization, while VanillaDCC and VolMaxDCC are unchanged because they have no pretraining variant.
All other settings match \cref{table:exp1_1_pretrain}: {\tt lv0.01} supervision, corruption probability $0.3$, and $3$k/$6$k/$9$k constraints.

The main effect of removing pretraining is not a collapse of ECI-PP$^{\dagger}$, but a sharper separation from the other pretrain-capable methods.
ECI-PP$^{\dagger}$ ranks first in $68/72$ test entries and within the top two in $71/72$, compared with $59/72$ and $66/72$ in the pre-trained comparison.
It is also best in every entry on seven of the eight datasets; the only non-best entries occur on RCV1-10 ACC/ARI at $6$k/$9$k.
This indicates that ECI-PP$^{\dagger}$ is less dependent on a favorable autoencoder initialization than the compared deep baselines.

The contrast with SpherePair$^{\dagger}$ is also clearer without pretraining.
ECI-PP$^{\dagger}$ outperforms SpherePair$^{\dagger}$ in $68/72$ entries, compared with $62/72$ in the pre-trained setting, and achieves the best NMI in all cases.
Its absolute NMI margins over SpherePair$^{\dagger}$ range from about $3$--$12\%$, with visible gains on FMNIST, MNIST, and STL10.
This suggests that the Estimator--Corrector--Integrator structure helps form a more reliable grouping structure when the representation is learned from scratch.

The ProbPair variants show a more mixed response to random initialization.
Weighted ProbPair$^{\dagger}$ still improves over ProbPair$^{\dagger}$ in most entries, so information-based weighting remains useful.
However, Weighted ProbPair$^{\dagger}$ no longer compares as consistently with SpherePair$^{\dagger}$: it beats SpherePair$^{\dagger}$ in only $33/72$ entries, down from $53/72$ with pretraining.
By contrast, ECI-PP$^{\dagger}$ improves over Weighted ProbPair$^{\dagger}$ in all entries, showing that weighting alone is not sufficient when the representation starts from a weaker initialization.

The remaining caveat is again the severely imbalanced RCV1-10 benchmark.
SpherePair$^{\dagger}$ keeps stronger ACC at $9$k and stronger ARI at $6$k/$9$k, consistent with the robustness of its angular geometry under severe imbalance.
Nevertheless, ECI-PP$^{\dagger}$ gives the best NMI at every budget and substantially improves over ProbPair$^{\dagger}$/Weighted ProbPair$^{\dagger}$, indicating stronger grouping consistency despite the remaining imbalance-related ACC/ARI limitation.

\subsection{Comparison with expert-aware baseline extensions}
\label{app:expert_aware_ensemble_results}

\begin{table}[!h]
    \caption{Comparative performance (\%) (ACC, NMI, ARI) across datasets for ECI-PP, SP$^{\mathrm{EA}}$, and WPP$^{\mathrm{EA}}$ under the {\tt multi3} setting, with corruption probability $0.3$ and $3$k/$6$k/$9$k total constraints. SP$^{\mathrm{EA}}$ and WPP$^{\mathrm{EA}}$ denote the expert-aware ensemble extensions of SpherePair and Weighted ProbPair, respectively. \textcolor{blue}{Blue} and black represent \textcolor{blue}{training} and test results, respectively. Best results are in \textbf{bold}, and second-best are \underline{underlined}.}
    \label{table:exp1_2_ensemble_vs_ecu}
    \begin{center}
    \begin{scriptsize}
    \renewcommand{\arraystretch}{1.0}
    \setlength{\tabcolsep}{3.1pt}
    \setlength{\aboverulesep}{1pt}
    \setlength{\belowrulesep}{1pt}
    \setlength{\extrarowheight}{0pt}
    \begin{tabular}{llccc@{\hskip 5pt}>{\hskip 5pt}ccc<{\hskip 7pt}@{\hskip 5pt}ccc}
    \toprule
      \multicolumn{1}{l}{} & & \multicolumn{3}{c}{3k} & \multicolumn{3}{c}{6k} & \multicolumn{3}{c}{9k} \\
    \cline{3-11}\noalign{\vskip 2pt}
      \multicolumn{1}{l}{} & & ACC & NMI & ARI & ACC & NMI & ARI & ACC & NMI & ARI \\
    \midrule
    \multirow{3}{*}{CIFAR100} & SP$^{\mathrm{EA}}$ & \textcolor{blue}{42.2}, 43.2 & \textcolor{blue}{\underline{41.7}}, \underline{42.5} & \textcolor{blue}{\underline{26.6}}, \underline{27.0} & \textcolor{blue}{41.0}, 40.9 & \textcolor{blue}{39.7}, 40.9 & \textcolor{blue}{25.5}, 25.8 & \textcolor{blue}{41.9}, 42.3 & \textcolor{blue}{39.2}, 40.5 & \textcolor{blue}{26.0}, 27.1 \\
     & WPP$^{\mathrm{EA}}$ & \textcolor{blue}{\underline{43.1}}, \underline{43.4} & \textcolor{blue}{41.4}, 42.2 & \textcolor{blue}{26.4}, 26.5 & \textcolor{blue}{\underline{41.5}}, \underline{41.4} & \textcolor{blue}{\underline{40.4}}, \underline{41.3} & \textcolor{blue}{\underline{25.8}}, \underline{26.1} & \textcolor{blue}{\underline{46.4}}, \underline{46.4} & \textcolor{blue}{\underline{42.9}}, \underline{43.8} & \textcolor{blue}{\underline{29.8}}, \underline{29.9} \\
     & \makecell{ECI-PP} & \textcolor{blue}{\textbf{45.6}}, \textbf{45.3} & \textcolor{blue}{\textbf{45.0}}, \textbf{45.5} & \textcolor{blue}{\textbf{29.7}}, \textbf{29.8} & \textcolor{blue}{\textbf{49.1}}, \textbf{49.3} & \textcolor{blue}{\textbf{47.1}}, \textbf{47.7} & \textcolor{blue}{\textbf{32.1}}, \textbf{32.4} & \textcolor{blue}{\textbf{48.6}}, \textbf{48.8} & \textcolor{blue}{\textbf{48.1}}, \textbf{48.5} & \textcolor{blue}{\textbf{32.9}}, \textbf{32.9} \\
    \midrule
    \multirow{3}{*}{CIFAR10} & SP$^{\mathrm{EA}}$ & \textcolor{blue}{76.6}, 77.1 & \textcolor{blue}{70.7}, 71.5 & \textcolor{blue}{64.0}, 64.8 & \textcolor{blue}{75.5}, 75.9 & \textcolor{blue}{67.9}, 69.4 & \textcolor{blue}{62.4}, 63.5 & \textcolor{blue}{74.1}, 75.2 & \textcolor{blue}{67.0}, 69.0 & \textcolor{blue}{61.3}, 62.9 \\
     & WPP$^{\mathrm{EA}}$ & \textcolor{blue}{\underline{82.6}}, \textbf{82.5} & \textcolor{blue}{\underline{74.1}}, \underline{74.2} & \textcolor{blue}{\underline{69.4}}, \underline{69.2} & \textcolor{blue}{\underline{81.2}}, \underline{81.2} & \textcolor{blue}{\underline{74.6}}, \underline{74.9} & \textcolor{blue}{\underline{69.6}}, \underline{69.6} & \textcolor{blue}{\textbf{82.5}}, \textbf{82.8} & \textcolor{blue}{\underline{74.2}}, \underline{74.5} & \textcolor{blue}{\underline{70.0}}, \underline{70.3} \\
     & \makecell{ECI-PP} & \textcolor{blue}{\textbf{82.8}}, \textbf{82.5} & \textcolor{blue}{\textbf{76.0}}, \textbf{75.9} & \textcolor{blue}{\textbf{70.8}}, \textbf{70.7} & \textcolor{blue}{\textbf{84.0}}, \textbf{84.0} & \textcolor{blue}{\textbf{77.0}}, \textbf{77.1} & \textcolor{blue}{\textbf{72.6}}, \textbf{72.6} & \textcolor{blue}{\underline{81.6}}, \underline{82.1} & \textcolor{blue}{\textbf{77.8}}, \textbf{77.8} & \textcolor{blue}{\textbf{73.1}}, \textbf{73.0} \\
    \midrule
    \multirow{3}{*}{FMNIST} & SP$^{\mathrm{EA}}$ & \textcolor{blue}{\underline{63.2}}, \underline{61.6} & \textcolor{blue}{\underline{60.3}}, 59.7 & \textcolor{blue}{\underline{48.4}}, 46.9 & \textcolor{blue}{\underline{63.5}}, \underline{62.5} & \textcolor{blue}{59.4}, 59.9 & \textcolor{blue}{48.5}, 48.3 & \textcolor{blue}{62.5}, 62.5 & \textcolor{blue}{58.6}, 60.2 & \textcolor{blue}{49.0}, 49.4 \\
     & WPP$^{\mathrm{EA}}$ & \textcolor{blue}{\textbf{64.5}}, \textbf{64.3} & \textcolor{blue}{59.7}, \underline{59.8} & \textcolor{blue}{47.9}, \underline{47.7} & \textcolor{blue}{\textbf{68.0}}, \textbf{68.3} & \textcolor{blue}{\underline{62.1}}, \underline{62.4} & \textcolor{blue}{\textbf{52.0}}, \textbf{52.0} & \textcolor{blue}{\textbf{71.8}}, \textbf{71.7} & \textcolor{blue}{\underline{64.6}}, \underline{64.8} & \textcolor{blue}{\textbf{55.2}}, \textbf{55.1} \\
     & \makecell{ECI-PP} & \textcolor{blue}{60.1}, 59.2 & \textcolor{blue}{\textbf{62.7}}, \textbf{61.9} & \textcolor{blue}{\textbf{49.5}}, \textbf{48.4} & \textcolor{blue}{62.0}, 61.1 & \textcolor{blue}{\textbf{64.3}}, \textbf{63.8} & \textcolor{blue}{\underline{51.0}}, \underline{51.0} & \textcolor{blue}{\underline{65.3}}, \underline{64.8} & \textcolor{blue}{\textbf{66.0}}, \textbf{65.6} & \textcolor{blue}{\underline{53.4}}, \underline{52.8} \\
    \midrule
    \multirow{3}{*}{ImageNet10} & SP$^{\mathrm{EA}}$ & \textcolor{blue}{81.7}, 85.3 & \textcolor{blue}{78.0}, 79.8 & \textcolor{blue}{71.8}, 75.2 & \textcolor{blue}{82.1}, 86.6 & \textcolor{blue}{77.8}, 79.6 & \textcolor{blue}{72.8}, 75.8 & \textcolor{blue}{81.2}, 81.4 & \textcolor{blue}{75.3}, 76.7 & \textcolor{blue}{70.6}, 71.3 \\
     & WPP$^{\mathrm{EA}}$ & \textcolor{blue}{\underline{90.5}}, \underline{90.8} & \textcolor{blue}{\underline{83.8}}, \underline{84.2} & \textcolor{blue}{\underline{81.8}}, \underline{82.0} & \textcolor{blue}{\underline{90.9}}, \underline{91.6} & \textcolor{blue}{\underline{83.4}}, \underline{84.8} & \textcolor{blue}{\underline{82.2}}, \underline{83.4} & \textcolor{blue}{\underline{91.2}}, \underline{92.1} & \textcolor{blue}{\underline{82.4}}, \underline{84.7} & \textcolor{blue}{\underline{82.2}}, \underline{83.8} \\
     & \makecell{ECI-PP} & \textcolor{blue}{\textbf{91.5}}, \textbf{91.8} & \textcolor{blue}{\textbf{86.9}}, \textbf{87.3} & \textcolor{blue}{\textbf{84.1}}, \textbf{84.4} & \textcolor{blue}{\textbf{91.9}}, \textbf{92.1} & \textcolor{blue}{\textbf{87.2}}, \textbf{87.8} & \textcolor{blue}{\textbf{84.6}}, \textbf{85.0} & \textcolor{blue}{\textbf{92.4}}, \textbf{92.6} & \textcolor{blue}{\textbf{87.7}}, \textbf{88.1} & \textcolor{blue}{\textbf{85.4}}, \textbf{85.7} \\
    \midrule
    \multirow{3}{*}{MNIST} & SP$^{\mathrm{EA}}$ & \textcolor{blue}{80.5}, 81.1 & \textcolor{blue}{75.2}, 76.7 & \textcolor{blue}{70.7}, 71.7 & \textcolor{blue}{78.6}, 79.6 & \textcolor{blue}{72.7}, 75.5 & \textcolor{blue}{67.9}, 70.2 & \textcolor{blue}{77.5}, 78.1 & \textcolor{blue}{70.2}, 73.7 & \textcolor{blue}{65.7}, 67.8 \\
     & WPP$^{\mathrm{EA}}$ & \textcolor{blue}{\textbf{87.9}}, \textbf{88.7} & \textcolor{blue}{\textbf{77.2}}, \textbf{78.5} & \textcolor{blue}{\textbf{76.2}}, \textbf{77.4} & \textcolor{blue}{\underline{85.8}}, \textbf{86.4} & \textcolor{blue}{\underline{77.4}}, \underline{79.0} & \textcolor{blue}{\underline{75.5}}, \underline{76.8} & \textcolor{blue}{\textbf{89.2}}, \textbf{90.4} & \textcolor{blue}{\underline{78.1}}, \underline{80.4} & \textcolor{blue}{\textbf{78.2}}, \textbf{80.3} \\
     & \makecell{ECI-PP} & \textcolor{blue}{\underline{80.9}}, \underline{81.5} & \textcolor{blue}{\underline{75.7}}, \underline{77.1} & \textcolor{blue}{\underline{71.9}}, \underline{73.2} & \textcolor{blue}{\textbf{86.1}}, \textbf{86.4} & \textcolor{blue}{\textbf{79.8}}, \textbf{80.5} & \textcolor{blue}{\textbf{77.8}}, \textbf{78.3} & \textcolor{blue}{\underline{84.7}}, \underline{85.1} & \textcolor{blue}{\textbf{79.9}}, \textbf{80.7} & \textcolor{blue}{\underline{77.1}}, \underline{77.5} \\
    \midrule
    \multirow{3}{*}{REUTERS} & SP$^{\mathrm{EA}}$ & \textcolor{blue}{75.8}, 76.1 & \textcolor{blue}{41.8}, 45.7 & \textcolor{blue}{48.0}, 50.2 & \textcolor{blue}{72.0}, 77.0 & \textcolor{blue}{35.3}, 43.8 & \textcolor{blue}{40.9}, 49.8 & \textcolor{blue}{68.4}, 77.0 & \textcolor{blue}{30.6}, 43.1 & \textcolor{blue}{34.7}, 49.6 \\
     & WPP$^{\mathrm{EA}}$ & \textcolor{blue}{\underline{82.9}}, \underline{83.3} & \textcolor{blue}{\underline{55.4}}, \underline{58.3} & \textcolor{blue}{\underline{62.4}}, \underline{64.5} & \textcolor{blue}{\underline{81.4}}, \underline{83.7} & \textcolor{blue}{\underline{53.6}}, \underline{58.4} & \textcolor{blue}{\underline{59.5}}, \underline{64.4} & \textcolor{blue}{\underline{77.7}}, \underline{80.9} & \textcolor{blue}{\underline{50.3}}, \underline{56.0} & \textcolor{blue}{\underline{54.8}}, \underline{61.5} \\
     & \makecell{ECI-PP} & \textcolor{blue}{\textbf{85.0}}, \textbf{86.3} & \textcolor{blue}{\textbf{62.0}}, \textbf{65.0} & \textcolor{blue}{\textbf{68.3}}, \textbf{71.1} & \textcolor{blue}{\textbf{87.3}}, \textbf{88.5} & \textcolor{blue}{\textbf{63.8}}, \textbf{67.2} & \textcolor{blue}{\textbf{71.3}}, \textbf{74.3} & \textcolor{blue}{\textbf{88.3}}, \textbf{89.6} & \textcolor{blue}{\textbf{65.4}}, \textbf{68.7} & \textcolor{blue}{\textbf{72.9}}, \textbf{76.0} \\
    \midrule
    \multirow{3}{*}{STL10} & SP$^{\mathrm{EA}}$ & \textcolor{blue}{74.2}, 76.7 & \textcolor{blue}{66.5}, 68.1 & \textcolor{blue}{59.7}, 61.1 & \textcolor{blue}{73.0}, 75.4 & \textcolor{blue}{64.0}, 67.3 & \textcolor{blue}{57.4}, 60.1 & \textcolor{blue}{72.7}, 72.9 & \textcolor{blue}{62.5}, 66.2 & \textcolor{blue}{56.7}, 58.7 \\
     & WPP$^{\mathrm{EA}}$ & \textcolor{blue}{\underline{84.0}}, \underline{84.5} & \textcolor{blue}{\underline{72.4}}, \underline{73.8} & \textcolor{blue}{\underline{69.0}}, \underline{70.0} & \textcolor{blue}{\underline{85.2}}, \underline{85.6} & \textcolor{blue}{\underline{73.7}}, \underline{75.2} & \textcolor{blue}{\underline{71.3}}, \underline{72.0} & \textcolor{blue}{\underline{85.1}}, \underline{86.1} & \textcolor{blue}{\underline{73.3}}, \underline{75.4} & \textcolor{blue}{\underline{71.1}}, \underline{72.7} \\
     & \makecell{ECI-PP} & \textcolor{blue}{\textbf{85.7}}, \textbf{85.9} & \textcolor{blue}{\textbf{75.7}}, \textbf{76.3} & \textcolor{blue}{\textbf{72.2}}, \textbf{72.5} & \textcolor{blue}{\textbf{87.4}}, \textbf{87.5} & \textcolor{blue}{\textbf{77.5}}, \textbf{78.1} & \textcolor{blue}{\textbf{75.1}}, \textbf{75.4} & \textcolor{blue}{\textbf{87.9}}, \textbf{87.7} & \textcolor{blue}{\textbf{78.3}}, \textbf{78.5} & \textcolor{blue}{\textbf{75.9}}, \textbf{75.8} \\
    \midrule
    \multirow{3}{*}{RCV1-10} & SP$^{\mathrm{EA}}$ & \textcolor{blue}{\textbf{60.2}}, \textbf{60.9} & \textcolor{blue}{\underline{52.6}}, \underline{53.1} & \textcolor{blue}{\textbf{48.9}}, \textbf{49.5} & \textcolor{blue}{\textbf{68.2}}, \textbf{63.5} & \textcolor{blue}{\underline{54.8}}, \underline{53.8} & \textcolor{blue}{\textbf{55.7}}, \textbf{51.4} & \textcolor{blue}{\textbf{71.2}}, \textbf{67.7} & \textcolor{blue}{\underline{54.8}}, \underline{54.4} & \textcolor{blue}{\textbf{58.8}}, \textbf{55.9} \\
     & WPP$^{\mathrm{EA}}$ & \textcolor{blue}{52.5}, 47.6 & \textcolor{blue}{\underline{52.6}}, 52.5 & \textcolor{blue}{41.9}, 38.3 & \textcolor{blue}{54.6}, 52.6 & \textcolor{blue}{53.6}, 53.7 & \textcolor{blue}{45.0}, 42.8 & \textcolor{blue}{\underline{56.5}}, \underline{58.4} & \textcolor{blue}{53.5}, 54.2 & \textcolor{blue}{\underline{47.0}}, \underline{49.2} \\
     & \makecell{ECI-PP} & \textcolor{blue}{\underline{55.3}}, \underline{55.2} & \textcolor{blue}{\textbf{56.8}}, \textbf{56.9} & \textcolor{blue}{\underline{45.6}}, \underline{45.4} & \textcolor{blue}{\underline{55.2}}, \underline{55.0} & \textcolor{blue}{\textbf{58.4}}, \textbf{58.6} & \textcolor{blue}{\underline{46.8}}, \underline{46.8} & \textcolor{blue}{53.4}, 53.3 & \textcolor{blue}{\textbf{58.9}}, \textbf{59.0} & \textcolor{blue}{46.2}, 46.1 \\
    \bottomrule
    \end{tabular}
    \end{scriptsize}
    \end{center}
\end{table}

\cref{table:exp1_2_ensemble_vs_ecu} compares ECI-PP with SP$^{\mathrm{EA}}$ and WPP$^{\mathrm{EA}}$ under the {\tt multi3} setting, where SP$^{\mathrm{EA}}$ and WPP$^{\mathrm{EA}}$ denote our expert-aware ensemble extensions of SpherePair and Weighted ProbPair, respectively.
As described in Appendix~\ref{app:protocol}, these extensions instantiate multiple expert-weighted members for a baseline and aggregate their partitions through ensemble clustering.
The expert-weighting parameter $\alpha$ is selected on the reserved validation split, with the selected values reported in \cref{tab:expert_aware_alpha_selection}.
Thus, the comparison should be read as an auxiliary reference rather than a native baseline comparison: SP$^{\mathrm{EA}}$ and WPP$^{\mathrm{EA}}$ use expert identities, but they also introduce validation tuning and ensemble aggregation, making them inherently uncomparable with the native ECI-PP protocol in a strict sense.

The extensions substantially strengthen the corresponding baselines, confirming that expert identities carry useful information in the multi-expert setting.
WPP$^{\mathrm{EA}}$ benefits clearly on datasets such as MNIST, FMNIST, and STL10, while SP$^{\mathrm{EA}}$ gains competitiveness on Reuters and RCV1-10.
However, these gains conflate two effects: using expert identities and ensembling multiple expert-weighted members.
For this reason, the results are best interpreted as stronger expert-aware references, not as fully symmetric replacements for the native baselines in \cref{table:exp1_9k_main}.

Even against these stronger references, ECI-PP remains the most reliable method overall.
It obtains the best result in $55/72$ test entries and ranks within the top two in $68/72$.
The advantage is clearest in NMI, where ECI-PP is best in $23/24$ cases, indicating that its expert-conditioned design most consistently improves grouping structure.
Dataset-wise, ECI-PP is best in all entries on CIFAR100, ImageNet10, Reuters, and STL10, and remains strongest in most CIFAR10 entries.
The main losses occur on FMNIST and MNIST, mostly on ACC/ARI, where WPP$^{\mathrm{EA}}$ is particularly strong, and on RCV1-10 ACC/ARI, where the angular geometry of SP$^{\mathrm{EA}}$ remains advantageous under severe imbalance.
Overall, expert-aware ensembling makes strong baselines substantially more competitive, but ECI-PP still provides a more direct and generally stronger way to use expert identities without relying on a validation-tuned ensemble wrapper.

\begin{figure}[p]
    \begin{center}
    \centerline{\includegraphics[width=\textwidth]{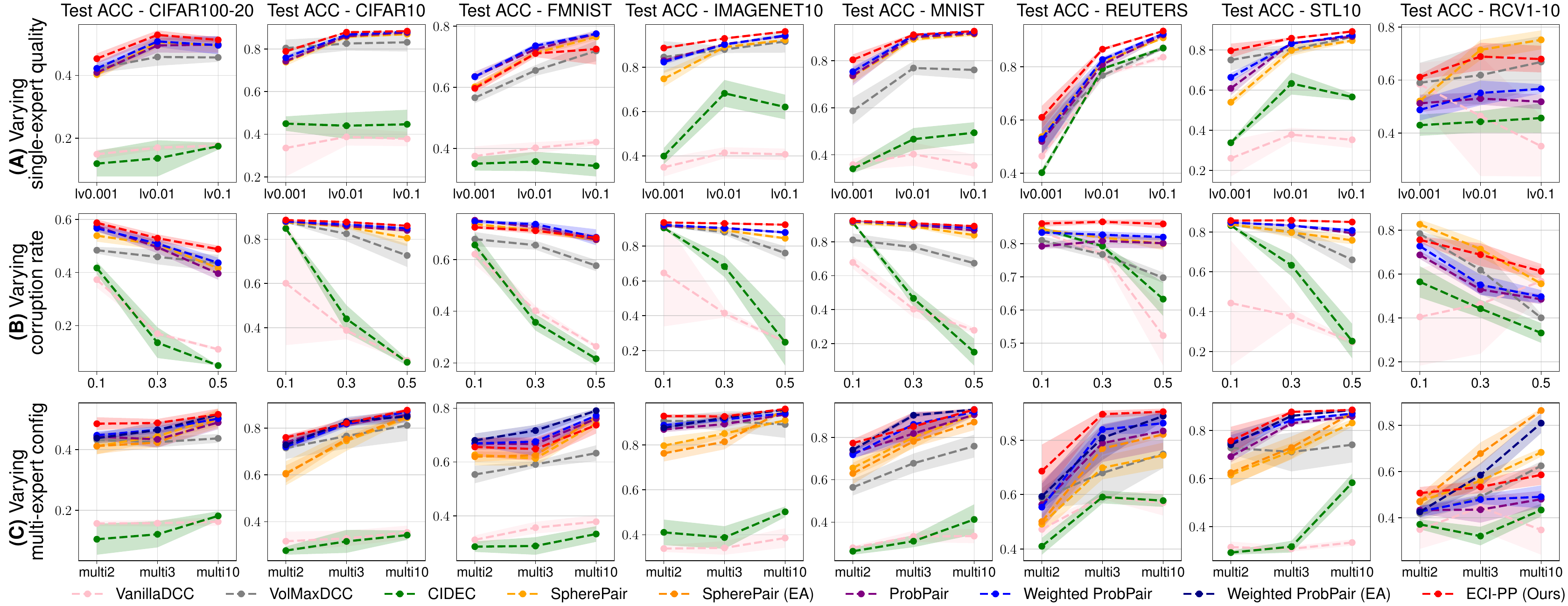}}
    \caption{
    Full robustness results in ACC (mean$\pm$std over 5 runs) across datasets under varying (A) expert quality, (B) corruption probability, and (C) multi-expert configuration, with $9$k constraints.
    The multi-expert row additionally includes expert-aware SpherePair and Weighted ProbPair extensions.
    }
    \label{fig:app_exp2_acc}
    \end{center}
\end{figure}

\begin{figure}[p]
    \begin{center}
    \centerline{\includegraphics[width=\textwidth]{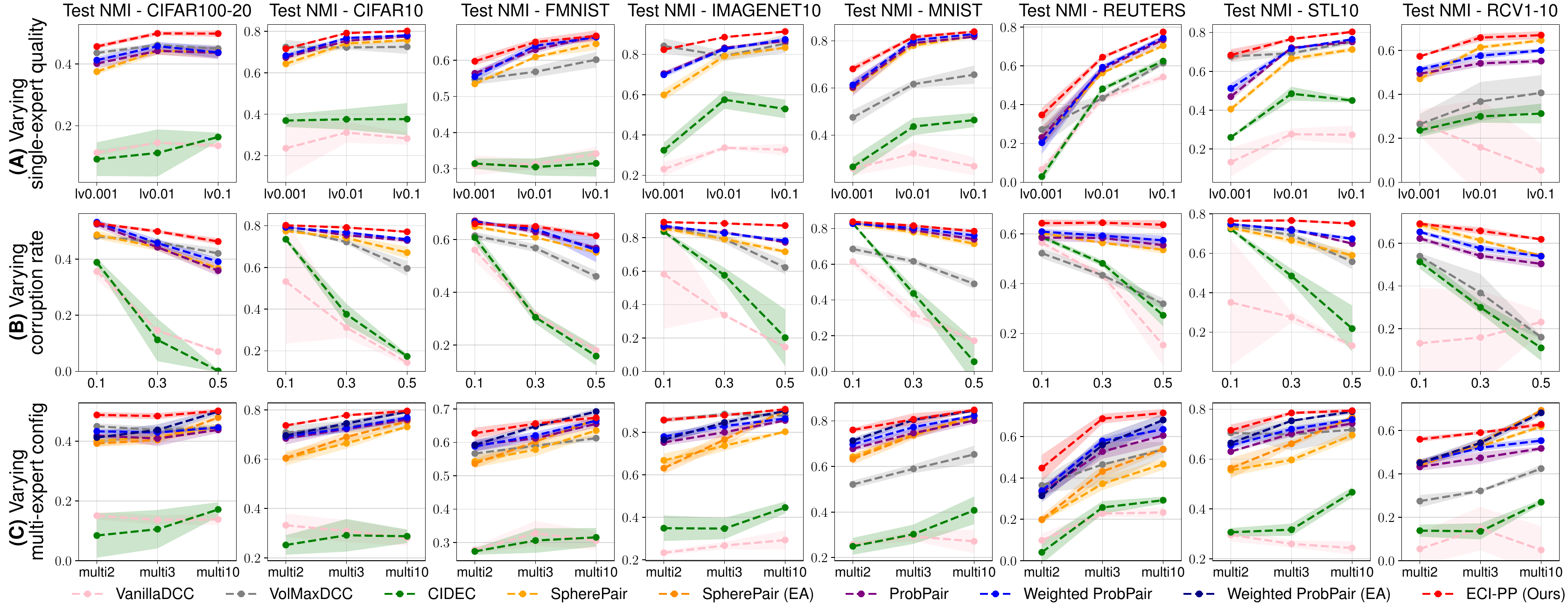}}
    \caption{
    Full robustness results in NMI (mean$\pm$std over 5 runs) across datasets under varying (A) expert quality, (B) corruption probability, and (C) multi-expert configuration, with $9$k constraints.
    The multi-expert row additionally includes expert-aware SpherePair and Weighted ProbPair extensions.
    }
    \label{fig:app_exp2_nmi}
    \end{center}
\end{figure}

\begin{figure}[p]
    \begin{center}
    \centerline{\includegraphics[width=\textwidth]{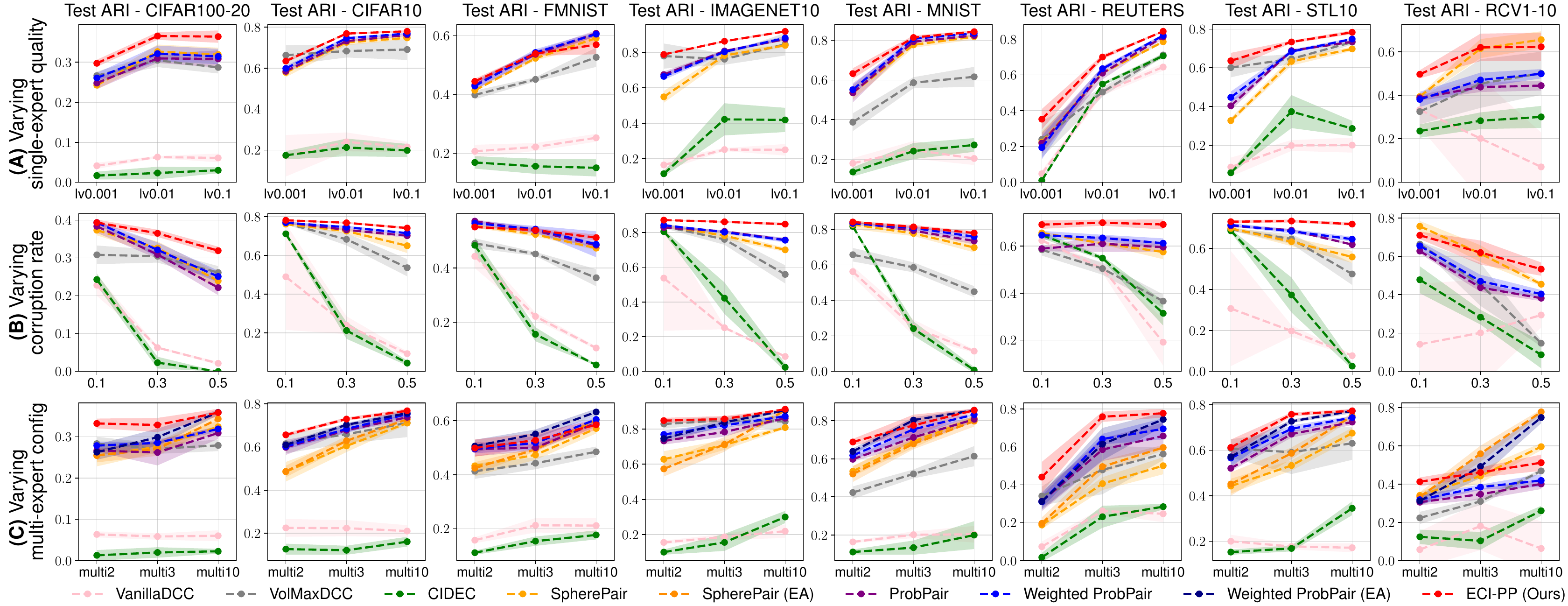}}
    \caption{
    Full robustness results in ARI (mean$\pm$std over 5 runs) across datasets under varying (A) expert quality, (B) corruption probability, and (C) multi-expert configuration, with $9$k constraints.
    The multi-expert row additionally includes expert-aware SpherePair and Weighted ProbPair extensions.
    }
    \label{fig:app_exp2_ari}
    \end{center}
\end{figure}

\subsection{Robustness across supervision conditions}
\label{app:robustness_full_results}

\cref{fig:app_exp2_acc,fig:app_exp2_nmi,fig:app_exp2_ari} report the full ACC, NMI, and ARI results corresponding to the robustness study in the main text.
All three sweeps use a fixed budget of $9$k constraints, and the central setting in each sweep, namely {\tt lv0.01} expert quality, corruption probability $0.3$, and {\tt multi3}, matches the default setting used in \cref{table:exp1_9k_main}.
Thus, these figures also cover the main-comparison results in \cref{table:exp1_9k_main}, while additionally reporting the standard deviations omitted from the table for space.
For the multi-expert sweep, we further include SP$^{\mathrm{EA}}$ and WPP$^{\mathrm{EA}}$, the expert-aware ensemble extensions introduced in Appendix~\ref{app:expert_aware_ensemble_results}, as stronger references that also exploit expert identities.

\paragraph{Effects of supervision factors.}
Across the three sweeps, the results largely follow the expected supervision-quality ordering.
In the single-expert sweep, stronger experts generally yield better clustering quality; in the corruption sweep, increasing corruption leads to clear degradation.
In the multi-expert sweep, moving from {\tt multi2} to {\tt multi10} also tends to help the stronger methods, because the protocol progressively reduces expert blind spots and increases average expert accuracy, as quantified in \cref{tab:multi_expert_avg_acc}.
Since the total constraint budget is fixed throughout, the performance trends mainly reflect changes in supervision quality and heterogeneity rather than in the amount of supervision.

\paragraph{Baseline behavior.}
The non-ECI baselines show a clear hierarchy as supervision becomes less reliable.
Plain end-to-end methods such as VanillaDCC and CIDEC often nearly collapse under low-quality or heterogeneous supervision, especially in NMI and ARI, suggesting that cluster-assignment training is vulnerable when pairwise supervision is inconsistent.
VolMaxDCC is usually more resilient, consistent with the benefit of explicit noise modeling.
SpherePair and the ProbPair-family baselines are substantially stronger than the plain end-to-end methods, reflecting the robustness of geometric representation learning under noisy supervision.
Nevertheless, corruption still causes visible degradation for the stronger baselines, particularly on FMNIST, ImageNet10, and STL10.

\paragraph{ECI-PP robustness.}
Across the supervision sweeps, ECI-PP better preserves clustering quality than the native baselines, especially as expert quality decreases or corruption increases.
This advantage is most consistent on NMI and ARI, suggesting that the correction-and-integration pipeline mainly improves the recovered grouping structure.
ACC is more competitive and contains more exceptions, especially on FMNIST, MNIST, and RCV1-10, where Weighted ProbPair, SP$^{\mathrm{EA}}$, or WPP$^{\mathrm{EA}}$ can sometimes match or exceed ECI-PP.
This is consistent with the main-comparison results (Appendix~\ref{app:full_budget_results}): matched-label accuracy can favor angular or ensemble-based baselines on some datasets, even when ECI-PP remains stronger or near-stronger on grouping-oriented metrics (NMI/ARI).
The expert-aware extensions further clarify the role of expert identities: SP$^{\mathrm{EA}}$ and WPP$^{\mathrm{EA}}$ often improve over their native counterparts in the multi-expert sweep, confirming that expert identity information is useful under heterogeneous supervision.
However, these extensions rely on validation-selected expert weighting and ensemble aggregation, whereas ECI-PP uses expert-conditioned correction within the learning pipeline.
Overall, the full results support the main-text conclusion that ECI-PP better preserves clustering quality when supervision is fallible, heterogeneous, or corrupted.

\subsection{Held-out diagnostics}
\label{app:heldout_diagnostics_full}

\paragraph{Diagnostic protocol.}
The held-out diagnostics complement the clustering results by tracking the training dynamics of ECI-PP's internal quantities.
Following Appendix~\ref{app:protocol}, we construct a sample-disjoint diagnostic constraint set $\mathcal{C}^{\mathrm{diag}}$ from the test split, used only for diagnosis and never for optimization, early stopping, or hyperparameter selection.
Benchmark labels are used only to define an external hard co-membership reference, rather than the canonical aleatoric relation $R^\star$.
All diagnostics use {\tt multi3} supervision with limited $3$k training constraints in total ($1$k per expert) and corruption rate $0.3$.
To reduce the influence of a strong initial representation or a long warm-up phase, we use random initialization without unsupervised pretraining, give the Estimators and Integrator only one warm-up epoch, and then track the next $500$ training iterations, each corresponding to one epoch.
We report mean$\pm$std over $5$ runs, recording at each iteration the Estimator out-of-fold relation $\hat{y}^{\mathrm{oof}}_i$, the corrected relation $y^{\mathrm{cor}}_i$, and the Integrator relation $y_i^{\mathrm{int}}$ on $\mathcal{C}^{\mathrm{diag}}$.

\begin{figure}[p]
    \centering
    \includegraphics[width=0.8\textwidth]{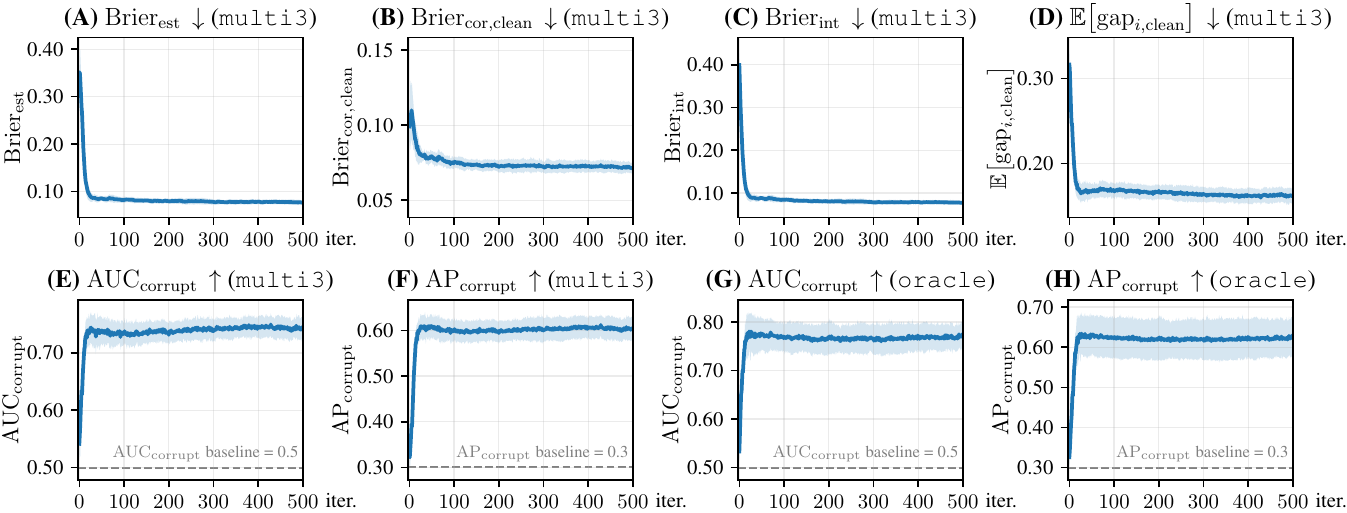}
    \caption{
    CIFAR100-20 held-out diagnostic evolution under {\tt multi3} with corruption rate $0.3$ (mean$\pm$std over 5 runs).
    Same panel definitions as in \cref{fig:diag_mnist}.
    }
    \label{fig:app_diag_cifar100}
\end{figure}

\begin{figure}[p]
    \centering
    \includegraphics[width=0.8\textwidth]{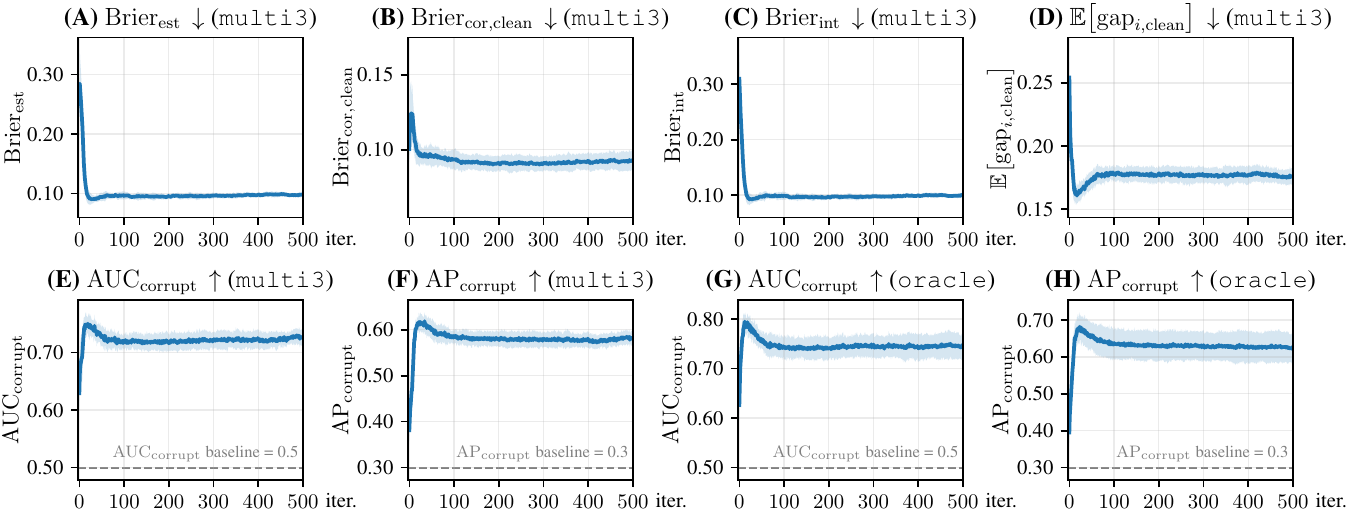}
    \caption{
    CIFAR10 held-out diagnostic evolution under {\tt multi3} with corruption rate $0.3$ (mean$\pm$std over 5 runs).
    Same panel definitions as in \cref{fig:diag_mnist}.
    }
    \label{fig:app_diag_cifar10}
\end{figure}

\begin{figure}[p]
    \centering
    \includegraphics[width=0.8\textwidth]{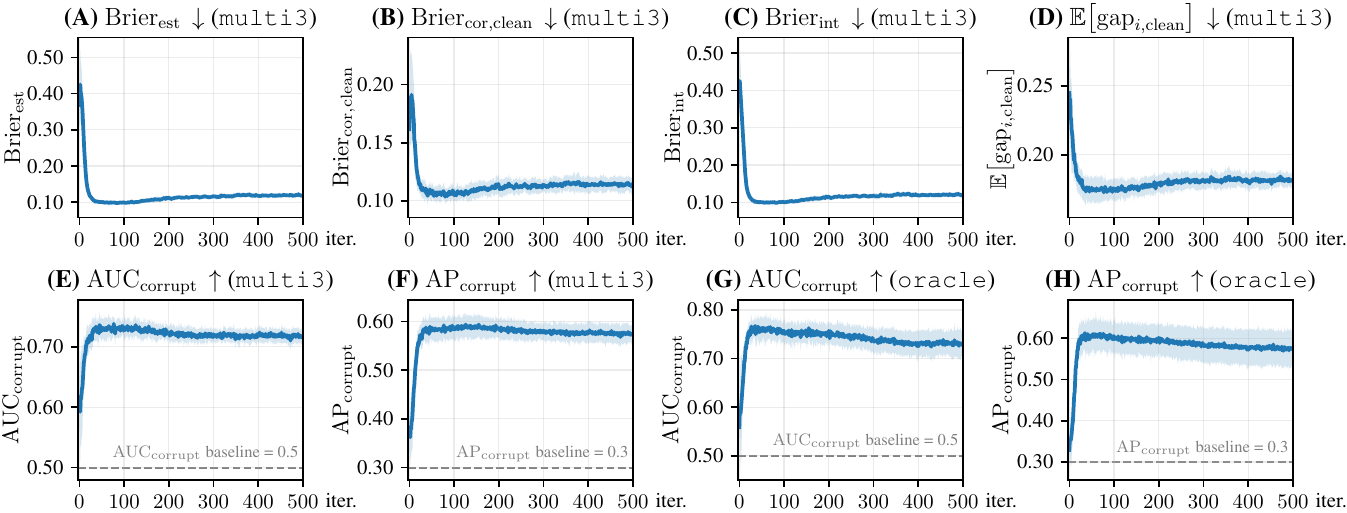}
    \caption{
    FMNIST held-out diagnostic evolution under {\tt multi3} with corruption rate $0.3$ (mean$\pm$std over 5 runs).
    Same panel definitions as in \cref{fig:diag_mnist}.
    }
    \label{fig:app_diag_fmnist}
\end{figure}

\begin{figure}[p]
    \centering
    \includegraphics[width=0.8\textwidth]{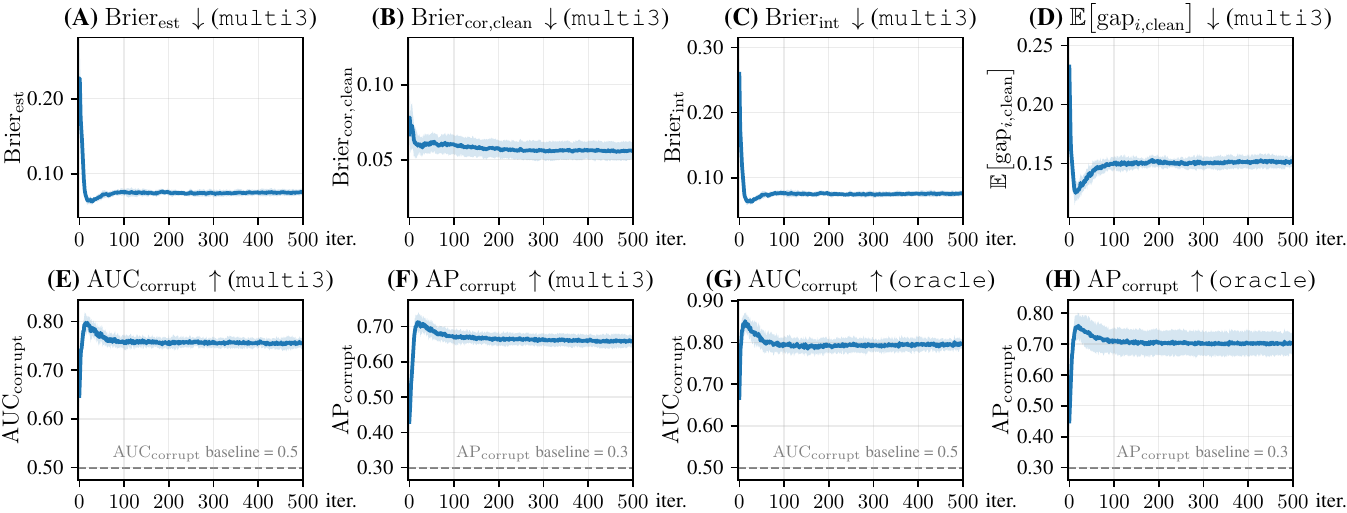}
    \caption{
    ImageNet10 held-out diagnostic evolution under {\tt multi3} with corruption rate $0.3$ (mean$\pm$std over 5 runs).
    Same panel definitions as in \cref{fig:diag_mnist}.
    }
    \label{fig:app_diag_imagenet10}
\end{figure}

\begin{figure}[p]
    \centering
    \includegraphics[width=0.8\textwidth]{plots/exp3/mnist_exp3_external_diagnostics_perdataset.pdf}
    \caption{
    MNIST held-out diagnostic evolution under {\tt multi3} with corruption rate $0.3$ (mean$\pm$std over 5 runs).
    Same panel definitions as in \cref{fig:diag_mnist}.
    }
    \label{fig:app_diag_mnist}
\end{figure}

\begin{figure}[p]
    \centering
    \includegraphics[width=0.8\textwidth]{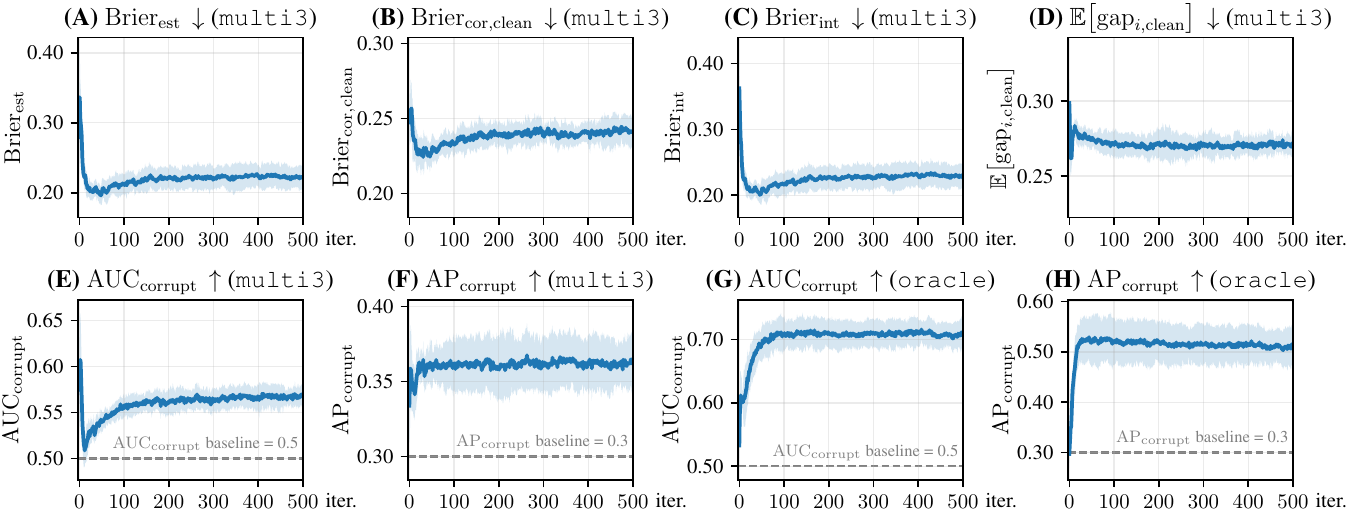}
    \caption{
    Reuters held-out diagnostic evolution under {\tt multi3} with corruption rate $0.3$ (mean$\pm$std over 5 runs).
    Same panel definitions as in \cref{fig:diag_mnist}.
    }
    \label{fig:app_diag_reuters}
\end{figure}

\begin{figure}[p]
    \centering
    \includegraphics[width=0.8\textwidth]{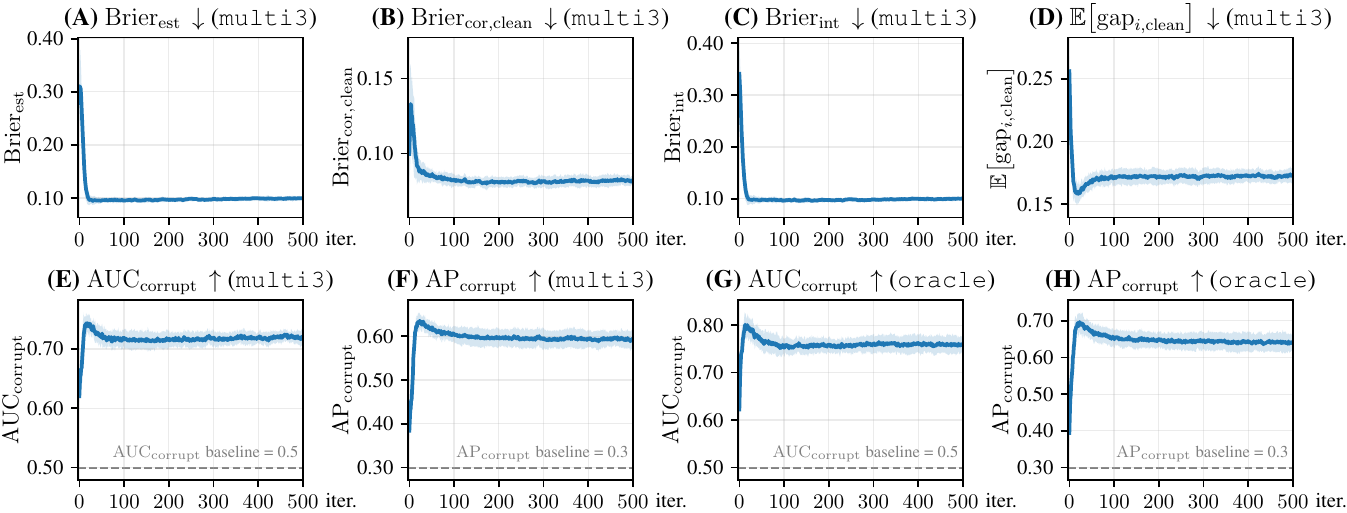}
    \caption{
    STL10 held-out diagnostic evolution under {\tt multi3} with corruption rate $0.3$ (mean$\pm$std over 5 runs).
    Same panel definitions as in \cref{fig:diag_mnist}.
    }
    \label{fig:app_diag_stl10}
\end{figure}

\begin{figure}[p]
    \centering
    \includegraphics[width=0.8\textwidth]{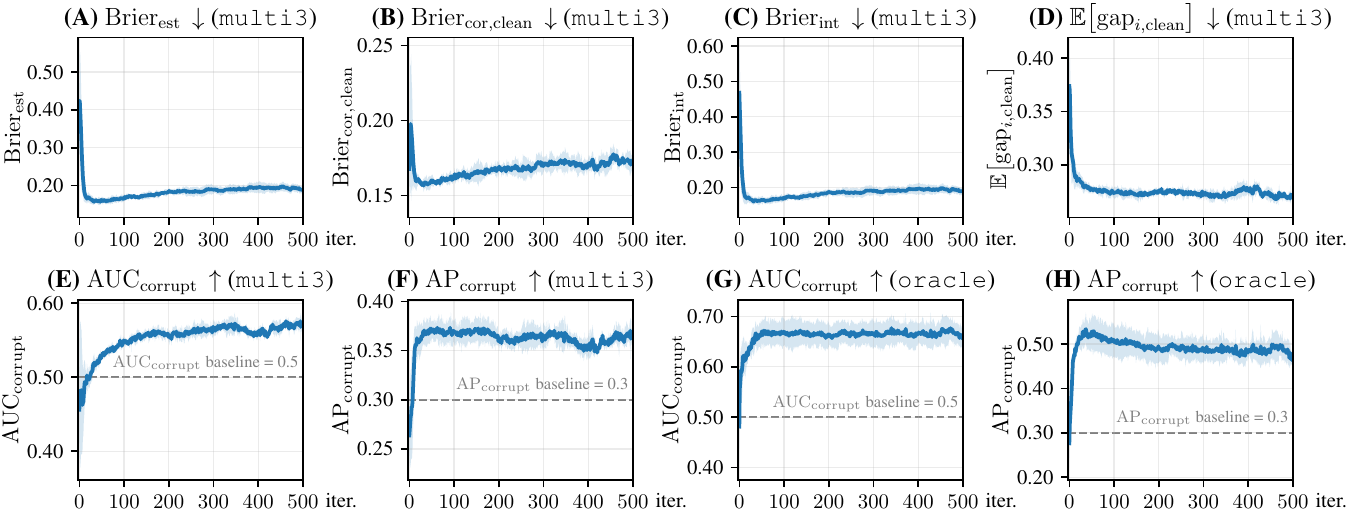}
    \caption{
    RCV1-10 held-out diagnostic evolution under {\tt multi3} with corruption rate $0.3$ (mean$\pm$std over 5 runs).
    Same panel definitions as in \cref{fig:diag_mnist}.
    }
    \label{fig:app_diag_rcv1}
\end{figure}

\paragraph{Diagnostic metrics.}
For a held-out pair $i=(a_i,b_i)$ with benchmark labels $t_{a_i}$ and $t_{b_i}$, define the external hard co-membership reference
\[
o_i^\star:=\mathbb{I}[t_{a_i}=t_{b_i}].
\]
Let $\mathcal{I}_{\mathrm{clean}}=\{i\in\mathcal{C}^{\mathrm{diag}}:c_i=0\}$ be the uncorrupted held-out subset, and define the residual discrepancy after correction as
\[
\mathrm{gap}_i
=
\left|
\operatorname{softclip}_{\xi}(\hat{y}^{\mathrm{oof}}_i)
-
y^{\mathrm{cor}}_i
\right|.
\]
For any relation score $v_i\in[0,1]$ and index set $\mathcal{I}$, we use
\[
\mathrm{Brier}(v;\mathcal{I})
=
\frac{1}{|\mathcal{I}|}
\sum_{i\in\mathcal{I}}(v_i-o_i^\star)^2.
\]
We report
$\mathrm{Brier}_{\mathrm{est}}=\mathrm{Brier}(\hat{y}^{\mathrm{oof}};\mathcal{C}^{\mathrm{diag}})$,
$\mathrm{Brier}_{\mathrm{cor,clean}}=\mathrm{Brier}(y^{\mathrm{cor}};\mathcal{I}_{\mathrm{clean}})$,
$\mathrm{Brier}_{\mathrm{int}}=\mathrm{Brier}(y^{\mathrm{int}};\mathcal{C}^{\mathrm{diag}})$, and
$\mathbb{E}[\mathrm{gap}_{i,\mathrm{clean}}]=|\mathcal{I}_{\mathrm{clean}}|^{-1}\sum_{i\in\mathcal{I}_{\mathrm{clean}}}\mathrm{gap}_i$.
For reliability-aware screening, $\mathrm{gap}_i$ is used to rank held-out constraints, and we report $\mathrm{AUC}_{\mathrm{corrupt}}$ and $\mathrm{AP}_{\mathrm{corrupt}}$ against the known corruption indicator $c_i$.
These scores measure alignment with corruption rather than pure corruption estimation, since the reliability signal may also reflect difficult clean pairs, residual epistemic variation, and imperfect estimation/correction; nevertheless, corrupted records are expected to rank higher on average.
As a reference, we also replace the {\tt multi3} expert-generated judgments by $o_i^\star$ before applying the same $3$k budget and corruption rate $0.3$, thereby removing expert epistemic uncertainty from the clean supervision.

\paragraph{Evolution of internal estimates.}
The full diagnostic curves are shown in
\cref{fig:app_diag_cifar100,fig:app_diag_cifar10,fig:app_diag_fmnist,fig:app_diag_imagenet10,fig:app_diag_mnist,fig:app_diag_reuters,fig:app_diag_stl10,fig:app_diag_rcv1}.
Panels (A--C) track the Estimator, Corrector, and Integrator relation scores through their Brier errors against the external hard co-membership reference $o_i^\star$.
Across datasets, these errors typically decrease quickly in the early stage and then become relatively stable, with some late-stage fluctuation.
This pattern is clearer on CIFAR100-20, CIFAR10, ImageNet10, MNIST, and STL10, while FMNIST and Reuters show more rebound after the initial improvement, and RCV1-10 is the most unstable case.
Because these Brier scores use $o_i^\star$ rather than the latent aleatoric relation $R_{a_ib_i}^\star$, small non-monotone changes should be read as diagnostic behavior rather than as direct evidence about latent-target recovery.
Panel (D) gives a complementary correction-side view: the clean residual discrepancy usually drops early, indicating that corrected relations become more aligned with estimator beliefs on uncorrupted held-out constraints.
Since the Estimator and Corrector co-evolve, this quantity should be interpreted together with (A--C), as evidence that the correction stage follows the improving estimator signal.

\paragraph{Screening against injected corruption.}
Panels (E,F) evaluate whether the reliability signal aligns with the injected corruption indicator.
Under {\tt multi3}, $\mathrm{AUC}_{\mathrm{corrupt}}$ and $\mathrm{AP}_{\mathrm{corrupt}}$ are generally above their chance baselines, so corrupted records tend to be ranked higher by the screening signal.
The alignment is clearer on most image datasets, especially CIFAR100-20, CIFAR10, ImageNet10, MNIST, and STL10, but weaker or noisier on Reuters and particularly RCV1-10.
This limitation reflects the heuristic nature of the screening signal: unreliability may arise not only from stochastic corruption, but also from expert-side epistemic variation, imperfect estimation/correction, and partial self-confirmation during training.
The oracle-reference curves in panels (G,H) support this interpretation, as they usually track corruption more closely than the {\tt multi3} curves when the clean supervision is generated from the hard oracle relation before corruption.

\paragraph{Summary.}
Overall, the diagnostics provide consistent qualitative evidence for the intended behavior of ECI-PP, although the curves are not uniformly monotone across datasets.
The internal estimates improve mainly in the early stage, the correction discrepancy is reduced, and the screening signal remains meaningfully associated with injected corruption.
RCV1-10 is the most challenging case, consistent with the broader experimental picture under severe class imbalance, where improvements in clustering structure are less uniformly reflected by the held-out diagnostic curves.
Early stabilization further suggests that the ECI components often reach useful internal agreement before the fixed training horizon; however, we keep this schedule here and leave principled early stopping for future work.
These diagnostics therefore provide practical held-out evidence for ECI-PP's intended behavior, while remaining proxy measurements rather than exact tests of latent aleatoric recovery or pure corruption detection.

\subsection{Sensitivity and ablation studies}
\label{app:sensitivity_ablation_results}

We further study the main ECI-PP design choices around the default configuration specified in \cref{sec:exp_settings}.
All sensitivity runs use the default single-expert supervision condition, i.e., {\tt lv0.01} expert quality, corruption rate $0.3$, and $9$k constraints.
To expose the effect of each design choice, we use random initialization without unsupervised pretraining and vary one factor while keeping the others fixed.
\cref{fig:app_sens_kfold,fig:app_sens_kappa,fig:app_sens_corr_size,fig:app_sens_lambdacor,fig:app_sens_gamma,fig:app_sens_n0,fig:app_sens_xi,fig:app_sens_lambdarec} report the corresponding test ACC, NMI, and ARI.

\paragraph{Estimator choices.}
\cref{fig:app_sens_kfold,fig:app_sens_kappa} examine the Estimator-side choices.
For the number of folds, increasing $K$ from $2$ to $20$ gives mild gains on several datasets, such as MNIST, FMNIST, CIFAR10, and ImageNet10, but the curves are mostly flat and the improvement is not proportional to the extra cross-fitting cost.
This supports using $K=5$ as a practical default: it avoids the weakest cross-fitting setting while keeping the main Estimator cost moderate.
The information-based warm-up weight $\kappa_i$ provides a small but consistent benefit.
Enabling it is usually better or comparable across datasets, with visible gains on MNIST, FMNIST, Reuters, and CIFAR10, indicating that informative constraints are useful for stabilizing the early Estimator signal.
This agrees with the main comparison in \cref{table:exp1_9k_main}, where Weighted ProbPair generally improves over ProbPair by using the same information-based weighting principle.
At the same time, the gains remain moderate, suggesting that ECI-PP benefits from this design without being overly dependent on it.

\begin{figure}[htbp]
    \centering
    \includegraphics[width=0.9\textwidth]{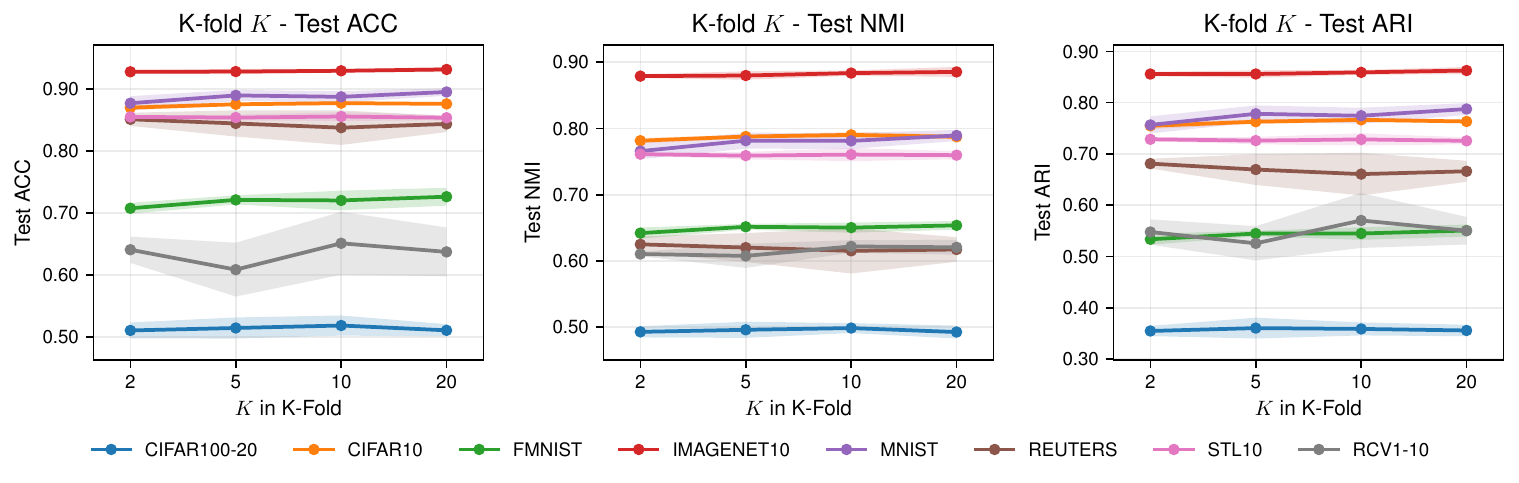}
    \caption{Sensitivity to the number of estimator folds $K$. Each panel reports test ACC, NMI, or ARI across datasets (mean$\pm$std over 5 runs).}
    \label{fig:app_sens_kfold}
\end{figure}

\begin{figure}[htbp]
    \centering
    \includegraphics[width=0.9\textwidth]{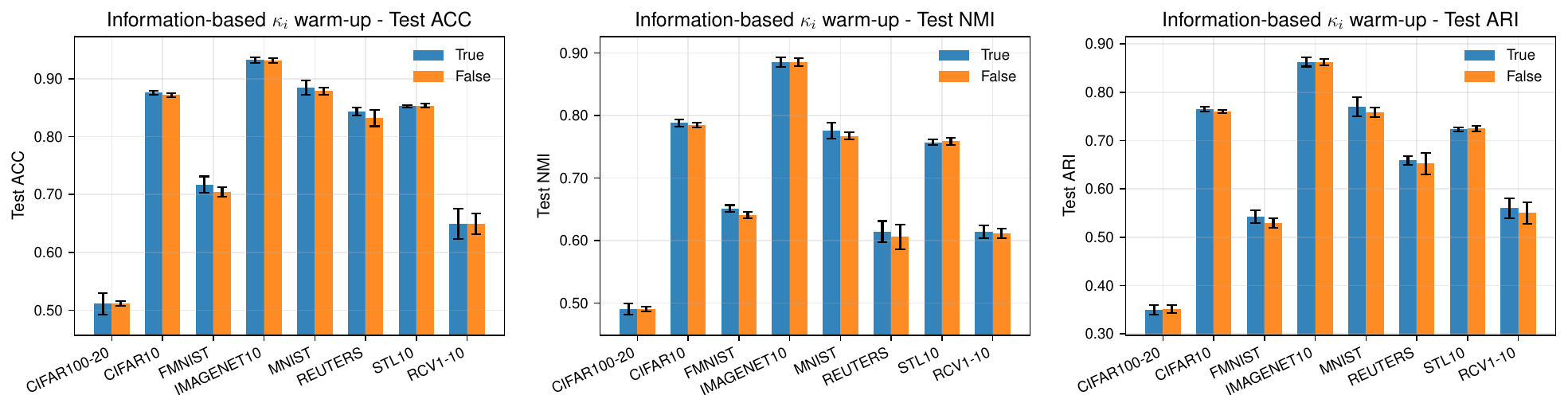}
    \caption{Ablation of the information-based warm-up weight $\kappa_i$. Each panel reports test ACC, NMI, or ARI across datasets (mean$\pm$std over 5 runs).}
    \label{fig:app_sens_kappa}
\end{figure}

\begin{figure}[htbp]
    \centering
    \includegraphics[width=0.9\textwidth]{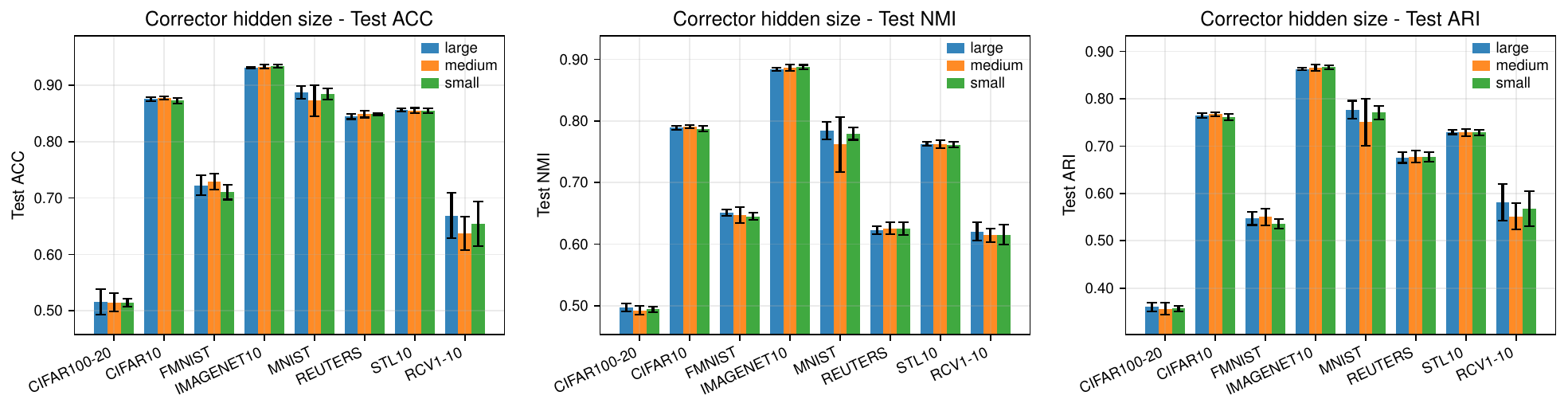}
    \caption{Sensitivity to Corrector capacity. Each panel reports test ACC, NMI, or ARI across datasets (mean$\pm$std over 5 runs).}
    \label{fig:app_sens_corr_size}
\end{figure}

\begin{figure}[htbp]
    \centering
    \includegraphics[width=0.9\textwidth]{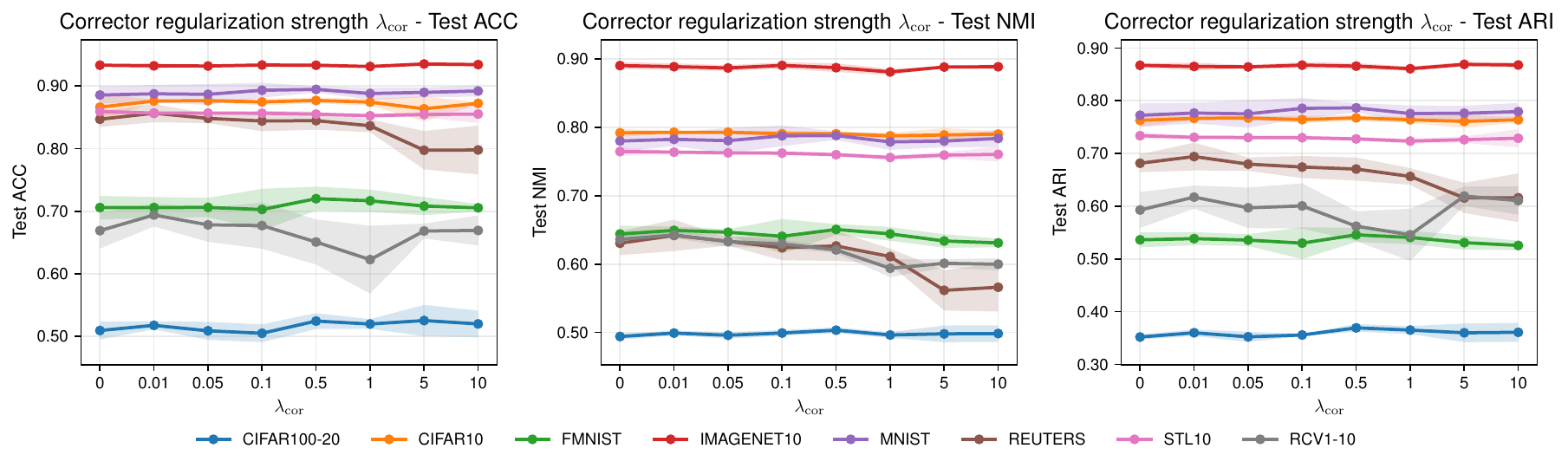}
    \caption{Sensitivity to the Corrector regularization strength $\lambda_{\mathrm{cor}}$. Each panel reports test ACC, NMI, or ARI across datasets (mean$\pm$std over 5 runs).}
    \label{fig:app_sens_lambdacor}
\end{figure}

\paragraph{Corrector choices.}
\cref{fig:app_sens_corr_size,fig:app_sens_lambdacor} study Corrector capacity and regularization.
For capacity, we compare a \textit{large} encoder-sized Corrector ($500$--$500$--$2000$), a \textit{medium} classifier-sized Corrector ($512$--$512$), and the default \textit{small} Corrector ($64$--$16$).
The three sizes perform similarly on most datasets, and this supports the default use of a small Corrector: it is cheaper and also limits the risk of fitting arbitrary pair-specific deviations rather than structured expert-conditioned correction.
The regularization strength $\lambda_{\mathrm{cor}}$ shows the same trade-off more directly.
Compared with $\lambda_{\mathrm{cor}}=0$, mild to moderate regularization can help on datasets such as FMNIST, CIFAR10, CIFAR100-20, and RCV1-10, but overly strong regularization is harmful on some datasets, most clearly Reuters when $\lambda_{\mathrm{cor}}$ reaches $5$ or $10$.
Thus, the Corrector should not be left completely unconstrained, but it also should not be forced too strongly toward zero correction.
The default $\lambda_{\mathrm{cor}}=0.5$ lies in the stable middle range.

\paragraph{Reliability screening and boundary handling.}
\cref{fig:app_sens_gamma,fig:app_sens_n0,fig:app_sens_xi} evaluate the parameters used after correction.
The screening sharpness $\gamma$ is robust over the tested range, especially for $\gamma\in[5,20]$.
Larger values are useful on some difficult datasets, notably RCV1-10 and also Reuters or MNIST in some metrics, indicating that stronger suppression of unreliable constraints can help when supervision is more fragile.
The Bayesian-confidence scale $n_0$ is also stable overall.
Larger $n_0$ improves RCV1-10 and slightly helps Reuters, suggesting that these datasets benefit from assigning more evidence to the refined supervision, whereas FMNIST does not benefit from the same increase.
Thus, $n_0$ controls a real reliability trade-off, but the default value remains within a safe region.
Finally, performance changes little as the soft-clipping sharpness $\xi$ varies: 
most datasets change little from $\xi=5$ to $50$, with only mild fluctuations on RCV1-10 and a few small metric-specific changes.
This supports soft clipping as a robust boundary-handling design for keeping corrected relations in a valid probability range without relying on a finely tuned clipping sharpness.

\paragraph{Reconstruction strength.}
\cref{fig:app_sens_lambdarec} shows a noticeable dataset-dependent trade-off for $\lambda_{\mathrm{rec}}$.
The default value $\lambda_{\mathrm{rec}}=0.02$ follows SpherePair \cite{SpherePair} and remains a reliable setting, but ECI-PP often benefits from moderately stronger reconstruction.
For example, MNIST improves steadily as $\lambda_{\mathrm{rec}}$ increases, while Reuters and RCV1-10 prefer a moderate range around $0.05$--$0.2$ before performance drops at larger values.
In contrast, FMNIST and CIFAR10 do not benefit from overly large reconstruction weights, and ImageNet10 is almost unchanged.
This suggests that reconstruction provides useful supervision-independent structure for out-of-fold estimation and final integration, but too much reconstruction can compete with clustering-oriented pairwise learning on some datasets.
When validation-based tuning is available, $\lambda_{\mathrm{rec}}$ is therefore a meaningful parameter to adjust; without such tuning, the shared default remains a fair and stable compromise.

\begin{figure}[htbp]
    \centering
    \includegraphics[width=0.9\textwidth]{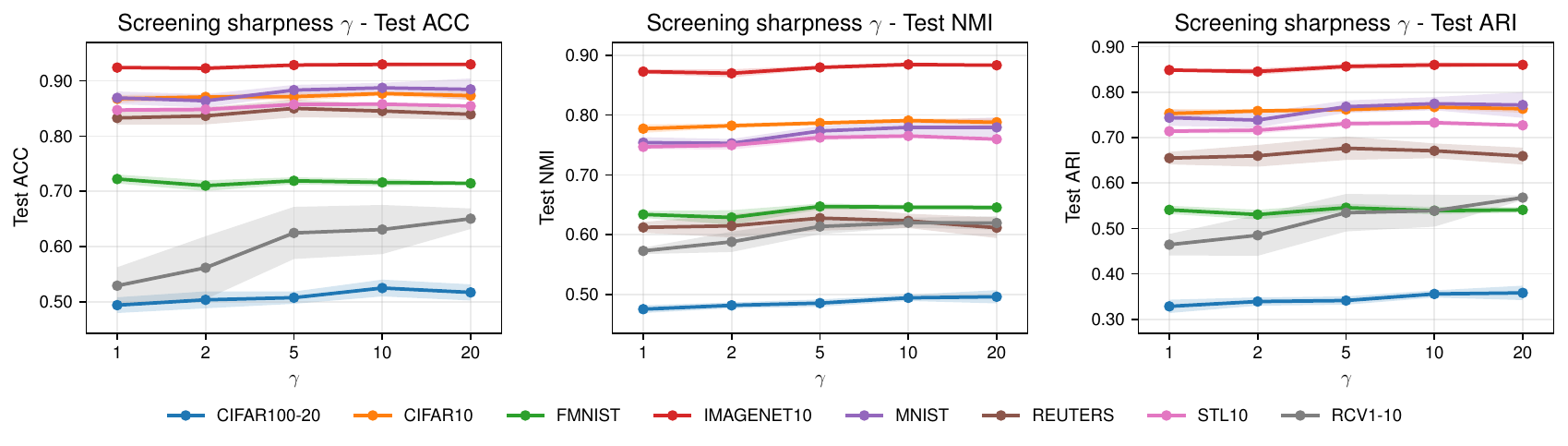}
    \caption{Sensitivity to the reliability-screening sharpness $\gamma$. Each panel reports test ACC, NMI, or ARI across datasets (mean$\pm$std over 5 runs).}
    \label{fig:app_sens_gamma}
\end{figure}

\begin{figure}[htbp]
    \centering
    \includegraphics[width=0.9\textwidth]{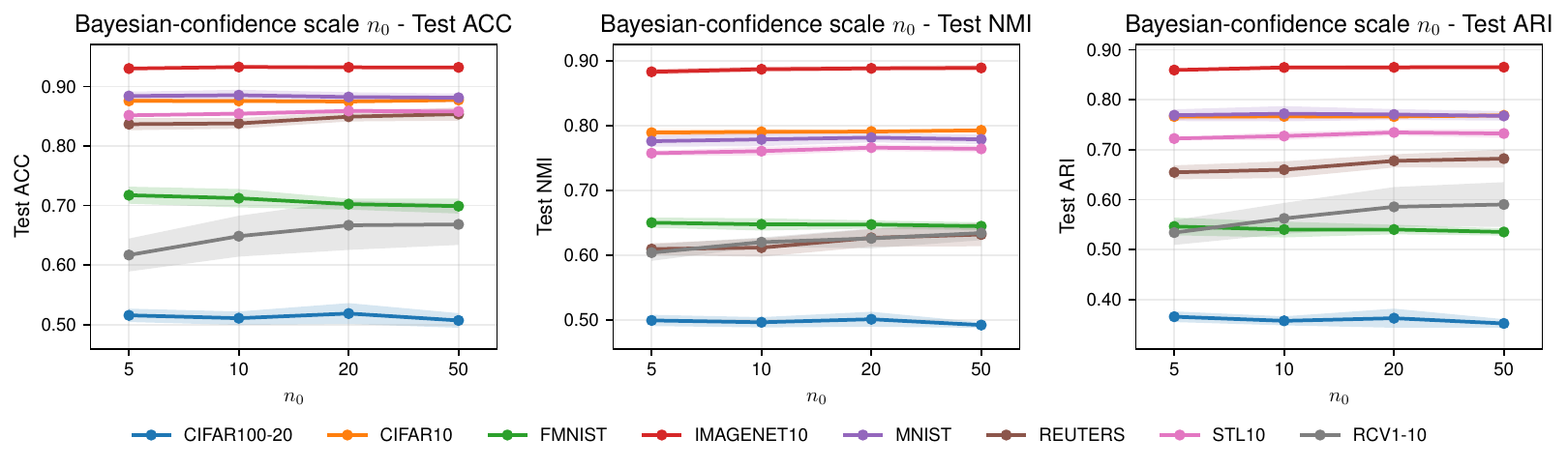}
    \caption{Sensitivity to the Bayesian-confidence scale $n_0$. Each panel reports test ACC, NMI, or ARI across datasets (mean$\pm$std over 5 runs).}
    \label{fig:app_sens_n0}
\end{figure}

\begin{figure}[htbp]
    \centering
    \includegraphics[width=0.9\textwidth]{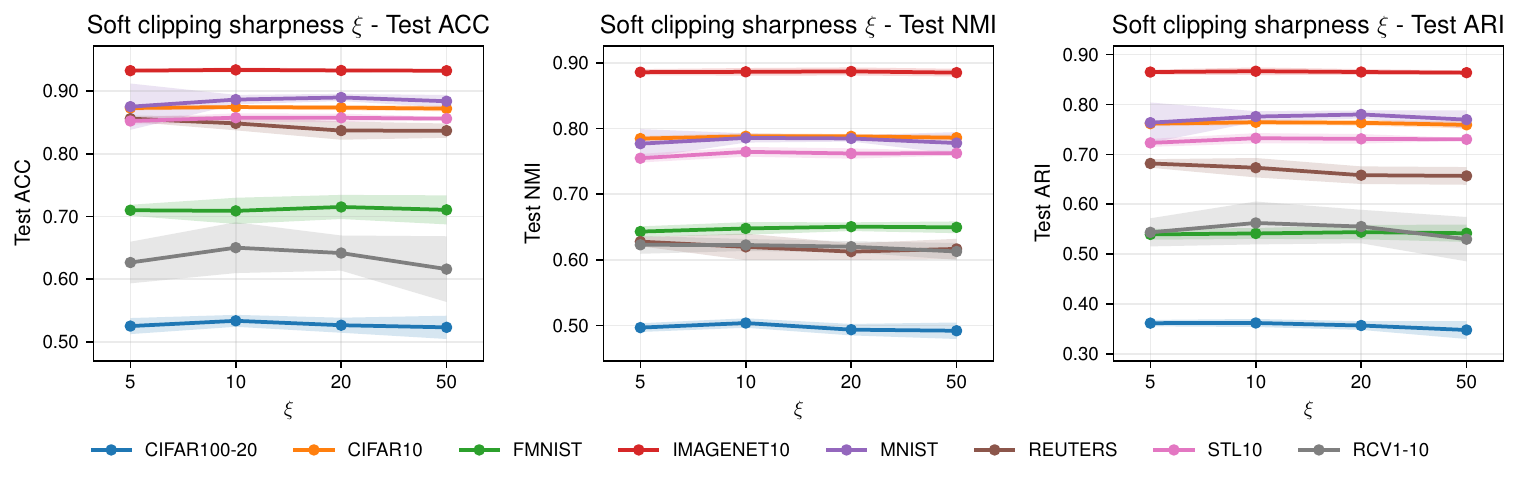}
    \caption{Sensitivity to the soft-clipping sharpness $\xi$. Each panel reports test ACC, NMI, or ARI across datasets (mean$\pm$std over 5 runs).}
    \label{fig:app_sens_xi}
\end{figure}

\begin{figure}[htbp]
    \centering
    \includegraphics[width=0.9\textwidth]{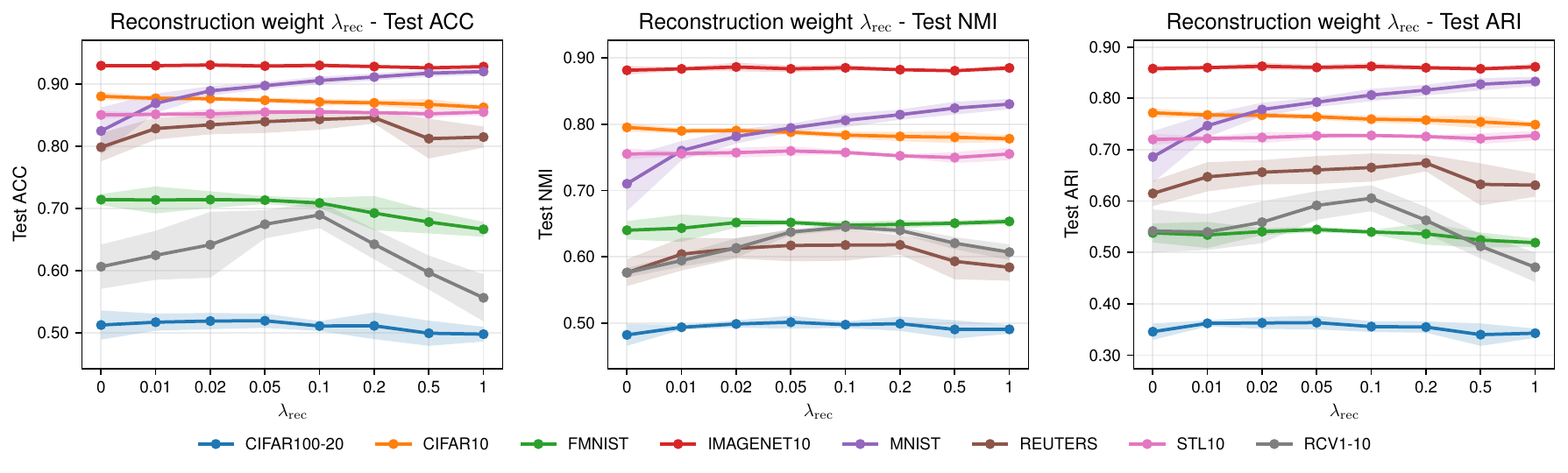}
    \caption{Sensitivity to the reconstruction weight $\lambda_{\mathrm{rec}}$. Each panel reports test ACC, NMI, or ARI across datasets (mean$\pm$std over 5 runs).}
    \label{fig:app_sens_lambdarec}
\end{figure}

Overall, the sensitivity and ablation results support the use of the default ECI-PP configuration in the main experiments.
Estimator, Corrector, screening, and soft-clipping choices are stable over broad ranges, while the more dataset-dependent reconstruction weight changes performance gradually rather than causing abrupt failure.
The results therefore indicate that ECI-PP is not hyperparameter-fragile, even though dataset-specific validation could still improve individual settings when such tuning is allowed.

\section{Learning efficiency}
\label{app:learning_efficiency}

\begin{table}[h]
\centering
\scriptsize
\setlength{\tabcolsep}{3pt}
\caption{Overall wall-clock training time for different methods on eight datasets under the {\tt multi3} multi-expert supervision setting with $9$k constraints and corruption probability $0.3$, measured on a single NVIDIA A100 40GB GPU. Methods marked with $^\ast$ require hyperparameter tuning, and their corresponding times are underlined.}
\label{tab:runtime_summary}
\begin{tabular}{lcccccccc}
\toprule
 & CIFAR100-20 & CIFAR10 & FMNIST & ImageNet10 & MNIST & Reuters & STL10 & RCV1-10 \\
\midrule
VanillaDCC & 1m12s & 1m10s & 1m20s & 0m45s & 1m27s & 0m37s & 0m41s & 1m51s \\
VolMaxDCC$^\ast$ & \underline{10m23s} & \underline{45m21s} & \underline{1h43m18s} & \underline{6m20s} & \underline{1h35m30s} & \underline{5m00s} & \underline{3m06s} & \underline{2h39m25s} \\
CIDEC & 18m27s & 19m26s & 24m49s & 5m03s & 25m06s & 5m53s & 6m03s & 1h07m26s \\
SpherePair & 18m27s & 17m58s & 22m25s & 5m09s & 22m54s & 6m05s & 5m18s & 1h01m54s \\
SpherePair (EA)$^\ast$ & \underline{3h53m10s} & \underline{3h40m47s} & \underline{4h30m35s} & \underline{1h49m25s} & \underline{4h33m48s} & \underline{2h12m12s} & \underline{1h51m31s} & \underline{10h40m49s} \\
ProbPair & 18m19s & 17m42s & 22m16s & 4m57s & 22m41s & 5m49s & 5m02s & 1h01m36s \\
Weighted ProbPair & 18m24s & 17m44s & 22m23s & 5m07s & 22m56s & 5m52s & 5m06s & 1h01m37s \\
Weighted ProbPair (EA)$^\ast$ & \underline{3h54m42s} & \underline{3h34m20s} & \underline{4h24m15s} & \underline{1h44m36s} & \underline{4h22m09s} & \underline{2h01m15s} & \underline{1h46m31s} & \underline{10h32m39s} \\
ECI-PP (Ours) & 29m48s & 29m12s & 34m01s & 15m49s & 34m39s & 17m34s & 15m59s & 1h15m41s \\
\bottomrule
\end{tabular}
\end{table}

\paragraph{Overall training time.}
Appendix~\ref{app:complexity} analyzes the computational complexity of ECI-PP; here, based on the implementation in Appendix~\ref{app:implementation}, we report empirical wall-clock measurements for the complete training pipelines in \cref{tab:runtime_summary}.
Each entry is measured under the {\tt multi3} setting with $9$k constraints and corruption probability $0.3$, and covers the full practical training procedure:
(i) for methods requiring unsupervised pretraining, the reported time includes the pretraining stage;
(ii) for methods marked with $^\ast$, it also includes three scans over the corresponding hyperparameter list and a final training run with the selected hyperparameter.
When both pretraining and hyperparameter tuning are required, the timing includes pretraining on the training split excluding validation samples before tuning, followed by pretraining on the full training split before the final run.

\paragraph{Runtime comparison.}
The runtimes in \cref{tab:runtime_summary} should be interpreted alongside the clustering results in Appendix~\ref{app:full_budget_results}.
VanillaDCC is fastest, but mainly because collapsed assignment learning triggers early stopping; its performance in \cref{table:exp1_2_no_ensemble} is far below the competitive methods.
VolMaxDCC has variable cost, becoming expensive on FMNIST, MNIST, and RCV1-10 due to validation-based hyperparameter search.
For methods with autoencoder pretraining, including CIDEC, SpherePair, ProbPair, Weighted ProbPair, and ECI-PP, overall costs remain on the same order, with pretraining as a shared pipeline component.
ECI-PP adds moderate overhead from estimator cross-fitting and correction/integration, but this overhead is modest relative to its empirical gains; in more resource-limited settings, for example when unsupervised pretraining is unavailable, Appendix~\ref{app:no_pretraining_results} shows an even larger performance advantage.
The expert-aware SpherePair and Weighted ProbPair extensions are most expensive because ensemble construction and hyperparameter tuning scale with the number of experts.
In contrast, Appendix~\ref{app:complexity} shows that ECI-PP incorporates expert identities through expert-conditioned correction without replicating the full training pipeline, so its leading cost scales mainly with the constraint budget rather than the expert count.


\end{document}